\documentclass{article}

\usepackage[preprint]{neurips_2026}
\usepackage{microtype}
\usepackage{graphicx}
\usepackage{subcaption}
\usepackage{booktabs} 
\usepackage{amsmath}
\usepackage{amssymb}
\usepackage{mathtools}
\usepackage{amsthm}
\usepackage{hyperref}
\hypersetup{
    breaklinks,
    colorlinks,
    citecolor=blue
}

\newcommand{\slabel}[1]{%
  \textcolor{orange!80!black}{\textbf{[#1]}}}
  \newcommand{\RETURN}{\STATE \textbf{return} }
\usepackage{algorithm}
\usepackage[capitalize,noabbrev]{cleveref}

\theoremstyle{plain}
\newtheorem{theorem}{Theorem}[section]
\newtheorem{proposition}{Proposition}[section]
\newtheorem{lemma}{Lemma}[section]

\theoremstyle{definition}

\newtheorem{assumption}{Assumption}[section]
\theoremstyle{remark}
\newtheorem{remark}{Remark}[section]

\usepackage[utf8]{inputenc} 
\usepackage[T1]{fontenc}    
\usepackage{hyperref}       
\usepackage{url}            
\usepackage{booktabs}       
\usepackage{amsfonts}       
\usepackage{nicefrac}       
\usepackage{microtype}      
\usepackage{xcolor}         
\usepackage{booktabs}
\usepackage{multirow}
\usepackage{algorithmic}
\usepackage[textsize=tiny]{todonotes}

\title{When Muon Optimizer Meets Adversarial Training:\\A Theoretical and Empirical Study}

\author{%
\begin{tabular}{@{}ccc@{}}
Jun Yan$^{1}$ & Weiquan Huang$^{2}$ & Jiankai Zuo$^{3}$ \\
\texttt{yanjun@ieee.org} &
\texttt{weiquanh@tongji.edu.cn} &
\texttt{zuojiankai@usts.edu.cn} \\[0.8em]
Yujian Mo$^{2}$ & Xi Fang$^{4}$ & Chengliang Wu$^{1}$ \\
\texttt{yujmo@tongji.edu.cn} &
\texttt{xifang@dp.tech} &
\texttt{clwu@shou.edu.cn} \\[0.8em]
\multicolumn{3}{c}{Zeming Wei$^{5}$} \\
\multicolumn{3}{c}{\texttt{weizeming@stu.pku.edu.cn}} \\[1.0em]
\multicolumn{3}{c}{\small $^{1}$IT College, Shanghai Ocean University, Shanghai, 201306} \\
\multicolumn{3}{c}{\small $^{2}$School of Computer Science and Technology, Tongji University, Shanghai, 201804} \\
\multicolumn{3}{c}{\small $^{3}$SEIE, Suzhou University of Science and Technology, Suzhou, 215009} \\
\multicolumn{3}{c}{\small $^{4}$DP Tech, Shanghai, 200030} \\
\multicolumn{3}{c}{\small $^{5}$School of Mathematical Sciences, Peking University, Beijing, 100871}
\end{tabular}%
}

\begin{document}

\maketitle

\begin{abstract}
Adversarial training (AT) remains one of the most reliable empirical defenses against adversarial attacks. Its robustness critically depends on how the underlying min-max objective is optimized. In practice, Stochastic Gradient Descent (SGD) optimizer remains the default optimization choice for AT, whereas adaptive optimizers often improve standard training but may yield inferior robustness. Recently, the Muon optimizer, which orthogonalizes matrix-valued updates via an approximate polar decomposition, has achieved notable success in large-scale training at a memory cost comparable to SGD. This raises a security-relevant question: \textit{can orthogonalized optimization improve AT under strong and heterogeneous threat models?} Focusing on this problem, we conduct a comprehensive theoretical and empirical study. Theoretically, we show that Muon imposes a spectral-norm stability ceiling on matrix updates, limiting uncontrolled spectral growth in the training dynamics without explicitly shrinking the learned weights. Empirically, across five architectures and three $\ell_p$ threat models ($\ell_\infty$, $\ell_1$, $\ell_2$) and their union, Muon is competitive with SGD on CNNs and substantially outperforms AdamW on both CNNs and ViTs. These results identify optimizer geometry as a security-relevant factor in adversarial training, while clarifying the empirical regimes in which orthogonalized updates are beneficial. Overall, our findings highlight optimizer design as a security-critical component of AT.
\end{abstract}

\section{Introduction}
\par Deep neural networks are increasingly deployed in security-sensitive settings, yet their predictions can be manipulated by small, human-imperceptible adversarial perturbations~\cite{szegedy2014intriguing,goodfellow2015explaining, madry2018towards,carlini2017towards}. To mitigate such potential threats to neural networks, robust training methods known as adversarial training (AT) have been proposed~\cite{madry2018towards,zhang2019theoretically,wong2020fast,croce2022adversarial}. These methods are based on robust optimization, aiming to maximize adversarial perturbations while minimizing empirical risk. 

\par The outer optimization problem in AT is often noisier and more ill-conditioned than standard empirical risk minimization, because each update is computed on adversarially perturbed inputs produced by an inner maximization procedure. In this regime, optimizer-induced geometry can have a non-negligible effect on both clean accuracy and robust accuracy. Adaptive optimizers such as Adam and AdamW rescale each coordinate of the update by an estimate of its gradient second moment. The Stochastic Gradient Descent (SGD) optimizer and its momentum variants avoid this per-coordinate rescaling, and have been observed to yield greater robustness under the high-variance gradients of AT~\cite{dabouei2022revisiting}. Empirical studies have also validated their superiority in AT~\cite{pang2021bag}. Some studies have also focused on enhancing robustness by exploring the Sharpness-Aware Minimization (SAM) method, which perturbs model weights to seek a flatter loss landscape~\cite{zhang2024duality,zhou2025sharpness}. These studies make the parameter dynamics of min-max optimization analytically tractable, allowing robustness gains to be attributed to specific update-rule properties (e.g., implicit regularization of curvature or spectral norms) rather than treated as emergent artifacts.
\par A previous study~\cite{pang2021bag} found that the effectiveness of using Adam~\cite{kingma2015adam} or AdamW~\cite{loshchilov2017decoupled} in AT is suboptimal. Recently, the Muon optimizer~\cite{jordan2024muon} has been introduced, which avoids expensive high-order projection algorithms while exploiting orthogonalized update geometry and enables a large-batch-friendly update that can be efficiently approximated by a small fixed number of Newton-Schulz iterations. This optimization method constrains the update direction (the polar factor of the momentum-accumulated gradient) to the Stiefel manifold, yielding orthogonalized, spectrally balanced parameter updates (though the weights $W_t$ themselves need not lie on the manifold)~\cite{liu2025muon}. The Muon optimizer differs from other optimizers in that it orthogonalizes matrix-valued updates. For each 2D weight tensor, the Muon optimizer transforms the momentum update into its approximate polar factor, which can be realized via Newton–Schulz iterations. The updated geometry is fundamentally different from SGD~\cite{robbins1951stochastic,bottou2012stochastic} and AdamW~\cite{loshchilov2017decoupled}. However, the impact of this optimizer on adversarial robustness remains unexplored.
\par To fill this research gap, we carry out a systematic theoretical and experimental study of the Muon optimizer in the AT paradigm, as illustrated in Figure~\ref{fig:framework}. We first establish a theoretical mechanism and explanation linked to robustness. Intuitively, adversarial vulnerability is governed by norms of input gradients and Jacobians~\cite{jakubovitz2018improving}. We show that Muon's polar update induces a spectral-norm stability ceiling at the update level. Each matrix-valued update has a controlled operator norm after orthogonalization. This property does not directly enforce small spectral norms of the learned weights. Instead, it provides a mechanism for limiting abrupt spectral growth in the optimization trajectory, offering a diagnostic explanation for the robustness behavior observed in our experiments. Empirically, we first adapt the Muon optimizer for AT, then conduct comprehensive experiments across multiple datasets and model architectures, to evaluate the robustness of four widely used optimizers: SGD, AdamW, SAM, and Muon. 
Across this grid, we observe that optimizer choice can induce large and consistent shifts in the robustness frontier: the same training recipe can yield markedly different robustness profiles depending on optimization dynamics. In particular, the robustness gains provided by the Muon optimizer are significantly greater than those of AdamW,  and Muon's performance is competitive with SGD across multiple settings. Compared with SAM, Muon should be viewed as an optimizer-geometry alternative rather than a direct replacement. It improves update geometry without introducing an additional weight-perturbation step, but it does not uniformly dominate SAM in robustness. To summarize, our key empirical findings are: \textbf{(1)} Muon constrains the update geometry: each matrix-valued update has a bounded operator norm after polar orthogonalization. This does not imply that Muon always produces smaller weight spectral norms than SGD, but it helps prevent the unstable spectral growth often observed with adaptive optimizers such as AdamW. \textbf{(2)} On small-scale datasets like CIFAR-10~\cite{krizhevsky2009learning}, Muon is on par with or superior to SGD on overparameterized CNNs~\cite{zagoruyko2016wide}. On ViTs~\cite{dosovitskiy2021image}, Muon lags behind SGD yet remains markedly more stable and/or outperforms AdamW. However, on large-scale datasets like ImageNet~\cite{deng2009imagenet}, SGD still performs better, suggesting that sensitivity to large-scale learning rates is a key bottleneck, deserving future investigations. \textbf{(3)} Muon is an efficient optimizer-geometry alternative rather than a strict replacement for SAM.
\begin{figure}[!t]
    \centering
    \includegraphics[width=0.85\hsize]{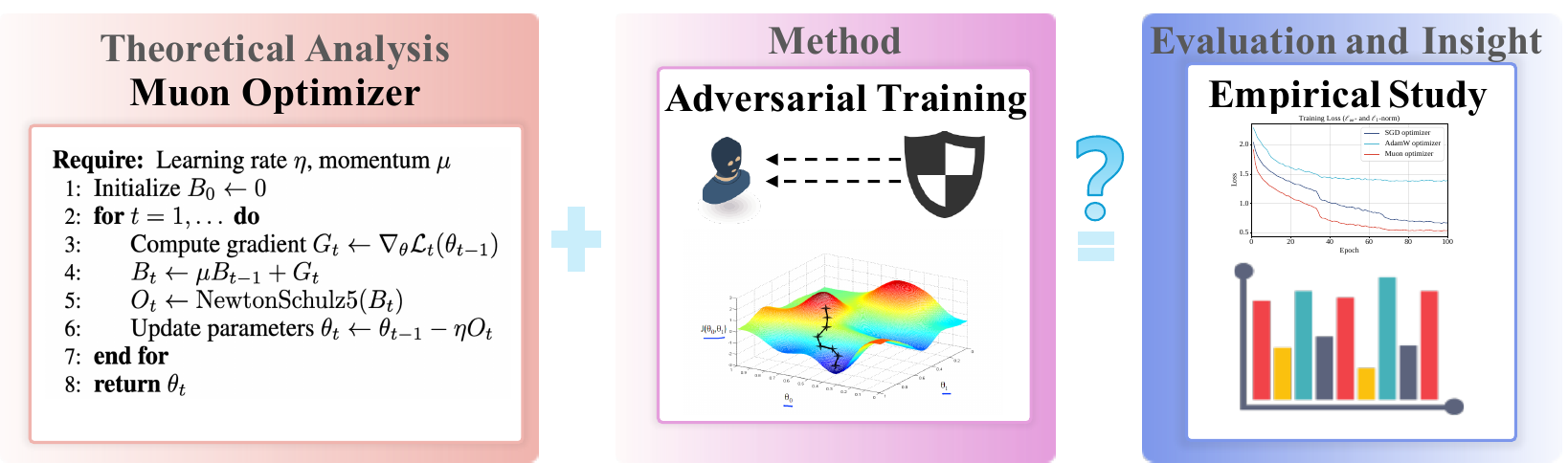}
    \caption{The framework of this study. The left part of the figure is adapted from the work~\cite{jordan2024muon}.}
    \label{fig:framework}
    \vspace{-10pt}
\end{figure}
\par Our contributions are summarized as follows:
\begin{itemize}
\item \textbf{Revisiting optimizer choice as a security-relevant factor in AT.}
We identify orthogonalized matrix-update geometry as an underexplored factor in adversarial training and study Muon as a representative optimizer beyond the conventional SGD/AdamW/SAM comparison.
\item \textbf{Mechanistic analysis of Muon's spectral update geometry.}
We analyze Muon's polar update rule and show that it imposes an operator-norm ceiling on matrix-valued updates. We further relate this update-level constraint to spectral growth and nuclear-norm descent under adversarial loss dynamics, clarifying both what Muon can and cannot guarantee.
\item \textbf{Controlled multi-norm robustness evaluation.} We conduct a controlled empirical comparison of SGD, AdamW, and Muon across multiple architectures, datasets, and threat norms. We further include representative SAM comparisons on WideResNet-34-10 (WRN-34-10)~\cite{zagoruyko2016wide} and ViT-B~\cite{dosovitskiy2020an} to clarify the trade-off between orthogonalized updates and sharpness-aware perturbation-based optimization. Our results show that Muon often improves over AdamW and is competitive with SGD in several small-scale settings, while also exposing its scaling limitations on larger datasets.
\item \textbf{Threat model and scope}. We consider standard test-time adversarial robustness under white-box $\ell_p$-bounded attacks. The attacker has full access to the deployed classifier and its gradients, but the proposed method changes only the training-time optimizer. At inference time, the neural networks do not require gradient masking to increase robustness. We evaluate black box migration attacks to adapt to real-world security risks.
\end{itemize}

\section{Theoretical Analysis}
\subsection{Preliminaries}
\label{sec:problem}
\textbf{Adversarial Training (AT).} Let $f_\theta: \mathcal{X} \to \mathbb{R}^K$
denote a classifier parameterized by $\theta$, with logits $g_\theta(x)$ and a classification margin $m_\theta(x, y) := g_\theta(x)_y - \max_{y' \ne y} g_\theta(x)_{y'}$.
AT solves the min-max problem
\begin{equation}
\min_\theta \mathbb{E}_{(x, y) \sim \mathcal{D}}
\left[ \max_{\delta \in \Delta} \ell(f_\theta(x + \delta), y) \right],
\label{eq:at-objective}
\end{equation}
where $\ell(\cdot, \cdot)$ is the classification loss.

\textbf{Threat model.} We consider the union of multiple $\ell_p$ threat models.
Given a set of norms $\mathcal{P} \subseteq \{1, 2, \infty\}$ with per-norm
radii $\{\varepsilon_p\}_{p \in \mathcal{P}}$, the perturbation set is
\begin{equation}
\Delta_\mathrm{union}
:= \bigcup_{p \in \mathcal{P}} \{\delta \in \mathbb{R}^d : \|\delta\|_p \le \varepsilon_p\}.
\end{equation}

\textbf{Muon update rule.} For each 2D weight block $W_t\in\mathbb{R}^{m\times n}$,
let $G_t = \nabla_W \mathcal{L}_{\mathrm{adv}}(\theta_{t-1})$ denote the
AT gradient and $M_t = \mu M_{t-1} + G_t$ the momentum-accumulated gradient.
Muon applies the orthogonalized update
\begin{equation}
U_t=\operatorname{polar}(M_t),\qquad
W_{t+1}=(1-\eta\lambda)W_t-\eta U_t,
\label{eq:muon-update}
\end{equation}
where $\eta > 0$ is the learning rate, $\lambda \ge 0$ is weight decay, and
$\mathrm{polar}(\cdot)$ is approximated by a fixed number of Newton--Schulz
iterations (see Algorithm~\ref{alg:muon-step}).

\textbf{Notation.} We use $\|\cdot\|_2$ for the spectral (operator-2) norm,
$\|\cdot\|_F$ for the Frobenius norm, $\|\cdot\|_*$ for the nuclear norm,
and $\sigma_i(\cdot)$ for singular values. $J_f(x)$ denotes the Jacobian of
$f$ at $x$. Per-layer weights are $W_\ell$ with spectral norm upper bound
$B_\ell$ (to be established in Theorem~\ref{thm:upper_bound_spectral_norm}). We analyze how the update rule
of Eq. \eqref{eq:muon-update} governs $\|W_\ell\|_2$, the Lipschitz constant of
$m_\theta$, and the per-step descent on the AT objective
defined in Eq. \eqref{eq:at-objective}.
\subsection{Spectral-Norm Stability Ceiling}
\label{subsec:spectral_stability_ceiling}
\par This section proves that Muon's polar update admits a spectral-norm stability ceiling (Theorem~\ref{thm:upper_bound_spectral_norm}). Theoretically, we show that Muon's polar update imposes a spectral-norm stability ceiling on each matrix update, preventing the unbounded spectral growth seen with adaptive optimizers. It is a structural property rather than a tight robustness certificate.
\begin{proposition}[Robustness governed by the dual norm of input gradients]
\label{prop:dual_gradients}
Under a first-order Taylor expansion of $m_\theta$,
\[
m_\theta(x + \delta, y) \approx m_\theta(x, y) + \langle \nabla_x m_\theta(x, y), \delta\rangle.
\]
Hence by H\"{o}lder's inequality with $1/p + 1/q = 1$,
\[
\max_{\|\delta\|_p \le \varepsilon} |m_\theta(x + \delta, y) - m_\theta(x, y)|
= \varepsilon\, \|\nabla_x m_\theta(x, y)\|_q,
\]
so robustness at $x$ is approximate to $m_\theta(x, y) > \varepsilon\, \|\nabla_x m_\theta(x, y)\|_q$.
\end{proposition}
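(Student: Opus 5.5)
The plan is to work entirely within the linearized model that the proposition posits. I would treat the first-order expansion as exact, i.e.\ replace $m_\theta(x+\delta,y)$ by its affine surrogate $\tilde m(\delta) := m_\theta(x,y) + \langle g, \delta\rangle$ with $g := \nabla_x m_\theta(x,y)$. This assumes $m_\theta$ is differentiable at $x$, which holds almost everywhere for ReLU networks, where the max over $y'$ is attained uniquely. Under this surrogate, the deviation is $\tilde m(\delta)-m_\theta(x,y) = \langle g,\delta\rangle$, so the claimed identity reduces to
\[
\max_{\|\delta\|_p\le\varepsilon} |\langle g,\delta\rangle| = \varepsilon\|g\|_q ,
\]
which is the statement that $\|\cdot\|_q$ is the dual norm of $\|\cdot\|_p$.

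For the upper bound, Hölder's inequality gives $|\langle g,\delta\rangle| \le \|g\|_q\|\delta\|_p \le \varepsilon\|g\|_q$. For the matching lower bound, I would exhibit the maximizer explicitly in each of the three cases $p\in\{1,2,\infty\}$ used in the threat model:
\begin{itemize}
\item For $p=\infty$, $q=1$, take $\delta=\varepsilon\,\mathrm{sign}(g)$.
\item For $p=2$, take $\delta=\varepsilon g/\|g\|_2$, with $\delta=0$ if $g=0$.
\item For $p=1$, $q=\infty$, take $\delta=\varepsilon\,\mathrm{sign}(g_{i^*})e_{i^*}$, where $i^*\in\arg\max_i|g_i|$.
\end{itemize}
More generally, for $1<p<\infty$ one can take $\delta_i \propto \mathrm{sign}(g_i)|g_i|^{q-1}$, normalized to $\ell_p$-norm $\varepsilon$. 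In every case the maximizer attains $\langle g,\delta\rangle=\varepsilon\|g\|_q$, and the absolute value is handled by symmetry of the ball ($\delta\mapsto-\delta$).

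For the robustness conclusion, correct classification under attack means $m_\theta(x+\delta,y)>0$ for all admissible $\delta$. In the linear model, the worst case is $\min_\delta \tilde m(\delta) = m_\theta(x,y)-\varepsilon\|g\|_q$. This is positive exactly when $m_\theta(x,y)>\varepsilon\|g\|_q$, which gives the stated criterion. For the union threat model, the same argument applied per norm yields the condition $m_\theta(x,y)>\max_{p\in\mathcal P}\varepsilon_p\|g\|_{q(p)}$.

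The main obstacle is making precise the sense in which ``$\approx$'' holds, since the equality in the statement is only exact for the surrogate. I would state it as an exact identity for $\tilde m$ and then control the remainder. If $\nabla_x m_\theta$ is $L$-Lipschitz in the appropriate norm, the Taylor remainder is $O(L\varepsilon^2)$, so the true worst-case deviation equals $\varepsilon\|g\|_q + O(\varepsilon^2)$. The robustness condition then holds up to a second-order correction. For piecewise-linear networks I would instead note that the expansion is exact within the activation region containing $x$, provided the $\varepsilon$-ball stays inside it.
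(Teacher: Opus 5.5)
Your argument is correct and follows the same route as the paper, which gives no separate proof and justifies the proposition only inline: the first-order Taylor surrogate plus H\"older's inequality, with the caveat in its remark that the statement is exact for linear models and within a locally linear region of a ReLU network. Your explicit dual-norm maximizers supply the attainment half of the stated equality, which H\"older alone does not give. Your remainder discussion also makes precise the sense of ``$\approx$'' that the paper leaves informal.
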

\par Appendix~\ref{sec:remark} provides the remark of Proposition~\ref{prop:dual_gradients}.
\begin{proposition}[Bounding the input Jacobian with layer spectral norms]
\label{prop:bound_jacobian}
For a feedforward network $f(x)=W_{L} \phi\left(\cdots \phi\left(W_{1} x\right)\right)$ with $L$ layers, if the activations $\phi$ are 1-Lipschitz (e.g., ReLU, leaky-ReLU), then
\begin{equation}
\label{eq:jacobian}
\left\|J_{f}(x)\right\|_{2} \leq \prod_{\ell=1}^{L}\left\|W_{\ell}\right\|_{2}.
\end{equation}
This gives $\left\|\nabla_{x} g_\theta(x)\right\|_{2} \leq C \cdot \prod_{\ell}\left\|W_{\ell}\right\|_{2}$, where $C$ absorbs activation Lipschitz constants and output projections.
\end{proposition}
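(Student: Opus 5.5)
The plan is the standard chain-rule plus submultiplicativity argument. Write the network as a composition $f = A_L \circ \Phi_{L-1} \circ A_{L-1} \circ \cdots \circ \Phi_1 \circ A_1$, where $A_\ell(z) = W_\ell z$ and $\Phi_\ell$ applies $\phi$ coordinatewise. At any point $x$ where every activation is differentiable, the chain rule gives
\[
J_f(x) = W_L\, D_{L-1}(x)\, W_{L-1} \cdots D_1(x)\, W_1 ,
\]
with $D_\ell(x) = \operatorname{diag}\bigl(\phi'(z_\ell(x))\bigr)$ and $z_\ell(x)$ the pre-activation at layer $\ell$. Because $\phi$ is 1-Lipschitz, every entry of $D_\ell$ has absolute value at most $1$. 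Since $D_\ell$ is diagonal, $\|D_\ell\|_2 = \max_i |\phi'(z_{\ell,i})| \le 1$.

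Next, apply submultiplicativity of the operator norm to the product:
\[
\|J_f(x)\|_2 \le \prod_{\ell=1}^{L} \|W_\ell\|_2 \prod_{\ell=1}^{L-1} \|D_\ell(x)\|_2 \le \prod_{\ell=1}^{L} \|W_\ell\|_2 .
\]
This is \eqref{eq:jacobian}.

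The main obstacle is nondifferentiability, e.g.\ ReLU at $0$, where $J_f$ is not classically defined. I would handle it in one of two ways.
\begin{itemize}
\item \textbf{Lipschitz route.} Bound the Lipschitz constant directly. The maps $A_\ell$ are $\|W_\ell\|_2$-Lipschitz in $\ell_2$. The maps $\Phi_\ell$ are 1-Lipschitz in $\ell_2$, since $|\phi(a_i)-\phi(b_i)|\le|a_i-b_i|$ holds coordinatewise. Composing gives $\operatorname{Lip}(f) \le \prod_\ell \|W_\ell\|_2$. At any point where $f$ is differentiable, which by Rademacher is almost everywhere, $\|J_f(x)\|_2 \le \operatorname{Lip}(f)$.
\item \textbf{Generalized-Jacobian route.} Interpret $J_f$ via any Clarke generalized Jacobian or any choice of subgradient of $\phi$ in $[-1,1]$. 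The diagonal bound then still holds, and the chain-rule argument goes through unchanged.
\end{itemize}
Either way, the bound holds wherever the Jacobian is meaningful. Biases change nothing, because they do not enter the Jacobian.

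For the consequence about $g_\theta$, write the logits as $g_\theta = P \circ \psi \circ f$. Here $P$ is an output projection, or the identity, and $\psi$ collects any remaining fixed Lipschitz components, such as a final activation, normalization with fixed statistics, or pooling. The chain rule then gives
\[
\|\nabla_x g_\theta(x)\|_2 \le \|P\|_2\, \operatorname{Lip}(\psi)\, \|J_f(x)\|_2 .
\]
Setting $C = \|P\|_2 \operatorname{Lip}(\psi)$ yields $\|\nabla_x g_\theta(x)\|_2 \le C \prod_\ell \|W_\ell\|_2$.

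For a single logit coordinate, $\nabla_x g_\theta(x)_k = J^\top e_k$, so its norm is at most $\|J\|_2$. The same bound therefore applies coordinatewise. It also applies to the margin $m_\theta$ with an extra factor of $2$ absorbed into $C$, because $\nabla_x m_\theta$ is the difference of two logit gradients.

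If the activations are $c$-Lipschitz rather than 1-Lipschitz, the same argument gives an extra factor $c^{L-1}$, which is also absorbed into $C$.
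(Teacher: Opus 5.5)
Your proof is correct, and it is the standard chain-rule-plus-submultiplicativity argument the paper takes for granted: the paper gives no proof of this proposition and only invokes it in Step~1 of the proof of Theorem~\ref{thm:union_robustness}, with an $L_\phi^{L-1}$ factor for non-1-Lipschitz activations, just as in your last remark. Two small refinements: your Clarke-Jacobian variant does not follow from the chain rule ``unchanged'' (that rule only gives an inclusion), but it follows at once from your Lipschitz route, since every Clarke Jacobian element is a convex combination of limits of classical Jacobians, each of norm at most $\prod_\ell\|W_\ell\|_2$; and for the margin, $\|J^\top(e_y-e_j)\|_2\le\|e_y-e_j\|_2\,\|J\|_2=\sqrt{2}\,\|J\|_2$ recovers the paper's constant $C_{\mathrm{marg}}=\sqrt{2}$ rather than your triangle-inequality factor $2$.
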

\par The Muon optimizer uses the Newton–Schulz approximation for polar decomposition~\cite{jordan2024muon}, whose update rule is defined in Eq.~(\ref{eq:muon-update}). The polar factor is an approximately orthogonal matrix with a spectral norm of 1. The following lemma extends prior analysis~\cite{jordan2024muon,higham2008functions}.
\begin{lemma}
\label{lemma:polar_factor}
Let $A\in\mathbb{R}^{m\times n}$ be full rank with compact SVD
$A=P\Sigma V^\top$, and define $\operatorname{Ortho}(A)=PV^\top$.
Then $\|\operatorname{Ortho}(A)\|_2=1$. Moreover, if $m\ge n$,
\begin{equation}
\operatorname{Ortho}(A)^\top\operatorname{Ortho}(A)=I_n,\qquad
A=\operatorname{Ortho}(A)(A^\top A)^{1/2};
\end{equation}
if $m<n$, we have $\operatorname{Ortho}(A)\operatorname{Ortho}(A)^\top=I_m,\
A=(AA^\top)^{1/2}\operatorname{Ortho}(A)$. Equivalently, $\operatorname{Ortho}(A)$ is the Frobenius projection of $A$
onto the corresponding rectangular Stiefel set.
\end{lemma}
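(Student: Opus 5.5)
The plan is to work directly from the compact SVD. Write $r=\min(m,n)$. Full rank means $\operatorname{rank}(A)=r$, so in $A=P\Sigma V^\top$ we have $P\in\mathbb{R}^{m\times r}$ and $V\in\mathbb{R}^{n\times r}$ with orthonormal columns, and $\Sigma=\operatorname{diag}(\sigma_1,\dots,\sigma_r)$ with every $\sigma_i>0$.

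\textbf{Spectral norm.} The expression $PV^\top$ is itself an SVD: the left factor $P$ and right factor $V$ have orthonormal columns, and the middle factor is $I_r$. So all $r$ nonzero singular values of $\operatorname{Ortho}(A)$ equal $1$, and $\|\operatorname{Ortho}(A)\|_2=1$. As a check, $\|PV^\top x\|_2=\|V^\top x\|_2\le\|x\|_2$, with equality for any $x$ in the column space of $V$.

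\textbf{Case $m\ge n$.} Here $r=n$, so $V$ is square orthogonal. Then $\operatorname{Ortho}(A)^\top\operatorname{Ortho}(A)=VP^\top PV^\top=VV^\top=I_n$. Next, $A^\top A=V\Sigma^2V^\top$ is positive semidefinite, and its unique PSD square root is $V\Sigma V^\top$. It follows that $\operatorname{Ortho}(A)(A^\top A)^{1/2}=PV^\top V\Sigma V^\top=P\Sigma V^\top=A$.

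\textbf{Case $m<n$.} Here $r=m$ and $P$ is square orthogonal. The same computation gives $\operatorname{Ortho}(A)\operatorname{Ortho}(A)^\top=PP^\top=I_m$. Using $(AA^\top)^{1/2}=P\Sigma P^\top$, we get $(AA^\top)^{1/2}\operatorname{Ortho}(A)=P\Sigma P^\top PV^\top=A$. The only point needing care is uniqueness of the PSD square root, which is the standard spectral-theorem fact.

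\textbf{Projection characterization.} This is the step I expect to carry the real content. Let $\mathcal{S}$ be the rectangular Stiefel set: matrices $Q$ with $Q^\top Q=I_n$ if $m\ge n$, or $QQ^\top=I_m$ if $m<n$. For every $Q\in\mathcal{S}$ we have $\|Q\|_F^2=r$, so
\[
\|A-Q\|_F^2=\|A\|_F^2+r-2\operatorname{tr}(Q^\top A).
\]
Minimizing the distance is therefore the same as maximizing $\operatorname{tr}(Q^\top A)=\operatorname{tr}(\Sigma\,P^\top Q V)$.

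Set $Z=P^\top QV\in\mathbb{R}^{r\times r}$. Then $\|Z\|_2\le\|P\|_2\|Q\|_2\|V\|_2=1$, so each diagonal entry satisfies $|Z_{ii}|\le 1$. This gives the von Neumann-type bound
\[
\operatorname{tr}(\Sigma Z)=\sum_i\sigma_i Z_{ii}\le\sum_i\sigma_i=\|A\|_*.
\]
The choice $Q=PV^\top$ gives $Z=I_r$ and attains the bound.

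For uniqueness, suppose equality holds. Since every $\sigma_i>0$, we must have $Z_{ii}=1$ for all $i$. A contraction with all unit diagonal entries satisfies $\|Ze_i\|_2\le 1$ while its $i$th entry is already $1$, so every off-diagonal entry in that column vanishes and $Z=I_r$.

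It remains to recover $Q$ from $Z=I_r$, and here the square factor matters. In the case $m\ge n$, $V$ is square, so $P^\top Q=V^\top$. Let $q=Qx$ for a unit vector $x$; then $\|q\|_2=1$ and $\|P^\top q\|_2=\|V^\top x\|_2=1$, so $q$ lies in the range of $P$. Hence $PP^\top Q=Q$, and therefore $Q=PV^\top$. The case $m<n$ is symmetric after transposing. This is exactly where full rank is needed: it makes all $\sigma_i>0$ and so makes the maximizer unique.
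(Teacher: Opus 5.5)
Your proposal is correct and follows the paper's proof essentially step for step. You use the same SVD computations for the Stiefel identities and polar factorizations, and the same reduction of the projection claim to maximizing $\operatorname{tr}(Q^\top A)$ over the Stiefel set, with the maximum $\|A\|_*$ attained at $PV^\top$. There are two differences, both in your favour. First, you prove the needed trace bound directly via the contraction $Z=P^\top QV$ instead of citing von Neumann's trace inequality. Second, you add a uniqueness argument using $\sigma_i>0$, which the paper omits: it only shows that $PV^\top$ is a minimizer.
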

\par The proof of Lemma~\ref{lemma:polar_factor} is given in Appendix~\ref{subsec:proof:lemma:polar_factor}.
\begin{theorem}
\label{thm:upper_bound_spectral_norm}
Assume there is a quantifiable orthogonalization error $\varepsilon_{t} \geq 0$, such that 
\begin{equation}
\label{eq:muon_trajectory_assumption}
\left\|\widetilde{U}_{t}\right\|_{2} \leq 1+\varepsilon_{t}.
\end{equation}
The ideal Muon optimizer (exact polar) corresponds to $\varepsilon_{t}=0$, while the finite-step Newton--Schulz approximate polar factor corresponds to a small value $\varepsilon_{t}>0$.
\par Under the assumption of Eq.~(\ref{eq:muon_trajectory_assumption}) and $0\le \eta\lambda\le 1$, for any $t \geq 0$, the spectral norm satisfies the following recursive upper bound
\begin{equation}
\label{eq:upper_bound1}
\left\|W_{t+1}\right\|_{2} \leq(1-\eta \lambda)\left\|W_{t}\right\|_{2}+\eta\left(1+\varepsilon_{t}\right).
\end{equation}
Furthermore, if $\eta \lambda \in(0,1)$, the following explicit trajectory upper bound holds
\begin{equation}
\label{eq:trajectory_upper_bound}
\begin{aligned}
\left\|W_{t}\right\|_{2}  \leq(1-\eta \lambda)^{t}\left\|W_{0}\right\|_{2} +\frac{1-(1-\eta \lambda)^{t}}{\lambda} +\eta \sum_{k=0}^{t-1}(1-\eta \lambda)^{t-1-k} \varepsilon_{k}.
\end{aligned}
\end{equation}
Specifically, if $\varepsilon_{t} \leq \bar{\varepsilon}$ for all $t$, then, it satisfies
\begin{equation}
\label{eq:eq:trajectory_inequality}
\begin{aligned}
\left\|W_{t}\right\|_{2}  \leq(1-\eta \lambda)^{t}\left\|W_{0}\right\|_{2} +\frac{1+\bar{\varepsilon}}{\lambda}\left(1-(1-\eta \lambda)^{t}\right)  \leq \max \left\{\left\|W_{0}\right\|_{2}, \frac{1+\bar{\varepsilon}}{\lambda}\right\}.
\end{aligned}
\end{equation}
\end{theorem}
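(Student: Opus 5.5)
The plan is to start from the update rule in Eq.~\eqref{eq:muon-update}, with the exact polar factor replaced by the approximate factor $\widetilde{U}_t$, so that $W_{t+1}=(1-\eta\lambda)W_t-\eta\widetilde{U}_t$. Applying the triangle inequality and absolute homogeneity of the spectral norm gives
\[
\|W_{t+1}\|_2\le |1-\eta\lambda|\,\|W_t\|_2+\eta\|\widetilde{U}_t\|_2 .
\]
Since $0\le\eta\lambda\le 1$, we have $|1-\eta\lambda|=1-\eta\lambda$. Inserting the assumption of Eq.~\eqref{eq:muon_trajectory_assumption} then yields the one-step bound of Eq.~\eqref{eq:upper_bound1}. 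For the exact polar factor ($\varepsilon_t=0$), Lemma~\ref{lemma:polar_factor} supplies $\|\operatorname{Ortho}(M_t)\|_2=1$, so the ideal case is covered directly.

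For the explicit trajectory bound, write $a_t:=\|W_t\|_2$ and $\rho:=1-\eta\lambda\in(0,1)$. The recursion $a_{t+1}\le\rho a_t+\eta(1+\varepsilon_t)$ is a linear inequality with nonnegative coefficients, so it can be unrolled by induction on $t$. The claim to prove is
\[
a_t\le \rho^t a_0+\eta\sum_{k=0}^{t-1}\rho^{t-1-k}(1+\varepsilon_k).
\]
The base case $t=0$ is trivial, with an empty sum. The inductive step multiplies the hypothesis by $\rho\ge 0$, which preserves the inequality, and then adds $\eta(1+\varepsilon_t)$.

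Next, split the sum into its constant part and its error part. The constant part is a geometric sum:
\[
\eta\sum_{k=0}^{t-1}\rho^{t-1-k}=\eta\,\frac{1-\rho^t}{1-\rho}=\frac{1-\rho^t}{\lambda},
\]
using $1-\rho=\eta\lambda>0$. This gives Eq.~\eqref{eq:trajectory_upper_bound}.

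For the uniform case $\varepsilon_k\le\bar\varepsilon$, bound the error sum by $\bar\varepsilon(1-\rho^t)/\lambda$ using the same geometric identity. This produces the middle expression $\rho^t a_0+\frac{1+\bar\varepsilon}{\lambda}(1-\rho^t)$.

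The final step is to observe that this middle expression is a convex combination of $a_0$ and $(1+\bar\varepsilon)/\lambda$, with weights $\rho^t$ and $1-\rho^t$, both in $[0,1]$. A convex combination is at most the maximum of the two values, which gives Eq.~\eqref{eq:eq:trajectory_inequality}.

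None of these steps is genuinely hard. The only points needing care are the sign conditions: $1-\eta\lambda\ge 0$ is what lets us drop the absolute value and multiply inequalities in the induction, and strict positivity of $\eta\lambda$ is what allows division by $1-\rho$ in the geometric sum. One should also note that the result bounds only the spectral norm of $W_t$ through the update. It says nothing about $M_t$, so no property of the momentum or of the adversarial gradient $G_t$ is needed.
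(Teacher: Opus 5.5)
Your proposal is correct and follows essentially the same route as the paper's proof: the triangle inequality with $1-\eta\lambda\ge 0$ for the one-step bound, unrolling the recursion, and evaluating the geometric series $\eta\sum_{j=0}^{t-1}(1-\eta\lambda)^j=(1-(1-\eta\lambda)^t)/\lambda$. Your convex-combination argument for the final bound $\max\{\|W_0\|_2,(1+\bar\varepsilon)/\lambda\}$ makes explicit a step the paper states without justification.
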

\par Appendix~\ref{sec:remark} provides the remark of Theorem~\ref{thm:upper_bound_spectral_norm}. The proof of Theorem~\ref{thm:upper_bound_spectral_norm} is given in Appendix~\ref{subsec:proof:thm:upper_bound_spectral_norm}.
\begin{theorem}
\label{thm:union_robustness}
Fix $\mathcal{P}\subseteq\{1,2,\infty\}$ with radii
$\{\varepsilon_p\}_{p\in\mathcal{P}}$ and let
$\Delta_{\mathrm{union}}=\bigcup_{p\in\mathcal{P}}
\{\delta\in\mathbb{R}^d:\|\delta\|_p\le\varepsilon_p\}$.
Suppose every layer satisfies $\|W_\ell\|_2\le B_\ell$
(as guaranteed by Theorem~\ref{thm:upper_bound_spectral_norm} with weight decay $\lambda>0$),
and define
\begin{equation}
\label{eq:union_robustness_definition}
\hat{L}_p(\theta):=c_p C_{\rm marg}\prod_{\ell=1}^L B_\ell,
\quad
C_{\rm marg}:=\max_{j\ne y}\|e_y-e_j\|_2=\sqrt{2}.
\end{equation}
where $c_p$ is the norm-equivalence constant satisfying $\|\delta\|_{2} \leq c_{p}\|\delta\|_{p}$ for all $\delta \in \mathbb{R}^{d}$
(concretely, $c_{\infty}=\sqrt{d},\ c_{2}=1, \text { and } c_{1}=1$, since $\|\delta\|_{2} \leq \sqrt{d}\|\delta\|_{\infty}$ and $\|\delta\|_{2} \leq \|\delta\|_{1}$). Furthermore, $C_{\rm marg}$ absorbs the output-layer projection from logits to margin.
If
\begin{equation}
  m_\theta(x,y)
  \;>\;\max_{p\in\mathcal{P}}\,\hat{L}_p(\theta)\,\varepsilon_p,
\end{equation}
then $m_\theta(x+\delta,y)>0$ for all $\delta\in\Delta_{\mathrm{union}}$.
\end{theorem}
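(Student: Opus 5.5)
The proof is a global Lipschitz argument: bound how much the margin can change under any perturbation in the union, using the layer spectral norms. Fix $(x,y)$ and take any $\delta\in\Delta_{\mathrm{union}}$. By definition of the union, there is some $p\in\mathcal{P}$ with $\|\delta\|_p\le\varepsilon_p$. I would fix that $p$ and show
\[
|m_\theta(x+\delta,y)-m_\theta(x,y)|\le \hat{L}_p(\theta)\,\varepsilon_p .
\]
Once this holds, the hypothesis gives
\[
m_\theta(x+\delta,y)\ge m_\theta(x,y)-\hat{L}_p(\theta)\varepsilon_p \ge m_\theta(x,y)-\max_{p'\in\mathcal{P}}\hat{L}_{p'}(\theta)\varepsilon_{p'}>0 .
\]
Since $\delta$ was arbitrary, the conclusion follows. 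So the whole proof reduces to a per-norm Lipschitz bound on the margin.

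For the logits, I would use a global bound rather than the first-order Taylor heuristic of Proposition~\ref{prop:dual_gradients}, so the result is exact. For a feedforward net with 1-Lipschitz activations, each layer map $z\mapsto\phi(W_\ell z)$ is $\|W_\ell\|_2$-Lipschitz in $\ell_2$. Composing gives
\[
\|g_\theta(x+\delta)-g_\theta(x)\|_2\le \prod_\ell\|W_\ell\|_2\,\|\delta\|_2\le \prod_\ell B_\ell\,\|\delta\|_2 .
\]
This is the integrated version of Proposition~\ref{prop:bound_jacobian}, with the bounds $B_\ell$ supplied by Theorem~\ref{thm:upper_bound_spectral_norm}. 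Norm equivalence then converts to the $p$-norm: $\|\delta\|_2\le c_p\|\delta\|_p\le c_p\varepsilon_p$.

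Next I would pass from logits to the margin. For each fixed $j\ne y$, the difference $g(x)_y-g(x)_j=\langle e_y-e_j,g(x)\rangle$ is $\|e_y-e_j\|_2$-Lipschitz in the logits by Cauchy--Schwarz, that is, $\sqrt2$-Lipschitz. The margin is $m_\theta=\min_{j\ne y}(g_y-g_j)$. A minimum of $\sqrt2$-Lipschitz functions is again $\sqrt2$-Lipschitz, since $|\min_j a_j-\min_j b_j|\le\max_j|a_j-b_j|$. Hence
\[
|m_\theta(x+\delta,y)-m_\theta(x,y)|\le C_{\rm marg}\,c_p\prod_\ell B_\ell\,\varepsilon_p=\hat L_p(\theta)\varepsilon_p .
\]

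The main obstacle is justifying the layerwise Lipschitz composition at the level of generality the theorem asserts. The statement is phrased for the feedforward model of Proposition~\ref{prop:bound_jacobian}. For architectures with convolutions, normalization or residual connections, $\|W_\ell\|_2$ must be read as the operator norm of the induced linear map, and skip connections change the product to $\prod_\ell(1+B_\ell)$. I would state the argument for the feedforward class and note this caveat. A second, minor point is to avoid the pointwise Jacobian bound, which only controls local behaviour. I would either use the layerwise Lipschitz composition directly, or integrate the Jacobian bound along the segment from $x$ to $x+\delta$ (ReLU nets are differentiable almost everywhere on that segment, so this is valid).
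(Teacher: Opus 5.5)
Your proposal is correct and follows the paper's proof essentially step for step. Both use the pairwise margins $\langle e_y-e_j,g_\theta\rangle$ with constant $\sqrt{2}$, the fact that a minimum of finitely many Lipschitz functions is Lipschitz, the hypothesis $\|W_\ell\|_2\le B_\ell$, the conversion $\|\delta\|_2\le c_p\|\delta\|_p$, and the reduction of the union to a single $p$ with $\|\delta\|_p\le\varepsilon_p$.

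The one difference is how you get the global $\ell_2$-Lipschitz bound: you compose the layer maps directly, whereas the paper bounds each pairwise-margin gradient pointwise via Proposition~\ref{prop:bound_jacobian} and tacitly treats that as a Lipschitz constant, so your route makes that step explicit. Your fallback justification via a.e.\ differentiability along the segment is imprecise, since a segment is a null set and can lie entirely inside a ReLU kink, but the composition argument does not need it.
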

\par Appendix~\ref{sec:remark} provides the remark of Theorem~\ref{thm:union_robustness}. The proof of Theorem~\ref{thm:union_robustness} is given in Appendix~\ref{subsec:proof:thm:union_robustness}.
\subsection{Muon's Gradient Dynamics under Min-Max Optimization}
\par Our main result (Theorem~\ref{thm:adv_loss_refined}) shows that Muon descends the adversarial loss along the nuclear-norm direction with a structurally larger per-step guarantee than SGD.
\begin{assumption}
\label{assumption:adv_loss}
We can view the adversarial loss $L_{\mathrm{adv}}$ as a function $F(W)$ of a particular weight block $W$. Assume that $F$ is $\beta$-smooth, i.e., the following inequality holds
\begin{equation}
\label{eq:inequality_fw}
F(W+\Delta) \leq F(W)+\langle\nabla F(W), \Delta\rangle+\frac{\beta}{2}\|\Delta\|_{F}^{2},
\end{equation}
where $\Delta$ represents any perturbation or increment to the parameter block $W$.
\end{assumption}
\begin{theorem}[Muon optimizer on the adversarial loss for the min-max dynamics]
\label{thm:adv_loss_refined}
In AT, assume $F$ is $\beta$-smooth w.r.t.\ $\|\cdot\|_F$.
Let $G=\nabla_W F(W)$ be the block gradient and let $M$ be the
momentum-accumulated gradient used by Muon.
Let $\widetilde U$ be the inexact polar direction used in practice, satisfying
\begin{equation}
\label{eq:ortho_eq}
\|\widetilde U-\operatorname{Ortho}(M)\|_F \le \delta_\mathrm{orth},
\qquad
\|\widetilde U\|_2 \le 1+\varepsilon,
\qquad
\|\widetilde U\|_F^2 \le r(1+\varepsilon)^2,
\end{equation}
where $r$ is $\mathrm{rank}(M)$ or any upper bound on it.
Consider the update $W^+=W-\eta\,\widetilde U$ with $\eta>0$.
Then the following inequality holds:
\begin{equation}
\label{eq:muon_descent_refined}
F(W^+) \le F(W)
-\eta\|M\|_*
+\eta(1+\varepsilon)\|G-M\|_*
+\eta\sqrt r\,\|M\|_2\,\delta_\mathrm{orth}
+\frac{\beta}{2}\eta^2\,r(1+\varepsilon)^2.
\end{equation}
\end{theorem}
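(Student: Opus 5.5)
Plan: apply the $\beta$-smoothness inequality of Assumption~\ref{assumption:adv_loss} with $\Delta=-\eta\widetilde U$, which gives
\[
F(W^+)\le F(W)-\eta\langle G,\widetilde U\rangle+\frac{\beta}{2}\eta^2\|\widetilde U\|_F^2 .
\]
The quadratic term is bounded directly by the third condition in \eqref{eq:ortho_eq}, which yields $\frac{\beta}{2}\eta^2 r(1+\varepsilon)^2$. All remaining work is a lower bound on the linear term $\langle G,\widetilde U\rangle$.

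Write $U^\star=\operatorname{Ortho}(M)$ and split
\[
\langle G,\widetilde U\rangle=\langle M,U^\star\rangle+\langle M,\widetilde U-U^\star\rangle+\langle G-M,\widetilde U\rangle .
\]

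\textbf{First term.} With compact SVD $M=P\Sigma V^\top$ we get $\langle M,PV^\top\rangle=\operatorname{tr}(\Sigma)=\|M\|_*$. This is the nuclear-norm descent. If $M$ is rank deficient, I take the compact SVD on the nonzero singular values, so Lemma~\ref{lemma:polar_factor}'s full-rank assumption is not needed; only $\|U^\star\|_2\le 1$ and the trace identity are used.

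\textbf{Third term.} Spectral/nuclear duality (Hölder for Schatten norms) gives
\[
|\langle G-M,\widetilde U\rangle|\le\|\widetilde U\|_2\|G-M\|_*\le(1+\varepsilon)\|G-M\|_* .
\]

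\textbf{Second term.} Cauchy--Schwarz gives $|\langle M,\widetilde U-U^\star\rangle|\le\|M\|_F\,\delta_{\rm orth}$. Then $\|M\|_F\le\sqrt{\operatorname{rank}(M)}\,\|M\|_2\le\sqrt r\,\|M\|_2$, since $\|M\|_F^2=\sum_i\sigma_i^2\le \operatorname{rank}(M)\,\sigma_1^2$. This produces $\eta\sqrt r\,\|M\|_2\,\delta_{\rm orth}$.

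Combining the three pieces with the sign from $-\eta$ gives exactly \eqref{eq:muon_descent_refined}.

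The main point needing care is the second term. A naive duality bound $\|M\|_*\|\widetilde U-U^\star\|_2$ would not match the stated form. Using Frobenius Cauchy--Schwarz together with the rank inequality is what yields the $\sqrt r\,\|M\|_2$ factor. It also relies on $r$ bounding $\operatorname{rank}(M)$; the theorem allows $r$ to be any such upper bound, so this is consistent. Everything else is routine norm duality.
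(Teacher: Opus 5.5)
Your proposal is correct and matches the paper's proof step for step. The ingredients are the same: the smoothness bound with $\Delta=-\eta\widetilde U$, the three-term split of $\langle G,\widetilde U\rangle$, spectral/nuclear H\"older for the mismatch term, and Frobenius Cauchy--Schwarz with $\|M\|_F\le\sqrt r\,\|M\|_2$ for the orthogonalization error. Your use of the compact SVD on the nonzero singular values, which avoids the full-rank hypothesis of Lemma~\ref{lemma:polar_factor}, is a small clarification the paper passes over with ``by the same SVD argument.''
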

\par Appendix~\ref{sec:remark} provides the remark of Theorem~\ref{thm:adv_loss_refined}. The proof of Theorem~\ref{thm:adv_loss_refined} is given in Appendix~\ref{subsec:proof:thm:adv_loss_refined}.

\section{Empirical Results and Insights}
\subsection{Adapting Muon for AT}
\par We adopt an AT method derived from a previous seminal work~\cite{croce2022adversarial}. We instantiate the AT objective defined in Eq. \eqref{eq:at-objective} with APGD as the inner attack and Muon as the outer optimizer.
\par The attack method used in our study is Auto Projected Gradient Descent (APGD)~\cite{croce2020reliable}. It replaces the fixed step size of PGD with an adaptive schedule and an automatic restart mechanism. For $\ell_\infty$ and $\ell_2$ attacks, it uses $\mathrm{sign}(\nabla_x \ell)$ and $\nabla_x \ell / \|\nabla_x\ell\|_2$, respectively. For the $\ell_{1}$ attack, the step ascends along the top-k coordinates of $\lvert \nabla_{x}\ell \rvert$ (a sparse steepest-ascent direction) and projects the iterate back onto the $\ell_{1}$ ball of radius $\varepsilon_{1}$ after each step~\cite{croce2022adversarial}.
\par Algorithm~\ref{alg:robust_training} outlines the AT procedure. We sample a norm per mini-batch (e.g., $\ell_\infty$ or $\ell_{1}$) to craft adversarial examples, then update the model parameters. Various optimizers can be integrated as plugins within this AT framework.
\begin{algorithm}[!t]
\caption{Multi-Norm APGD Adversarial Training}
\label{alg:robust_training}
\begin{algorithmic}[1]
\REQUIRE Model $f_\theta$, data loader $\mathcal{D}$,
  threat norms $\mathcal{P}=\{p_1,\dots,p_M\}$,
  radii $\{\varepsilon_i\}_{i=1}^M$, attack steps $\{T_i\}_{i=1}^M$,
  optimizer $\mathrm{Opt}$ (SGD / AdamW / Muon),
  adversarial loss $\ell(\cdot,\cdot)$,
  optional scheduler $\mathrm{Scheduler}$
\STATE Initialize parameters $\theta$ (random or pretrained)
\FOR{epoch $=1$ to $K$}
  \FOR{each minibatch $(x,y)\sim\mathcal{D}$}
    \STATE $i\gets\mathrm{SampleThreatIndex}(\{1,\dots,M\})$
    \STATE $p\gets p_i,\ \varepsilon\gets\varepsilon_i,\ T\gets T_i$
    \STATE $x_{\mathrm{adv}}\gets\mathrm{APGD}(f_\theta,x,y;\,p,\varepsilon,T,\ell)$
    \STATE $\mathrm{Opt.zero\_grad}()$
    \STATE $L_{\mathrm{adv}}\gets\ell(f_\theta(x_{\mathrm{adv}}),y)$
    \STATE $\mathrm{Backprop}(L_{\mathrm{adv}})$
    \STATE $\mathrm{Opt.step}()$
      \hfill{\footnotesize%
        $\triangleright$ \textit{If Muon: see Alg. \ref{alg:muon-step} for the expanded step}}
    \IF{$\mathrm{Scheduler}$ is used}
      \STATE $\mathrm{Scheduler.step}()$
    \ENDIF
  \ENDFOR
\ENDFOR
\end{algorithmic}
\end{algorithm}

\subsection{Experiment Setup}
\noindent\textbf{Datasets}. The CIFAR-10 dataset~\cite{krizhevsky2009learning} is a commonly used dataset in computer vision. It consists of 60,000 $32 \times 32$ color images in 10 classes with 50,000 training images and 10,000 test images. It is a standard benchmark in adversarial robustness research. ImageNet~\cite{deng2009imagenet} is a large-scale image database for visual recognition research. It contains over 14 million human-annotated images spanning more than 20,000 categories, and around 1 million images include object bounding boxes. In research practice, the dataset is often used in its 1,000-class subset, which includes 1,281,167 training samples and 50,000 validation samples.

\noindent\textbf{Training Details}. On CIFAR-10~\cite{krizhevsky2009learning}, the training process lasts for 100 epochs. The APGD method~\cite{croce2020reliable} is used as the default attack method with 10 iteration steps. For the $\ell_\infty$-attack, the perturbation budget is $\varepsilon_{\infty}=\frac{8.0}{255.0}$. For the $\ell_2$-attack, the perturbation budget is $\varepsilon_{2}=0.5$. For the $\ell_1$-attack, the perturbation budget is $\varepsilon_{1}=12$. The batch size is set to 128. For CNNs~\cite{he2016identity, zagoruyko2016wide}, the initial learning rate is 1.0. For ViTs~\cite{dosovitskiy2021image}, the initial learning rate is 0.001. The learning rate is set in a stepwise manner, decreasing to 0.1 and 0.01 of the initial learning rate at 1/3 and 2/3 of the entire training period, respectively. The momentum  coefficient $\mu$ is 0.9, and the weight decay $\lambda$ is 0.0005. On ImageNet~\cite{deng2009imagenet}, the training process lasts for 100 epochs. Due to computing resource constraints, the defense is only deployed on ResNet-50~\cite{he2016deep}. The AT mode is set to the $\ell_{\infty} \ + \ \ell_{1}$ joint norm. The batch size is still set to 128. For the $\ell_\infty$-attack, the perturbation budget is $\varepsilon_{\infty}=\frac{4.0}{255.0}$, and the attack step number is 5. For the $\ell_2$-attack, the perturbation budget is $\varepsilon_{2}=2$, and the attack step number is 10. For the $\ell_1$-attack, the perturbation budget is $\varepsilon_{1}=255$, and the attack step number is 15. 

\subsection{Main Evaluation Results on CIFAR-10}
The models selected for our experiment include PreActResNet-18~\cite{he2016identity}, WRN-34-10~\cite{zagoruyko2016wide}, WRN-34-20~\cite{zagoruyko2016wide}, ViT-B~\cite{dosovitskiy2021image}, and ViT-L~\cite{dosovitskiy2021image}. The selection of best weights is based on their $\ell_{\infty}$ robustness. In practice, the best checkpoint is selected using a small validation subset during training, and the final reported performance is evaluated on the full test set. This protocol may introduce checkpoint-selection bias. We compare Muon~\cite{jordan2024muon}, AdamW~\cite{loshchilov2017decoupled}, and SGD~\cite{robbins1951stochastic,bottou2012stochastic} on NVIDIA RTX 4080S.
\par \textbf{PreActResNet-18}. Table~\ref{tab:multi_norm_at} shows the results on PreActResNet-18 (11.2 million parameters)~\cite{he2016identity}. Under the multi-norm training ($\ell_{\infty} \ + \ \ell_{1}$), Muon's best checkpoint achieves the union robustness of 39.27, compared to SGD's 40.37 and AdamW's 36.53. Its robustness is optimal only under the $\ell_{2}$-norm training setting. On shallower CNNs, Muon is not necessarily superior to SGD. Muon's advantage mainly lies in its balanced performance across norms.
\begin{table}[!t]
\centering
\footnotesize
\caption{AT results on PreActResNet-18~\cite{he2016identity}.}
\label{tab:multi_norm_at}
\setlength{\tabcolsep}{5pt}
\renewcommand{\arraystretch}{1.05}
\resizebox{\columnwidth}{!}{%
\begin{tabular}{llccccc@{\hspace{6pt}}ccccc}
\toprule
\multirow{2}{*}{Norm} & \multirow{2}{*}{Opt.} 
& \multicolumn{5}{c}{Best} 
& \multicolumn{5}{c}{Last} \\
\cmidrule(lr){3-7}\cmidrule(l){8-12}
& 
& Clean & $\ell_\infty$ & $\ell_2$ & $\ell_1$ & Union
& Clean & $\ell_\infty$ & $\ell_2$ & $\ell_1$ & Union \\
\midrule

\multirow{3}{*}{$\ell_\infty{+}\ell_1$}
& SGD   & \textbf{79.21} & \textbf{40.84} & \textbf{64.86} & \textbf{52.15} & \textbf{40.37} & 81.22 & 35.77 & 64.16 & 48.00 & 35.07 \\
& AdamW & 66.26 & 36.83 & 55.19 & 45.59 & 36.53 & 69.98 & 32.78 & 55.36 & 43.76 & 32.38 \\
& Muon  & 77.48 & 39.67 & 63.08 & 51.03 & 39.27 & \textbf{84.81} & \textbf{38.58} & \textbf{67.24} & \textbf{50.90} & \textbf{37.83} \\
\midrule

\multirow{3}{*}{$\ell_\infty$}
& SGD   & \textbf{78.60} & \textbf{45.81} & 58.20 & 8.05 & 8.05 & 82.37 & 41.73 & 57.17 & 5.80 & 5.79 \\
& AdamW & 65.18 & 39.64 & 47.76 & 10.07 & 10.07 & 69.22 & 38.63 & 48.96 & \textbf{7.87} & \textbf{7.87} \\
& Muon  & 78.04 & 43.86 & \textbf{58.58} & \textbf{10.16} & \textbf{10.16} & \textbf{85.74} & \textbf{43.68} & \textbf{60.84} & 6.00 & 6.00 \\
\midrule

\multirow{3}{*}{$\ell_1$}
& SGD   & \textbf{84.08} & 17.10 & \textbf{59.15} & \textbf{53.05} & 17.10 & 84.08 & 16.98 & 59.34 & 53.20 & 16.98 \\
& AdamW & 71.50 & 21.83 & 54.64 & 49.62 & 21.83 & 79.36 & \textbf{19.09} & 54.36 & 49.68 & \textbf{19.09} \\
& Muon  & 76.30 & \textbf{23.63} & 57.75 & 51.71 & \textbf{23.63} & \textbf{86.68} & 18.19 & \textbf{63.50} & \textbf{57.57} & 18.19 \\
\midrule

\multirow{3}{*}{$\ell_2$}
& SGD   & 87.58 & 25.91 & \textbf{65.53} & 25.00 & 20.52 & 87.74 & 22.81 & 63.46 & 22.73 & 17.82 \\
& AdamW & 80.74 & 24.58 & 58.84 & 25.42 & 19.97 & 82.48 & \textbf{23.31} & 58.92 & 24.11 & 18.66 \\
& Muon  & \textbf{85.10}	 & \textbf{28.52} & 64.09 & \textbf{27.75} & \textbf{22.85} & \textbf{89.66} & 23.82 & \textbf{66.94} & \textbf{25.75} & \textbf{19.63} \\
\bottomrule
\end{tabular}}
\end{table}
\par \textbf{WideResNet}. Table~\ref{tab:wrn34_10_multi_norm_at} illustrates the robustness evaluation results on WRN-34-10 (48.2 million parameters)~\cite{zagoruyko2016wide}. Muon achieves the strongest union robustness under AT. With $\ell_{\infty}$-only training, Muon attains a high $\ell_{\infty}$ robust accuracy (46.71), but its $\ell_{1}$ robustness nearly collapses (4.88). It reflects a general phenomenon where training against a single threat model tends to produce robustness mainly against that specific threat. Table~\ref{tab:wrn34_20_multi_norm_at} further shows robustness evaluation results on WRN-34-20 (192.9 million parameters)~\cite{zagoruyko2016wide}. Muon’s best checkpoint achieves a union robustness of 39.15. Furthermore, the Muon optimizer attains higher $\ell_{\infty}/\ell_{2}$ robustness (39.99/61.22). Under both  $\ell_{\infty} \ + \ \ell_{1}$ and $\ell_\infty$ AT with the Muon optimizer, the $\ell_{1}$ robust accuracy of the final checkpoint drops sharply, indicating that Muon needs an early-stopping strategy to mitigate ``robust overfitting''.
\begin{table}[!t]
\centering
\footnotesize
\caption{AT results on WRN-34-10~\cite{zagoruyko2016wide}.}
\label{tab:wrn34_10_multi_norm_at}
\setlength{\tabcolsep}{5pt}
\renewcommand{\arraystretch}{1.05}
\resizebox{\columnwidth}{!}{%
\begin{tabular}{llccccc@{\hspace{8pt}}ccccc}
\toprule
\multirow{2}{*}{Norm} & \multirow{2}{*}{Optimizer}
& \multicolumn{5}{c}{Best}
& \multicolumn{5}{c}{Last} \\
\cmidrule(lr){3-7}\cmidrule(l){8-12}
&
& Clean & $\ell_\infty$ & $\ell_2$ & $\ell_1$ & Union
& Clean & $\ell_\infty$ & $\ell_2$ & $\ell_1$ & Union \\
\midrule

\multirow{3}{*}{$\ell_1+\ell_\infty$}
& SGD   & \textbf{78.19} & 38.18 & \textbf{63.29} & \textbf{50.51} & 37.74 & 81.52 & 32.03 & 61.43 & 43.71 & 31.36 \\
& AdamW & 75.43 & 37.80 & 60.42 & 47.49 & 37.15 & 78.90 & 35.72 & 61.48 & 46.57 & 34.99 \\
& Muon  & 73.27 & \textbf{38.87} & 58.44 & 45.73 & \textbf{37.89} & \textbf{85.21} & \textbf{38.14} & \textbf{65.13} & \textbf{46.58} & \textbf{37.09} \\
\midrule

\multirow{3}{*}{$\ell_\infty$}
& SGD   & 80.02 & 44.11 & 57.73 & \textbf{7.95} & \textbf{7.95} & 82.50 & 37.23 & 53.21 & \textbf{5.69} & \textbf{5.69} \\
& AdamW & 69.92 & 42.47 & 49.69 & 4.41 & 4.41 & 72.39 & 41.38 & 48.25 & 3.18 & 3.18 \\
& Muon  & \textbf{83.16} & \textbf{46.71} & \textbf{58.11} & 4.88 & 4.88 & \textbf{85.96} & \textbf{43.81} & \textbf{54.05} & 4.34 & 4.34 \\
\midrule

\multirow{3}{*}{$\ell_1$}
& SGD   & 47.59 & 19.39 & 36.99 & 33.27 & 19.39 & 83.90 & 16.80 & 58.03 & 51.04 & 16.80 \\
& AdamW & 65.26 & 23.49 & 50.97 & 46.03 & 23.49 & 75.77 & \textbf{20.91} & 55.29 & 49.41 & \textbf{20.91}\\
& Muon  & \textbf{78.81} & \textbf{24.89} & \textbf{58.87} & \textbf{53.56} & \textbf{24.89} & \textbf{87.83} & 19.60 & \textbf{63.03} & \textbf{58.15} & 19.60 \\
\midrule

\multirow{3}{*}{$\ell_2$}
& SGD   & 76.59& 22.72 & 54.04 & 23.72 & 18.87 & 87.51 & 21.49 & 62.31 & 24.65 & 18.18 \\
& AdamW & 73.40 & 24.07 & 51.93 & 24.18 & 19.90 & 86.17 & 21.65 & 61.15 & 19.38 & 15.75 \\
& Muon  & \textbf{89.97} & \textbf{26.85} & \textbf{68.16} & \textbf{25.96} & \textbf{21.63} & \textbf{90.18} & \textbf{26.44} & \textbf{68.56} & \textbf{26.59} & \textbf{21.53} \\
\bottomrule
\end{tabular}%
\vspace{-10pt}
}
\end{table}
\begin{table}[!t]
\centering
\footnotesize
\caption{AT results on WRN-34-20~\cite{zagoruyko2016wide}.}
\label{tab:wrn34_20_multi_norm_at}
\setlength{\tabcolsep}{5pt}
\renewcommand{\arraystretch}{1.05}
\resizebox{\columnwidth}{!}{%
\begin{tabular}{llccccc@{\hspace{8pt}}ccccc}
\toprule
\multirow{2}{*}{Norm} & \multirow{2}{*}{Optimizer}
& \multicolumn{5}{c}{Best}
& \multicolumn{5}{c}{Last} \\
\cmidrule(lr){3-7}\cmidrule(l){8-12}
&
& Clean & $\ell_\infty$ & $\ell_2$ & $\ell_1$ & Union
& Clean & $\ell_\infty$ & $\ell_2$ & $\ell_1$ & Union \\
\midrule

\multirow{3}{*}{$\ell_1+\ell_\infty$}
& SGD   & 75.35 & 36.84 & 59.99 & \textbf{48.18} & 36.51 & 82.58 & 34.14 & \textbf{62.16} & \textbf{44.79} & 33.26 \\
& AdamW & 72.39 & 35.44 & 57.51 & 43.99 & 34.65 & 72.39 & 35.43 & 57.51 & 44.04 & \textbf{34.63} \\
& Muon  & \textbf{76.41} & \textbf{39.99} & \textbf{61.22} & 47.68 & \textbf{39.15} & \textbf{86.51} & \textbf{37.87} & 61.69 &  11.10 &  11.10 \\
\midrule

\multirow{3}{*}{$\ell_\infty$}
& SGD   & 75.98 & 41.12 & 55.09 & \textbf{8.30} & \textbf{8.30} & 83.62 & 39.17 & 54.40 &  \textbf{5.34} &  \textbf{5.34} \\
& AdamW & 73.37 & 42.59 & 48.31 &  2.48 & 2.48 & 75.17 & 40.91 & 47.35 &  2.22 &  2.22 \\
& Muon  & \textbf{82.13} & \textbf{46.58} & \textbf{57.98} &  5.30 &  5.30 & \textbf{87.44} & \textbf{46.76} & \textbf{56.99} & 5.15 & 5.15 \\
\midrule

\multirow{3}{*}{$\ell_1$}
& SGD   & 56.49 & 21.24 & 41.95 & 37.74 & 21.24 & 85.31 & 17.94 & 60.45 & 54.45 & 17.94 \\
& AdamW & 68.56 & 21.79 & 52.11 & 47.44 & 21.79 & 82.03 & 17.37 & 57.17 & 50.30 & 17.37 \\
& Muon  & \textbf{88.53} & \textbf{21.45} & \textbf{65.71} & \textbf{60.19} & \textbf{21.45} & \textbf{88.61} & \textbf{20.83} & \textbf{65.16} & \textbf{59.95} & \textbf{20.83} \\
\midrule

\multirow{3}{*}{$\ell_2$}
& SGD   & 88.33 & 24.37 & 64.29 & 25.26 & 20.11 & 88.48 & 23.93 & 64.84 & 25.66 & 20.05 \\
& AdamW & 85.55 & 22.73 & 61.20 & 22.15 & 17.92 & 85.45 & 22.59 & 60.76 & 21.53 & 17.76 \\
& Muon  & \textbf{91.09} & \textbf{28.15} & \textbf{70.14} & \textbf{27.26} & \textbf{22.64} & \textbf{91.16} & \textbf{27.84} & \textbf{70.38} & \textbf{27.25} & \textbf{22.44} \\
\bottomrule
\end{tabular}%
\vspace{-10pt}
}
\end{table}
\par \textbf{Vision Transformers}. Table~\ref{tab:vit_b_multi_norm_at} shows the robustness of ViT-B (85.7 million parameters)~\cite{dosovitskiy2021image} under different optimizers. Muon is clearly beneficial for AT on ViT-B. Compared with AdamW, Muon substantially improves the stability of AT for Transformers. While SGD can also achieve a certain level of robustness on ViT-B, Muon’s advantage is that it remains relatively stable across AT scenarios. Single-norm training still specializes to its own threat for all optimizers. The distinction is that Muon avoids the training instability that plagues AdamW on Transformers. Table~\ref{tab:vit_l_multi_norm_at} shows the robustness of ViT-L (303.1 million parameters)~\cite{dosovitskiy2021image} under different optimizers. The experimental observations on ViT-L are similar to those on ViT-B. On ViT-L, AdamW exhibits partial training failure, analogous to what happens when training ViT-B under the $\ell_{\infty}$-norm. To stabilize AT with AdamW, we set the learning rate to 0.00005 and the weight decay to 0.02. We find that AdamW’s final model under $\ell_\infty$-only training on ViT-L is close to random performance (e.g., the union robust accuracy is low), whereas Muon still maintains considerable robustness. However, AT with Muon can also suffer from ``robust overfitting''. 
\begin{table}[!t]
\centering
\footnotesize
\caption{AT results on ViT-B~\cite{dosovitskiy2021image}.}
\label{tab:vit_b_multi_norm_at}
\setlength{\tabcolsep}{5pt}
\renewcommand{\arraystretch}{1.05}
\resizebox{\columnwidth}{!}{%
\begin{tabular}{llccccc@{\hspace{8pt}}ccccc}
\toprule
\multirow{2}{*}{Norm} & \multirow{2}{*}{Optimizer}
& \multicolumn{5}{c}{Best}
& \multicolumn{5}{c}{Last} \\
\cmidrule(lr){3-7}\cmidrule(l){8-12}
&
& Clean & $\ell_\infty$ & $\ell_2$ & $\ell_1$ & Union
& Clean & $\ell_\infty$ & $\ell_2$ & $\ell_1$ & Union \\
\midrule

\multirow{3}{*}{$\ell_1+\ell_\infty$}
& SGD   & \textbf{66.35} & \textbf{29.64} & \textbf{52.69} & \textbf{44.78} & \textbf{29.62} & \textbf{71.96} & \textbf{24.88} & \textbf{52.94} & \textbf{40.32} & \textbf{24.80} \\
& AdamW & 27.22 & 12.10 & 18.97 & 16.54 & 12.02 & 32.77 & 13.62 & 23.11 & 19.88 & 13.53 \\
& Muon  & 61.53 & 27.03 & 48.63 & 42.61 & 27.03 & 65.49 & 19.17 & 45.79 & 35.42 & 19.15 \\
\midrule

\multirow{3}{*}{$\ell_\infty$}
& SGD   & \textbf{64.65} & \textbf{30.97} & \textbf{50.42} & \textbf{32.59} & \textbf{28.10} & \textbf{71.64} & \textbf{26.15} & \textbf{52.58} & \textbf{27.86} & \textbf{27.71} \\
& AdamW & 17.93 & 12.12 & 11.60 & 4.48 & 4.48 & 17.25 & 12.93 & 12.31 & 5.77 & 5.77 \\
& Muon  & 60.66 & 30.36 & 47.20 & 28.00 & 25.73 & 66.26 & 19.93 & 44.20 & 26.23 & 19.30 \\
\midrule

\multirow{3}{*}{$\ell_1$}
& SGD   & \textbf{68.48} & \textbf{22.63} & \textbf{51.30} & \textbf{45.27} & \textbf{22.63} & \textbf{72.24} & \textbf{17.19} & \textbf{49.72} & \textbf{43.06} & \textbf{17.19} \\
& AdamW & 25.57 & 8.99 & 17.08 & 15.94 & 8.97 & 28.26 & 6.47 & 16.98 & 15.66 & 6.44 \\
& Muon  & 58.58 & 21.15 & 44.41 & 40.93 & 21.15 & 64.64 &  12.15 & 40.61 & 34.79 &  12.15 \\
\midrule

\multirow{3}{*}{$\ell_2$}
& SGD   & 57.08 & 17.49 & 40.32 & 29.44 & 17.48 & \textbf{76.50} & \textbf{18.61} & \textbf{52.99} & \textbf{33.55} & \textbf{18.46} \\
& AdamW & 18.51 & 11.53 & 13.95 & 13.01 & 11.49 & 35.10 & 8.81 & 20.54 & 16.13 & 8.71 \\
& Muon  & \textbf{58.49} & \textbf{17.66} & \textbf{41.01} & \textbf{31.77} & \textbf{17.62} & 67.89 &  12.70 & 42.60 & 30.96 &  12.70 \\
\bottomrule
\end{tabular}%
}
\end{table}
\begin{table}[!t]
\centering
\footnotesize
\caption{AT results on ViT-L~\cite{dosovitskiy2021image}.}
\label{tab:vit_l_multi_norm_at}
\setlength{\tabcolsep}{5pt}
\renewcommand{\arraystretch}{1.05}
\resizebox{\columnwidth}{!}{%
\begin{tabular}{llccccc@{\hspace{8pt}}ccccc}
\toprule
\multirow{2}{*}{Norm} & \multirow{2}{*}{Optimizer}
& \multicolumn{5}{c}{Best}
& \multicolumn{5}{c}{Last} \\
\cmidrule(lr){3-7}\cmidrule(l){8-12}
&
& Clean & $\ell_\infty$ & $\ell_2$ & $\ell_1$ & Union
& Clean & $\ell_\infty$ & $\ell_2$ & $\ell_1$ & Union \\
\midrule

\multirow{3}{*}{$\ell_1+\ell_\infty$}
& SGD   & \textbf{69.87} & \textbf{32.51} & \textbf{55.86} & \textbf{47.39} & \textbf{32.50} & \textbf{73.10} & \textbf{26.69} & \textbf{55.09} & \textbf{43.03} & \textbf{26.64} \\
& AdamW & 26.52 & 11.43 & 19.14 & 16.28 & 11.23 & 25.10 & 10.62 & 17.12 & 15.19 & 10.59 \\
& Muon  & 63.85 & 28.18 & 50.85 & 43.94 & 28.18 & 68.02 & 21.65 & 47.79 & 37.25 & 21.64 \\
\midrule

\multirow{3}{*}{$\ell_\infty$}
& SGD   & \textbf{70.15} & \textbf{34.26} & \textbf{55.30} & \textbf{34.95} & \textbf{30.25} & \textbf{72.32} & \textbf{25.61} & \textbf{51.95} & \textbf{27.67} & \textbf{22.68} \\
& AdamW & 16.98 & 12.85 & 14.12 & 10.41 & 10.32 & 18.05 & 6.13 & 8.14 &  0.79 &  0.65 \\
& Muon  & 60.42 & 31.79 & 47.76 & 28.72 & 26.61 & 68.35 & 23.03 & 48.16 & 27.52 & 21.95 \\
\midrule

\multirow{3}{*}{$\ell_1$}
& SGD   & 24.68 & 9.22 & 18.22 & 18.58 & 9.22 & 40.60 & 13.48 & 29.27 & 28.23 & 13.48 \\
& AdamW & 17.84 & 11.12 & 14.90 & 14.83 & 11.12 & 24.27 &  5.62 & 13.27 & 13.02 &  5.62 \\
& Muon  & \textbf{58.00} & \textbf{21.13} & \textbf{43.75} & \textbf{39.80} & \textbf{21.13} & \textbf{67.47} & \textbf{14.05} & \textbf{44.00} & \textbf{39.09} & \textbf{14.05} \\
\midrule

\multirow{3}{*}{$\ell_2$}
& SGD   & \textbf{77.03} & 18.37 & 55.32 & 33.31 & 18.15 & \textbf{77.22} & \textbf{19.01} & \textbf{54.40} & \textbf{34.42} & \textbf{18.78} \\
& AdamW & 21.31 & 10.24 & 16.63 & 14.63 & 10.24 & 26.38 & 5.77 & 14.80 & 11.62 & 5.73 \\
& Muon  & 67.40 & \textbf{21.40} & \textbf{48.68} & \textbf{35.54} & \textbf{21.33} & 71.17 & 15.32 & 46.74 & 31.90 & 15.26 \\
\bottomrule
\end{tabular}%
}
\end{table}
\par In summary, AT with the Muon optimizer achieves considerable robustness on overparameterized CNNs. Muon achieves union robustness on par with or better than SGD on WRN-34-10/20, while outperforming AdamW by a large margin. On ViT-B and ViT-L, both Muon and SGD yield stable AT, while AdamW may collapse to near-random performance. Muon offers the most consistent $\ell_\infty$ robustness among the three, whereas SGD remains competitive on union robustness. 
\subsection{Discussion}
\noindent\textbf{Robust Overfitting}. We observe the symptom of ``robust overfitting'' in the experiments. It indicates that Muon improves
optimization stability in some cases but does not eliminate the late-stage robustness
degradation. It suggests that the ``early stopping" strategy remains
important for Muon-based AT as well as for SGD-based AT. We discuss the experimental phenomenon in Appendix~\ref{sec:robust_overfitting}.

\noindent\textbf{Training Time}. Table~\ref{tab:train_time_wrn34_20} reports the AT time across different optimizers. We train WRN-34-20 for 100 epochs. The training time is comparable across the three optimizers.
\begin{table}[!t]
\centering
\footnotesize
\caption{Training time for WRN-34-20~\cite{zagoruyko2016wide} under different threat norms. Times are converted from seconds to hours (h) and minutes (m), rounded to the nearest minute.}
\setlength{\tabcolsep}{3pt}
\renewcommand{\arraystretch}{1.05}
\resizebox{\columnwidth}{!}{%
\begin{tabular}{lcccccccccccc}
\toprule
& \multicolumn{4}{c}{SGD} & \multicolumn{4}{c}{AdamW} & \multicolumn{4}{c}{Muon} \\
\cmidrule(lr){2-5}\cmidrule(lr){6-9}\cmidrule(l){10-13}
Metric
& $\ell_1{+}\ell_\infty$ & $\ell_\infty$ & $\ell_1$ & $\ell_2$
& $\ell_1{+}\ell_\infty$ & $\ell_\infty$ & $\ell_1$ & $\ell_2$
& $\ell_1{+}\ell_\infty$ & $\ell_\infty$ & $\ell_1$ & $\ell_2$ \\
\midrule
Time
& 51h59m & 51h47m & 52h18m & 51h49m
& 52h06m & 51h52m & 52h21m & 51h56m
& 52h45m & 52h29m & 53h04m & 52h32m \\
\bottomrule
\end{tabular}%
}
\label{tab:train_time_wrn34_20}
\end{table}

\noindent\textbf{Comparison with SAM}. We conduct a comparison experiment between Muon and SAM ($\rho=0.05$) on WRN-34-10 (WRN-34-10)~\cite{zagoruyko2016wide} and ViT-B~\cite{dosovitskiy2021image}. Table~\ref{tab:wrn3410_vitb_sam_muon} demonstrates the robustness advantage of the SAM optimizer. Nevertheless, SAM requires two backpropagations. Taking $\ell_{2}$-norm AT on WRN-34-10 as an example, SAM takes 67,741.1\,s, while Muon requires only 60,415.0\,s.
\begin{table*}[!t]
\centering
\footnotesize
\caption{AT results on WRN-34-10 and ViT-B (SAM vs. Muon).}
\label{tab:wrn3410_vitb_sam_muon}
\setlength{\tabcolsep}{5pt}
\renewcommand{\arraystretch}{1.03}
\begin{tabular}{lllccccc@{\hspace{10pt}}ccccc}
\toprule
\multirow{2}{*}{Model} & \multirow{2}{*}{Norm} & \multirow{2}{*}{Optimizer}
& \multicolumn{5}{c}{Best}
& \multicolumn{5}{c}{Last} \\
\cmidrule(lr){4-8}\cmidrule(l){9-13}
&&
& Clean & $\ell_\infty$ & $\ell_2$ & $\ell_1$ & Union
& Clean & $\ell_\infty$ & $\ell_2$ & $\ell_1$ & Union \\
\midrule

\multirow{8}{*}{WRN-34-10}
& \multirow{2}{*}{$\ell_1+\ell_\infty$}
& SAM  & \textbf{78.75} & \textbf{40.98} & \textbf{64.64} & \textbf{52.77} & \textbf{40.64} & 80.93 & \textbf{40.93} & \textbf{65.84} & \textbf{52.71} & \textbf{40.52} \\
& & Muon & 73.27 & 38.87 & 58.44 & 45.73 & 37.89 & \textbf{85.21} & 38.14 & 65.13 & 46.58 & 37.09 \\

\cmidrule(lr){2-13}
& \multirow{2}{*}{$\ell_\infty$}
& SAM  & 80.53 & 46.08 & \textbf{60.30} & \textbf{10.43} & \textbf{10.43} & 81.84 & \textbf{46.20} & \textbf{60.76} & \textbf{9.74} & \textbf{9.73} \\
& & Muon & \textbf{83.16} & \textbf{46.71} & 58.11 & 4.88 & 4.88 & \textbf{85.96} & 43.81 & 54.06 & 4.34 & 4.34 \\

\cmidrule(lr){2-13}
& \multirow{2}{*}{$\ell_1$}
& SAM  & \textbf{83.93} & 24.87 & \textbf{63.78} & \textbf{57.57} & 24.86 & 85.04 & \textbf{22.34} & \textbf{63.92} & 57.66 & \textbf{22.34} \\
& & Muon & 78.81 & \textbf{24.89} & 58.87 & 53.56 & \textbf{24.89} & \textbf{87.83} & 19.60 & 63.03 & \textbf{58.15} & 19.60 \\

\cmidrule(lr){2-13}
& \multirow{2}{*}{$\ell_2$}
& SAM  & 87.95 & \textbf{29.27} & 67.50 & \textbf{29.34} & \textbf{24.23} & 89.28 & 25.76 & 67.49 & \textbf{26.62} & 21.21 \\
& & Muon & \textbf{89.97} & 26.85 & \textbf{68.16} & 25.96 & 21.63 & \textbf{90.18} & \textbf{26.44} & \textbf{68.56} & 26.59 & \textbf{21.53} \\
\midrule

\multirow{8}{*}{ViT-B}
& \multirow{2}{*}{$\ell_1+\ell_\infty$}
& SAM  & 56.48 & \textbf{28.05} & 45.24 & 40.01 & \textbf{28.04} & 57.83 & \textbf{28.99} & \textbf{46.35} & \textbf{41.19} & \textbf{28.99} \\
& & Muon & \textbf{61.53} & 27.03 & \textbf{48.63} & \textbf{42.61} & 27.03 & \textbf{65.49} & 19.17 & 45.79 & 35.42 & 19.15 \\

\cmidrule(lr){2-13}
& \multirow{2}{*}{$\ell_\infty$}
& SAM  & 51.70 & 28.08 & 40.76 & 24.79 & 23.44 & 51.88 & \textbf{28.52} & 40.99 & 24.83 & \textbf{23.66} \\
& & Muon & \textbf{60.66} & \textbf{30.36} & \textbf{47.20} & \textbf{28.00} & \textbf{25.73} & \textbf{66.26} & 19.93 & \textbf{44.20} & \textbf{26.23} & 19.30 \\

\cmidrule(lr){2-13}
& \multirow{2}{*}{$\ell_1$}
& SAM  & \textbf{66.03} & \textbf{24.65} & \textbf{50.57} & \textbf{46.24} & \textbf{24.65} & \textbf{67.63} & \textbf{25.26} & \textbf{52.11} & \textbf{47.21} & \textbf{25.26} \\
& & Muon & 58.58 & 21.15 & 44.41 & 40.93 & 21.15 & 64.64 & 12.15 & 40.61 & 34.79 & 12.15 \\

\cmidrule(lr){2-13}
& \multirow{2}{*}{$\ell_2$}
& SAM  & \textbf{75.30} & \textbf{24.98} & \textbf{55.97} & \textbf{37.28} & \textbf{24.57} & \textbf{75.45} & \textbf{24.91} & \textbf{56.33} & \textbf{37.17} & \textbf{24.49} \\
& & Muon & 58.49 & 17.66 & 41.01 & 31.77 & 17.62 & 67.89 & 12.70 & 42.60 & 30.96 & 12.70 \\
\bottomrule
\end{tabular}
\end{table*}

\noindent\textbf{Results on ImageNet}. Although AT on ImageNet~\cite{deng2009imagenet} incurs substantial overhead, the exploration is still meaningful. We find that SGD admits a large initial learning rate (e.g., 1.0) on ResNet-50~\cite{he2016deep}, whereas Muon and AdamW require a smaller initial learning rate (0.0001). We test 2,000 samples from the validation set.
Table~\ref{tab:imagenet_results} shows that SGD performs best among the tested optimizers under our current hyperparameters and compute budgets on ImageNet. We do not interpret this result as ruling out Muon-based AT at scale. Rather, it suggests that Muon is more sensitive to learning rate, warmup, and weight decay choices at scale~\cite{liu2025cosmos}. Further ImageNet-scale tuning on Muon-based AT is needed before drawing definitive conclusions.
\begin{table}[!t]
\centering
\footnotesize
\caption{Results on the ImageNet validation set~\cite{deng2009imagenet}.}
\label{tab:imagenet_results}
\setlength{\tabcolsep}{5pt}
\renewcommand{\arraystretch}{1.05}
{%
\begin{tabular}{lccccc@{\hspace{8pt}}ccccc}
\toprule
\multirow{2}{*}{Optimizer}
& \multicolumn{5}{c}{Best}
& \multicolumn{5}{c}{Last} \\
\cmidrule(lr){2-6}\cmidrule(l){7-11}
& Clean & $\ell_\infty$ & $\ell_2$ & $\ell_1$ & Union
& Clean & $\ell_\infty$ & $\ell_2$ & $\ell_1$ & Union \\
\midrule
SGD   & \textbf{56.02} & \textbf{22.22} & \textbf{37.30} & \textbf{23.63} & \textbf{20.17} & \textbf{51.03} &  8.40 & \textbf{23.39} & \textbf{11.72} &  \textbf{7.76} \\
AdamW & 42.43 & 7.62 & 17.87 & 2.88 &  2.49 & 42.09 &  7.28 & 18.41 & 9.18 &  6.35 \\
Muon  & 48.44 & 19.04 & 31.45 & 18.12 & 16.21 & 50.93 &  \textbf{9.33} & 16.80 &  2.98 &  2.50 \\
\bottomrule
\end{tabular}%
}
\end{table}

\section{Conclusion}
\par \textit{The proposal of the Muon optimizer has opened up a new paradigm for robustness research: can this optimizer, which orthogonalizes updates via polar decomposition, achieve effects comparable to gradient descent optimizers in driving robustness?} From a theoretical perspective, we connect Muon's orthogonalized updates to a spectral-norm stability ceiling and a per-step nuclear-norm descent guarantee, characterizing the optimization dynamics induced by min-max training. 
The experimental results show that the Muon optimizer converges faster under AT (see Appendix~\ref{sec:loss_fig}). On CIFAR-10, iMuon is consistently stronger than AdamW and is competitive with SGD across many CNN and ViT settings, but it does not uniformly dominate SGD. Its main benefit is not a universal robustness improvement, but a stable orthogonalized update geometry that often improves early robust optimization. On ImageNet, our pilot study shows that SGD remains stronger under the current tuning budget, indicating that large-scale Muon-based AT requires further hyperparameter and schedule design. 

\bibliography{main}
\bibliographystyle{plain}
\newpage
\appendix
\onecolumn
\section{Related Work}
\par Optimizer design is of significant importance in AT, which relies on the min-max optimization mechanism~\cite{madry2018towards,zhang2019theoretically}. These studies postulate that a stronger inner attack yields a more robust defense model. If the attack is too weak, it can lead to gradient masking or an overestimation of robustness.
\par Most theoretical research on AT focuses on the dynamics of min-max optimization, while less attention is paid to the choice of optimizer itself~\cite{wang2019convergence,gao2019convergence,deng2020towards}. Early studies found that SGD optimization can lead to a more robust loss landscape when combined with regularization and data augmentation~\cite{liu2020bad}. AT with the SGD optimizer and weight decay regularization can achieve feature purification~\cite{allen2022feature}. Empirical studies have found it more suitable to apply SGD with momentum rather than other adaptive optimizers~\cite{pang2021bag}. 
\par SAM minimizes both the loss value and its sharpness~\cite{foret2021sharpness}. It can help improve the robustness of neural networks~\cite{zhang2024duality,zhou2025sharpness}. The implicit bias of SAM yields a better generalization solution than that of standard gradient descent~\cite{andriushchenko2022towards}. At the same time, SAM tends to ``bounce'' on both sides along the direction of maximum curvature; in more general cases, the effect of SAM is linked to the regularization of the Hessian spectrum (especially the maximum eigenvalue)~\cite{bartlett2023dynamics}. However, SAM requires two forward-backward passes per step: one for the gradient perturbation and another for the actual update~\cite{foret2021sharpness}.
\par Previous studies have stated that SGD-trained neural networks provide better robustness to input perturbations than those trained with adaptive gradient methods due to smaller Lipschitz constants~\cite{ma2023understanding}. Recently, the Muon optimizer~\cite{jordan2024muon,liu2025muon} has been proposed in the training process of new foundation models. When the Hessian satisfies the low-rank property, Muon demonstrates a significant convergence speed advantage over SGD~\cite{shen2025convergence}. Using a Lyapunov function, it has been proven that Muon with momentum and decoupled weight decay is stable and converges to the KKT point of the implicitly constrained problem~\cite{chen2025muon}. The Muon optimizer is built on Newton--Schulz iterative updates, and Bernstein et al.~\cite{bernstein2025modular} constructed dual mappings for general network structures. This analytical framework can also be extended to the Muon optimizer. Newhouse et al.~\cite{newhouse2025training} proposed training a Transformer with Lipschitz constraints, employing constraint techniques such as Muon and soft cap. Although Chen et al.~\cite{chen2025muon} analyze Muon's convergence behavior under spectral norm constraints in a general nonconvex optimization setting, its effectiveness under AT has not been fully studied.
\section{Mapping of Theory and Algorithm}
\par Algorithm~\ref{alg:muon-step} illustrates the inline mechanism of \texttt{Opt.step()} described in Algorithm~\ref{alg:robust_training}.
\begin{algorithm}[!t]
\caption{One Muon Step under AT
  \textmd{(expands line~10 of Alg.\,1 when $\mathrm{Opt}=\mathrm{Muon}$;
  Newton--Schulz iteration inlined as sub-steps of S2).}}
\label{alg:muon-step}
\begin{algorithmic}[1]
\REQUIRE Weight block $W\in\mathbb{R}^{m\times n}$,
  momentum buffer $M$ (init.\ $\mathbf{0}$),
  gradient $G=\nabla_W L_{\mathrm{adv}}$,
  learning rate $\eta$, momentum $\mu$, weight decay $\lambda$,
  Newton-Schulz steps $s$

\STATE \slabel{S1}\ $M \gets \mu M + G$
  \hfill{\footnotesize$\triangleright$ \textit{momentum update (Assumption~\ref{assumption:adv_loss}:
    $G$ is the $\beta$-smooth AT gradient)}}

\STATE \slabel{S2}\ \textit{// Approximate polar factor
  --- supports \textbf{Lemma~\ref{lemma:polar_factor}}}
\STATE \hspace{2em} $X \gets M \;/\; \|M\|_F$
  \hfill{\footnotesize$\triangleright$ \textit{normalize, starts near Stiefel manifold}}
\FOR{\hspace{2em} $k = 1$ \textbf{to} $s$}
 \STATE \hspace{2em} 
$X \leftarrow \frac{3}{2}X-\frac{1}{2}XX^\top X$
\hfill{\footnotesize $\triangleright$ \textit{Newton--Schulz polar iteration}}
\ENDFOR
\STATE \hspace{2em} $\widetilde{U} \gets X$
  \hfill{\footnotesize$\triangleright$
    \textit{as in Theorem \ref{thm:upper_bound_spectral_norm}}}

\STATE \slabel{S3}\ $W \gets (1-\eta\lambda)\,W - \eta\,\widetilde{U}$
  \hfill{\footnotesize$\triangleright$;
    \textit{orthogonalised weight update (Theorem~\ref{thm:upper_bound_spectral_norm}:
    $\|W_t\|_2\le\max(\|W_0\|_2,\,\tfrac{1+\bar\varepsilon}{\lambda}),\ \text{for all}\ t \geq 0$)}}

\RETURN updated $W$,\ updated $M$
\end{algorithmic}
\end{algorithm}
\par Table~\ref{tab:algorithm_theory_mapping} maps theoretical results with algorithmic operations of the Newton--Schulz sub-routine defined in Algorithm~\ref{alg:muon-step}.
\begin{table}
\centering
\caption{Mapping of algorithmic operations and theoretical results.}
\label{tab:algorithm_theory_mapping}
\resizebox{\linewidth}{!}{
\begin{tabular}{clll}
\toprule
\textbf{Step} & \textbf{Operation} & \textbf{Theory} & \textbf{Implication}\\
\midrule
S1
  & $M\gets\mu M+G$
  & Assumption \ref{assumption:adv_loss}
  & $\beta$-smooth adversarial loss\\
S2 (init)
  & $X\gets M/\|M\|_F$
  & Lemma~\ref{lemma:polar_factor} (pre-cond.)
  & Normalizes scales \\
S2 (loop)
  & $X\gets\tfrac{3}{2}X-\tfrac{1}{2}XX^\top X$
  & Theorem \ref{thm:upper_bound_spectral_norm} (Newton--Schulz approximation and tolerance condition)
  & $\|\widetilde{U}\|_2\approx 1$\\
S3
  & $W\gets(1{-}\eta\lambda)W-\eta\widetilde{U}$
  & Theorem~\ref{thm:upper_bound_spectral_norm}
  & Bounded $\|W_t\|_2$ trajectory\\
S3 (all layers)
  & Multi-norm $\Delta_{\mathrm{union}}$
  & Theorem~\ref{thm:union_robustness}
  & Union robustness analysis\\
S3
  & AT loss descent
  & Theorem~\ref{thm:adv_loss_refined}
  & Nuclear-norm descent $\|M\|_*$\\
\bottomrule
\end{tabular}
}
\end{table}

\section{Supplementary Theoretical Remarks}
\label{sec:remark}
\begin{remark}[Interpretation of Proposition \ref{prop:dual_gradients}]
\label{remark:dual_gradients}
We acknowledge that true adversarial robustness depends on the global geometry of the model's decision boundary, not merely on local gradient magnitudes. The first-order characterization adopted here reveals robustness to the dual norm of the input gradient. It is a standard proxy used throughout the certification literature~\cite{hein2017formal, tsuzuku2018lipschitz}. It is exact for linear models and provides a tight bound in locally linear regions of ReLU networks. Although it does not capture higher-order curvature effects (e.g., decision boundary curvature exploited by second-order attacks), it remains predictive in practice.
\end{remark}
\begin{remark}[Interpretation of Proposition \ref{prop:bound_jacobian}]
For residual architectures, Proposition \ref{prop:bound_jacobian} can be applied blockwise. A block 
$h_{\ell+1}=h_\ell+F_\ell(h_\ell)$ satisfies
$\|J h_{\ell+1}\|_2\le 1+\|J F_\ell(h_\ell)\|_2$, and hence the network Jacobian is bounded by the product of such residual-block factors. Therefore, the sequential feedforward bound is used as a clean base case, while ResNet/WRN-style architectures follow by replacing each layer factor with its corresponding residual-block Lipschitz factor.
\end{remark}
\begin{remark}[Interpretation of Theorem~\ref{thm:upper_bound_spectral_norm}]
Theorem~\ref{thm:upper_bound_spectral_norm} says that Muon's polar update admits a spectral-norm stability ceiling: the weight decay term $\eta\lambda$ pulls the spectral norm toward zero, while the orthonormalized update
adds at most $\eta(1+\bar{\varepsilon})$ per step. These two forces
balance at the fixed point $\|W\|_2 = (1+\bar{\varepsilon})/\lambda$,
giving a hard ceiling that depends only on the learning rate, weight
decay, and approximation quality of Newton--Schulz iterations. Theorem~\ref{thm:upper_bound_spectral_norm} is a worst-case stability bound on the trajectory induced by bounded operator-norm updates. It should not be read as predicting that Muon yields smaller empirical weight spectral norms than SGD. In practice, SGD can yield even smaller spectral norms under AT. The theoretical content is that Muon's polar update cannot induce unbounded spectral growth. It is a property that distinguishes it from adaptive optimizers like AdamW, where we empirically observe such growth. The bound is therefore read as a sufficient condition for spectral stability. This theorem asserts that the polar update guarantees $\left\|\mathrm{W}_{t}\right\|_{2} \leqslant \max \left(\left\|\mathrm{W}_0\right\|_{2},(1+\bar{\varepsilon}) / \lambda\right)$, preventing unbounded spectral growth. This theorem does not claim the bound is tight. 
\end{remark}
\begin{remark}[Interpretation of Theorem~\ref{thm:union_robustness}]
The quantity $\hat{L}_p(\theta)$ is a computable upper bound on the
true $\ell_p$-Lipschitz constant $L_p(\theta)$,
obtained by chaining (i) the Jacobian–spectral bound of Proposition~\ref{prop:bound_jacobian}, (ii) the per-layer spectral bound $\left\|W_{\ell}\right\|_{2} \leq B_{\ell}$ from Theorem~\ref{thm:upper_bound_spectral_norm}, and (iii) the norm-equivalence $\|\delta\|_{2} \leq c_{p}\|\delta\|_{p}$. Theorem~\ref{thm:union_robustness} should be read as a structural statement rather than a usable robustness certificate. Specifically, it reflects that Muon's polar update in the context of empirical defense prevents unbounded spectral (hence Lipschitz) growth along the trajectory.
\end{remark}
\begin{remark}[Interpretation of Theorem~\ref{thm:adv_loss_refined}]
Theorem~\ref{thm:adv_loss_refined} shows that Muon's update provides a descent guarantee in the robust min-max optimization setting. Since the nuclear norm $\|M\|_* = \sum_i \sigma_i(M)$ accumulates all singular values of the gradient, the guaranteed descent is structurally larger than that of a raw gradient step, while the rank-bounded curvature term prevents the oscillations commonly observed with SGD under the high-variance adversarial loss landscape. In practice, Muon consistently achieves lower loss earlier and with less oscillation than other optimizers like SGD and AdamW across all architectures and norm settings, with the effect being most pronounced on ViT, where polar orthogonalization more effectively controls the rank structure of attention weight gradients.
\end{remark}
\section{Proof of Theoretical Analysis}
\label{sec:proof}
\subsection{Proof of Lemma~\ref{lemma:polar_factor}}
\label{subsec:proof:lemma:polar_factor}
\begin{proof}
\label{proof:polar_factor}
Let $U=\operatorname{Ortho}(A)=PV^\top$, where
$A=P\Sigma V^\top$ is the compact SVD of $A$. If $m\ge n$, then $P^\top P=I_n$ and $V$ is orthogonal. Hence
\[
U^\top U=VP^\top PV^\top=I_n .
\]
Moreover, since $A^\top A=V\Sigma^2V^\top$, we have
\[
U(A^\top A)^{1/2}
=
PV^\top(V\Sigma V^\top)
=
P\Sigma V^\top
=
A .
\]
If $m<n$, then $P$ is orthogonal and $V^\top V=I_m$. Hence
\[
UU^\top=PV^\top VP^\top=I_m .
\]
Moreover, since $AA^\top=P\Sigma^2P^\top$, we have
\[
(AA^\top)^{1/2}U
=
(P\Sigma P^\top)(PV^\top)
=
P\Sigma V^\top
=
A .
\]
Thus, all nonzero singular values of $U$ are equal to one, and therefore
$\|U\|_2=1$.

It remains to prove the projection statement. Let $\mathcal{S}$ denote
the corresponding rectangular Stiefel set, i.e.,
$\mathcal{S}=\{Q:Q^\top Q=I_n\}$ if $m\ge n$, and
$\mathcal{S}=\{Q:QQ^\top=I_m\}$ if $m<n$. For any $Q\in\mathcal{S}$,
\[
\|A-Q\|_F^2
=
\|A\|_F^2+\|Q\|_F^2-2\langle A,Q\rangle .
\]
Since $\|Q\|_F^2=\min\{m,n\}$ is fixed on $\mathcal{S}$, minimizing
$\|A-Q\|_F$ is equivalent to maximizing $\langle A,Q\rangle$. By von Neumann's
trace inequality,
\[
\langle A,Q\rangle
\le
\sum_i \sigma_i(A)\sigma_i(Q)
=
\sum_i \sigma_i(A),
\]
because every $Q\in\mathcal{S}$ has all nonzero singular values equal to one.
For $Q=U=PV^\top$, equality holds:
\[
\langle A,U\rangle
=
\operatorname{tr}(A^\top U)
=
\operatorname{tr}(\Sigma)
=
\sum_i \sigma_i(A).
\]
Hence $U$ is a minimizer of $\min_{Q\in\mathcal{S}}\|A-Q\|_F$. The proof is complete.
\end{proof}
\subsection{Proof of Theorem~\ref{thm:upper_bound_spectral_norm}}
\label{subsec:proof:thm:upper_bound_spectral_norm}
\begin{proof}
Using the triangle inequality and absolute homogeneity of the spectral norm, we have
\begin{equation}
\left\|W_{t+1}\right\|_{2}=\left\|(1-\eta \lambda) W_{t}-\eta \widetilde{U}_{t}\right\|_{2} \leq(1-\eta \lambda)\left\|W_{t}\right\|_{2}+\eta\left\|\widetilde{U}_{t}\right\|_{2}.
\end{equation}
Applying the assumption of Eq.~(\ref{eq:muon_trajectory_assumption}) with $a_{t}=\left\|W_{t}\right\|_{2}$, we obtain
\begin{equation}
a_{t+1} \leq(1-\eta \lambda) a_{t}+\eta+\eta \varepsilon_{t}.
\end{equation}
Unrolling the recursion gives
\begin{equation}
a_{t} \leq(1-\eta \lambda)^{t} a_{0}+\eta \sum_{k=0}^{t-1}(1-\eta \lambda)^{t-1-k}+\eta \sum_{k=0}^{t-1}(1-\eta \lambda)^{t-1-k} \varepsilon_{k}.
\end{equation}
The second term is a geometric series
\begin{equation}
\eta \sum_{k=0}^{t-1}(1-\eta \lambda)^{t-1-k}=\eta \sum_{j=0}^{t-1}(1-\eta \lambda)^{j}=\eta \cdot \frac{1-(1-\eta \lambda)^{t}}{\eta \lambda}=\frac{1-(1-\eta \lambda)^{t}}{\lambda}.
\end{equation}
Substituting back yields Eq.~(\ref{eq:trajectory_upper_bound}). If $\varepsilon_{k} \leq \bar{\varepsilon}$, then
\begin{equation}
\eta \sum_{k=0}^{t-1}(1-\eta \lambda)^{t-1-k} \varepsilon_{k} \leq \eta \bar{\varepsilon} \sum_{j=0}^{t-1}(1-\eta \lambda)^{j}=\bar{\varepsilon} \cdot \frac{1-(1-\eta \lambda)^{t}}{\lambda},
\end{equation}
which gives Eq.~(\ref{eq:eq:trajectory_inequality}). The proof is complete.
\end{proof}
\subsection{Proof of Theorem~\ref{thm:union_robustness}}
\label{subsec:proof:thm:union_robustness}
\begin{proof}
\textbf{Step 1: From Jacobian spectral norms to an $\ell_2$-Lipschitz bound on the margin.}

For each competing class $j\ne y$, define the pairwise margin
\begin{equation}
m_{y,j}(x):=g_\theta(x)_y-g_\theta(x)_j
=
\langle e_y-e_j,\,g_\theta(x)\rangle .
\end{equation}
Here $e_y-e_j$ is fixed and satisfies $\|e_y-e_j\|_2=\sqrt{2}$. Hence
\begin{equation}
\nabla_x m_{y,j}(x)
=
J_{g_\theta}(x)^\top(e_y-e_j),
\end{equation}
and therefore, it satisfies
\begin{equation}
\|\nabla_x m_{y,j}(x)\|_2
\le
\sqrt{2}\,\|J_{g_\theta}(x)\|_2
\le
\sqrt{2}\prod_{\ell=1}^{L}\|W_\ell\|_2,
\end{equation}
where the last inequality follows from Proposition~\ref{prop:bound_jacobian}
assuming $1$-Lipschitz activations. Otherwise, the product is replaced by
$L_\phi^{L-1}\prod_\ell \|W_\ell\|_2$.

Since the multiclass margin can be written as
\begin{equation}
m_\theta(x,y)
=
g_\theta(x)_y-\max_{j\ne y}g_\theta(x)_j
=
\min_{j\ne y}m_{y,j}(x),
\end{equation}
and the pointwise minimum of finitely many $L$-Lipschitz functions is still
$L$-Lipschitz, $m_\theta(\cdot,y)$ is $L_2(\theta)$-Lipschitz with respect to
$\|\cdot\|_2$, with
\begin{equation}
L_2(\theta)
\le
C_{\rm marg}\prod_{\ell=1}^{L}\|W_\ell\|_2,
\qquad
C_{\rm marg}:=\max_{j\ne y}\|e_y-e_j\|_2=\sqrt{2}.
\label{eq:D31}
\end{equation}

\textbf{Step 2: Spectral-norm control from Muon.}

By Theorem~\ref{thm:upper_bound_spectral_norm}, running Muon with learning rate $\eta$ and weight decay $\lambda > 0$
guarantees
\begin{equation}
\|W_\ell\|_2 \;\le\; B_\ell
\qquad \text{for all } \ell \in \{1, \dots, L\} \text{ and all training steps},
\end{equation}
where $B_\ell = \max\!\bigl(\|W_\ell^{(0)}\|_2,\, (1+\bar{\varepsilon})/\lambda\bigr)$.
Combining with Eq. \eqref{eq:D31} yields
\begin{equation}
L_2(\theta)
\le
C_{\rm marg}\prod_{\ell=1}^{L} B_\ell .
\label{eq:D32}
\end{equation}

\textbf{Step 3: From $\ell_2$-Lipschitzness to $\ell_p$-Lipschitzness via norm equivalence.}

For each $p \in \{1, 2, \infty\}$, let $c_p$ denote the norm-equivalence constant satisfying
\begin{equation}
\|\delta\|_2 \;\le\; c_p\, \|\delta\|_p,
\qquad \text{for all } \delta \in \mathbb{R}^d,
\label{eq:proof_union_robustness_step3}
\end{equation}
with $c_\infty = \sqrt{d}$, $c_2 = 1$, and $c_1 = 1$. Then for any
$\delta \in \mathbb{R}^d$ and any $p \in \mathcal{P}$,
\begin{equation}
\bigl| m_\theta(x + \delta, y) - m_\theta(x, y) \bigr|
\le
L_2(\theta)\,\|\delta\|_2
\le
L_2(\theta)\, c_p\, \|\delta\|_p
\le
\hat{L}_p(\theta)\, \|\delta\|_p,
\label{eq:D33}
\end{equation}
where
\[
\hat{L}_p(\theta)
:=
c_p\, C_{\rm marg}\prod_{\ell=1}^{L}B_\ell
\]
is the computable upper bound from Eq.~(\ref{eq:union_robustness_definition}), and the last inequality uses Eq. \eqref{eq:D32}.

\textbf{Step 4: Union robustness.}

Take any $\delta \in \Delta_{\mathrm{union}}
= \bigcup_{p \in \mathcal{P}} \{\delta : \|\delta\|_p \le \varepsilon_p\}$.
Then there exists some $p \in \mathcal{P}$ with $\|\delta\|_p \le \varepsilon_p$.
By \eqref{eq:D33},
\begin{equation}
m_\theta(x + \delta, y)
\ge
m_\theta(x, y) - \hat{L}_p(\theta)\, \|\delta\|_p
\ge
m_\theta(x, y) - \hat{L}_p(\theta)\, \varepsilon_p
\ge
m_\theta(x, y) - \max_{p' \in \mathcal{P}} \hat{L}_{p'}(\theta)\, \varepsilon_{p'}.
\end{equation}
Under the assumption of Theorem~\ref{thm:union_robustness},
$m_\theta(x, y) > \max_{p' \in \mathcal{P}} \hat{L}_{p'}(\theta)\, \varepsilon_{p'}$,
hence $m_\theta(x + \delta, y) > 0$. Since $\delta \in \Delta_{\mathrm{union}}$ was arbitrary,
$m_\theta(x + \delta, y) > 0$ for all $\delta \in \Delta_{\mathrm{union}}$. Thus, the claim follows.
\end{proof}
\subsection{Proof of Theorem~\ref{thm:adv_loss_refined}}
\label{subsec:proof:thm:adv_loss_refined}
\begin{proof}
Let $M$ denote the momentum-accumulated gradient used by Muon, and let $\widetilde U$ denote the inexact Newton--Schulz output satisfying
\begin{equation}
\|\widetilde U - \operatorname{Ortho}(M)\|_F \le \delta_{\mathrm{orth}},
\qquad
\|\widetilde U\|_2 \le 1+\varepsilon,
\qquad
\|\widetilde U\|_F^2 \le r(1+\varepsilon)^2,
\label{eq:D41-assumption}
\end{equation}
where $r=\operatorname{rank}(M)$, or any upper bound on it. Consider the actual
update $W^+ = W-\eta\,\widetilde U$. By Assumption~\ref{assumption:adv_loss}
($\beta$-smoothness of $F$ w.r.t.\ $\|\cdot\|_F$), we have
\begin{equation}
F(W^+)
\le
F(W)
-\eta\langle G,\widetilde U\rangle
+\frac{\beta}{2}\eta^2\|\widetilde U\|_F^2 .
\label{eq:D41-star}
\end{equation}

We decompose the inner product as
\begin{equation}
\langle G,\widetilde U\rangle
=
\underbrace{\langle M,\operatorname{Ortho}(M)\rangle}_{\textnormal{(i)}}
+
\underbrace{\langle G-M,\widetilde U\rangle}_{\textnormal{(ii)}}
+
\underbrace{\langle M,\widetilde U-\operatorname{Ortho}(M)\rangle}_{\textnormal{(iii)}} .
\label{eq:D41-decomp}
\end{equation}

\textbf{(i) Ideal inner-product term.}
By the same SVD argument as in the idealized case
(Lemma~\ref{lemma:polar_factor}), the polar factor satisfies
\begin{equation}
\langle M,\operatorname{Ortho}(M)\rangle
=
\|M\|_*,
\label{eq:D41-ideal-term}
\end{equation}
where $\|M\|_*=\sum_{i=1}^{r}\sigma_i(M)$ is the nuclear norm.

\textbf{(ii) Gradient--momentum mismatch.}
By H\"older's inequality for Schatten norms, i.e., the duality between
the nuclear norm and the spectral norm, we have
\begin{equation}
\langle G-M,\widetilde U\rangle
\ge
-\|G-M\|_*\,\|\widetilde U\|_2
\ge
-(1+\varepsilon)\|G-M\|_* .
\label{eq:D41-mismatch}
\end{equation}

\textbf{(iii) Orthogonalization error.}
By the Cauchy--Schwarz inequality for the Frobenius inner product,
\begin{equation}
\begin{aligned}
\bigl|
\langle M,\widetilde U-\operatorname{Ortho}(M)\rangle
\bigr|
&\le
\|M\|_F
\|\widetilde U-\operatorname{Ortho}(M)\|_F  \\
&\le
\|M\|_F\,\delta_{\mathrm{orth}}  \\
&\le
\sqrt r\,\|M\|_2\,\delta_{\mathrm{orth}},
\end{aligned}
\label{eq:D41-orth-error}
\end{equation}
where the last step uses $\|M\|_F\le \sqrt r\,\|M\|_2$.

\textbf{Combining (i)--(iii).}
Substituting Eq. \eqref{eq:D41-ideal-term}, Eq. \eqref{eq:D41-mismatch}, and
Eq. \eqref{eq:D41-orth-error} into Eq. \eqref{eq:D41-decomp} gives
\begin{equation}
\langle G,\widetilde U\rangle
\ge
\|M\|_*
-
(1+\varepsilon)\|G-M\|_*
-
\sqrt r\,\|M\|_2\,\delta_{\mathrm{orth}} .
\label{eq:D41-inner-lower-bound}
\end{equation}
Plugging Eq. \eqref{eq:D41-inner-lower-bound} into \eqref{eq:D41-star} and using
$\|\widetilde U\|_F^2\le r(1+\varepsilon)^2$, we obtain
\begin{equation}
\begin{aligned}
F(W^+)
\le\;&
F(W)
-\eta\|M\|_*
+\eta(1+\varepsilon)\|G-M\|_*  \\
&\quad
+\eta\sqrt r\,\|M\|_2\,\delta_{\mathrm{orth}}
+\frac{\beta}{2}\eta^2 r(1+\varepsilon)^2 .
\end{aligned}
\end{equation}
This is exactly the descent inequality of
Theorem~\ref{thm:adv_loss_refined}. The proof is complete.
\end{proof}
\section{Spectral Analysis: Spectral Norms and Condition Numbers} 
\label{sec:spectral_norm}
\par As a lightweight diagnostic of the optimization-side mechanism of Muon on spectral conditioning and robustness, we track the mean per-layer condition number $\kappa = \sigma_{\max}/\sigma_{\min}$ every 5 epochs throughout AT on PreActResNet-18 (Figure~\ref{fig:condition_number_curve}). Muon maintains a low and stable $\kappa \approx 5$ throughout training, comparable to SGD, while AdamW's $\kappa$ grows rapidly to $\approx 50$ within the first 30 epochs. Correlating $\kappa$ with robust accuracy (Figure~\ref{fig:kappa_vs_robust}) reveals two distinct regimes: SGD and Muon concentrate in the low $\kappa$, high-robustness region, while AdamW occupies the high $\kappa$, lower-robustness region. This phenomenon suggests that what determines robustness may not be purely ``Jacobian scale'', but rather ``Jacobian geometry'', that is, whether the distribution of singular values is balanced. This supports the interpretation that spectral conditioning acts as a \emph{regularizer for the optimization dynamics} of AT. A well-conditioned weight matrix propagates gradients more uniformly during backpropagation, mitigating the vanishing/exploding gradient issues that otherwise destabilize AT. These diagnostics show that SGD often achieves the smallest absolute spectral norm, consistent with its strong robustness under AT. Muon does not uniformly dominate SGD in terms of raw spectral-norm magnitude. Instead, its spectral behavior is consistently closer to SGD than to AdamW, and it avoids the large spectral growth observed for AdamW in several settings. Therefore, our claim should be interpreted as the stability ceiling rather than the strict minimization of weight spectral norms. This also explains why SGD remains a very strong baseline, while Muon provides a more stable alternative to AdamW.
\begin{figure}[!t]
    \centering
   \includegraphics[width=0.6\hsize]{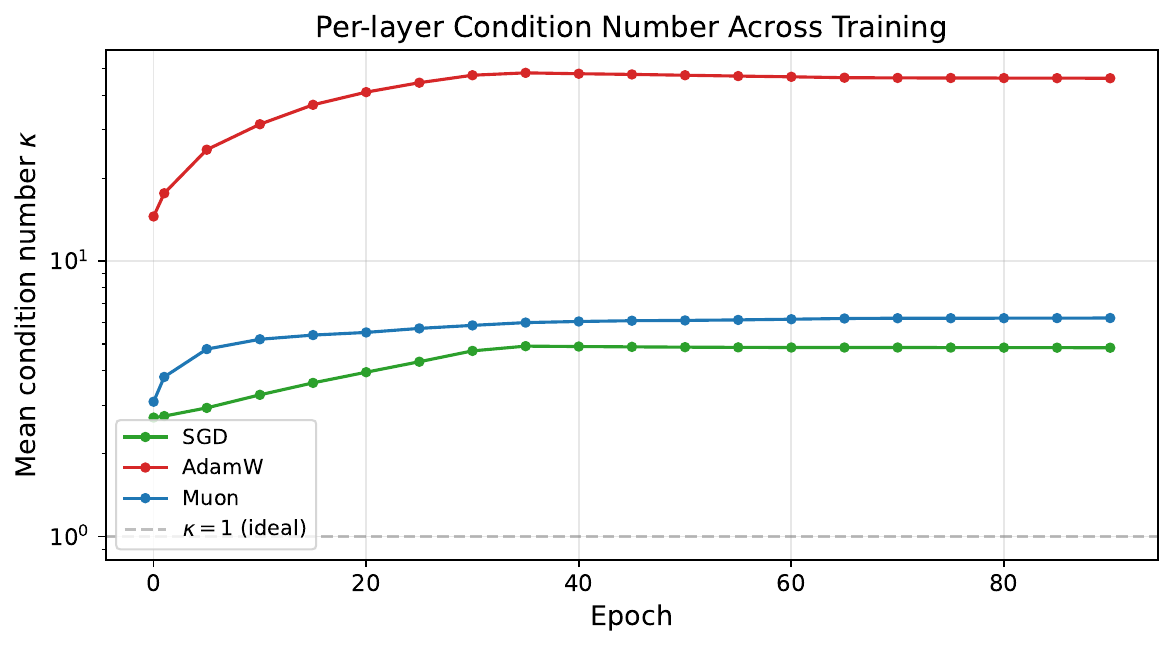}
    \caption{Per-layer condition number change during training.}
    \label{fig:condition_number_curve}
\end{figure}
\begin{figure}[!t]
    \centering
    \includegraphics[width=0.6\hsize]{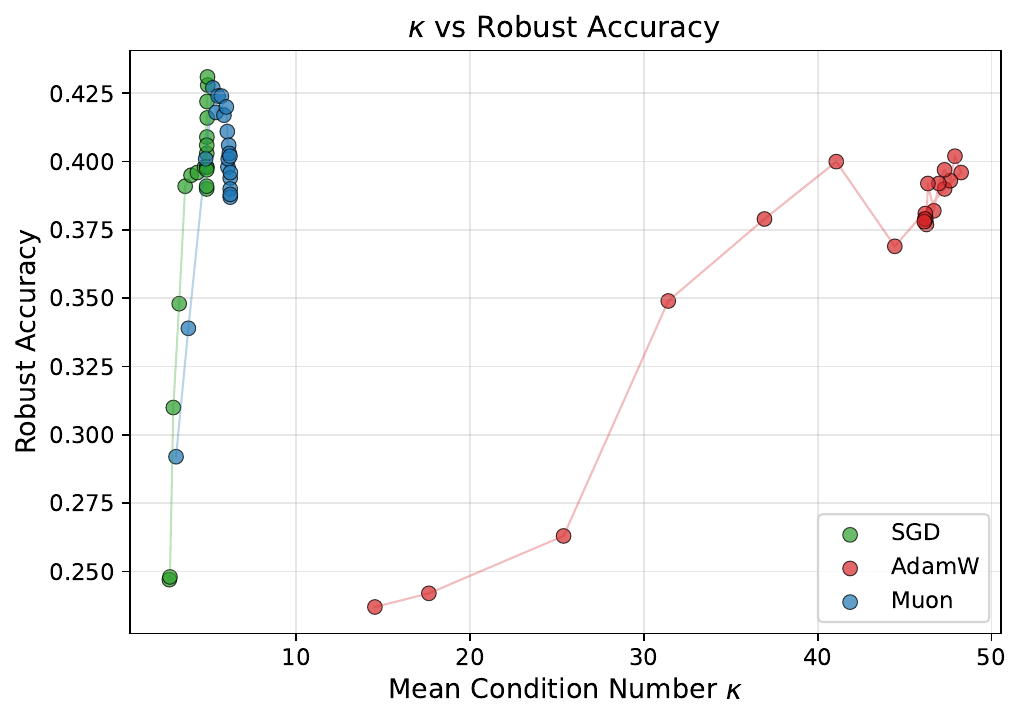}
    \caption{Spectral conditioning and robustness.}
    \label{fig:kappa_vs_robust}
\end{figure}
\par To probe the empirical tightness of the spectral-norm bound, we train PreActResNet-18, WRN-34-10, and ViT-B for 100 epochs on CIFAR-10 using the multi-norm $\ell_\infty \ + \ \ell_{1}$ pipeline (APGD-10 attack, piecewise LR schedule, $\lambda=5 \times 10^{-4}$). Fig. \ref{fig:spectral_norms_AT} shows per-epoch median and max $\| \mathrm{W}_ \mathrm{\ell}\|_{2}$ trajectories under each optimizer. Three observations support Theorem \ref{thm:upper_bound_spectral_norm}'s interpretation as a stability distinction rather than a tight bound. First, the ceiling holds universally with substantial slack. Across all 9 (model, optimizer) combinations, max $\| \mathrm{W}_ \mathrm{\ell}\|_{2}$ stays within a single decade (around 1–55), three to four orders of magnitude below the theoretical ceiling $\mathrm{B}=(1+\bar{\varepsilon}) / \lambda \approx 2200$. The bound is loose by design: it certifies that Muon's polar updates cannot induce unbounded spectral growth, not that $\| \mathrm{W}_ \mathrm{\ell}\|_{2}$ approaches $B$. Second, Muon and SGD plateau by epoch 35 on every architecture. Both optimizers exhibit the LR-schedule-driven plateau predicted by Theorem \ref{thm:upper_bound_spectral_norm} that spectral norms grow during the high-LR phase and stabilize once $\eta$ is decayed at epoch 33. The plateau is architecture-invariant in qualitative shape, though absolute values differ across model sizes. Third, AdamW's spectral behavior is regime-dependent. AdamW achieves the smallest spectral norms on PreActResNet-18, but the largest on WRN-34-10 and ViT-B. This non-monotonicity reflects the per-parameter scaling mechanism of AdamW: the update magnitude is governed by $m_t / (\sqrt{v_t} + \epsilon_\mathrm{AdamW})$, where $v_t$ is the gradient second-moment Exponential Moving Average (EMA). When the model is not large (PreActResNet-18), gradient directions are repetitive and $v_t$ saturates rapidly, making AdamW behave as approximately constant-step SGD with bounded updates. When the model is large enough that gradient statistics become heterogeneous across layers and directions (WRN-34-10), some directions develop near-zero $v_t$, amplifying the effective update without bound.

\begin{figure*}[!h]
\centering
\begin{subfigure}[t]{0.33\textwidth}
  \centering
  \includegraphics[width=\linewidth]{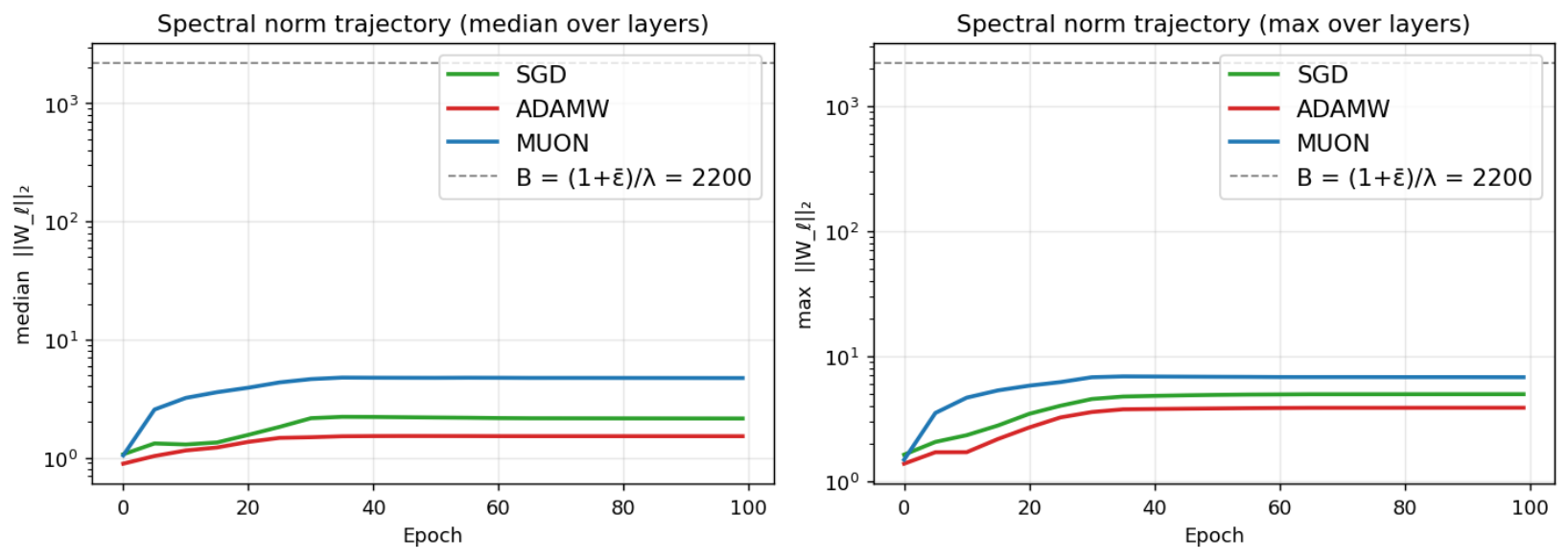}
  \caption{PreActResNet-18}
\end{subfigure}\hfill
\begin{subfigure}[t]{0.33\textwidth}
  \centering
  \includegraphics[width=\linewidth]{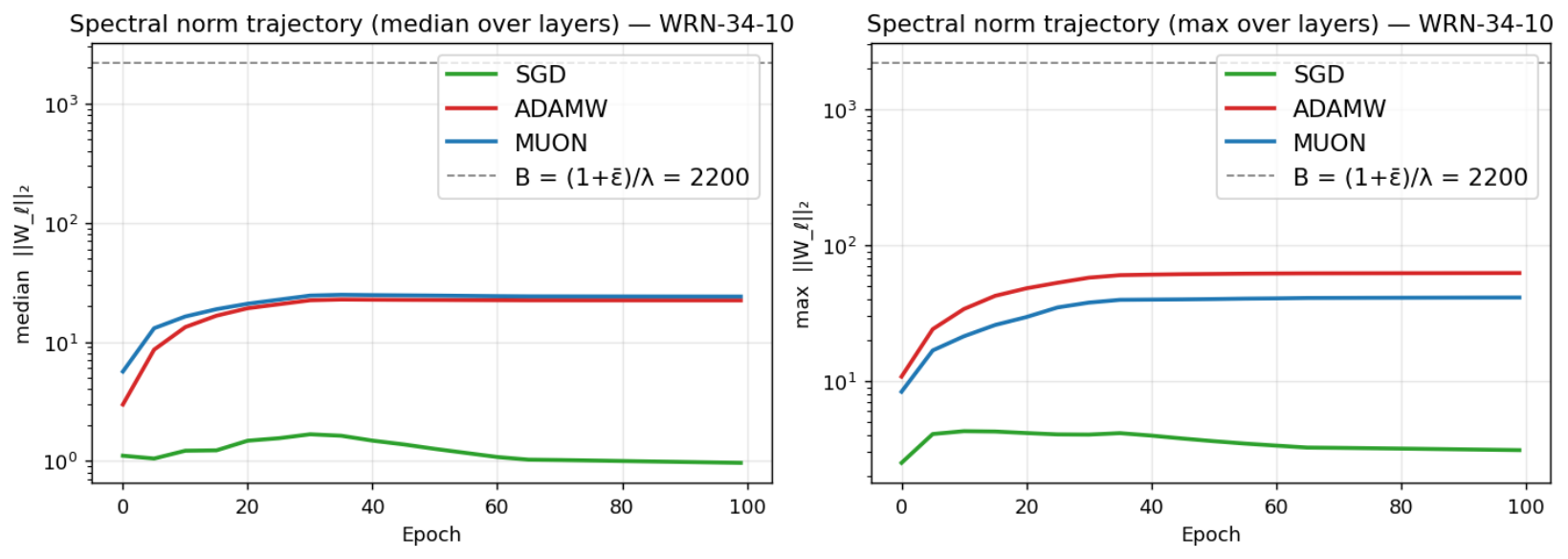}
  \caption{WRN-34-10}
\end{subfigure}\hfill
\begin{subfigure}[t]{0.33\textwidth}
  \centering
  \includegraphics[width=\linewidth]{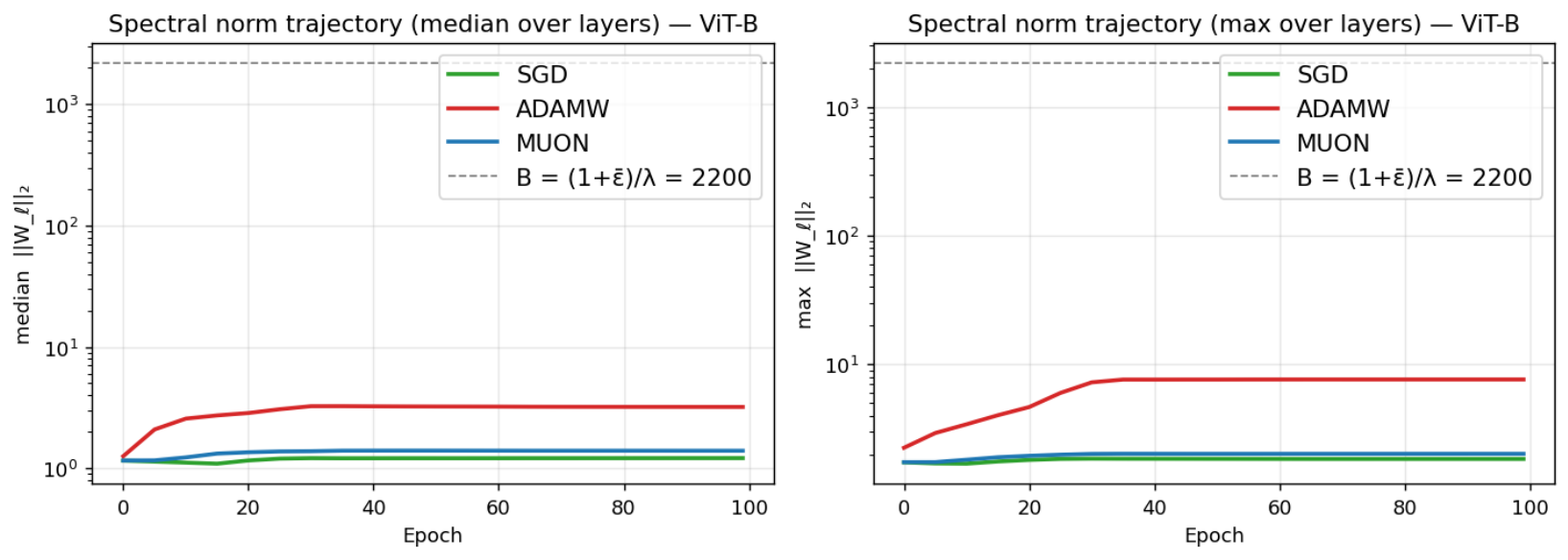}
  \caption{VIT-B}
\end{subfigure}
\caption{The spectral norm change with different optimizers during AT.}
\label{fig:spectral_norms_AT}
\end{figure*}
\section{Loss of Different Optimizers}
\label{sec:loss_fig}
\par Figures~\ref{fig:loss_preresnet}, \ref{fig:loss_wrn_34_10}, \ref{fig:loss_wrn_34_20}, \ref{fig:loss_vit_b}, and \ref{fig:loss_vit_l} depict the training loss curves during AT on CIFAR-10~\cite{krizhevsky2009learning} with different optimizers on PreActResNet-18~\cite{he2016identity}, WRN-34-10~\cite{zagoruyko2016wide}, WRN-34-20~\cite{zagoruyko2016wide}, ViT-B~\cite{dosovitskiy2021image}, and ViT-L~\cite{dosovitskiy2021image}. Across architectures, Muon converges faster and yields a smoother training-loss curve than both SGD and AdamW. These empirical observations are consistent with the nuclear-norm descent guarantee in Theorem~\ref{thm:adv_loss_refined}. In contrast, on the ViT architecture~\cite{dosovitskiy2021image}, it is difficult to use an adaptive optimizer such as AdamW for AT. When AdamW exhibits numerical instability on ViTs, we report the resulting evaluation accuracies in the main tables rather than marking the runs as ``failed''. Such cases are described as severe degradation or near-random performance when the reported robust accuracies are close to chance level.
\begin{figure*}[!h]
\centering
\begin{subfigure}[t]{0.245\textwidth}
  \centering
  \includegraphics[width=\linewidth]{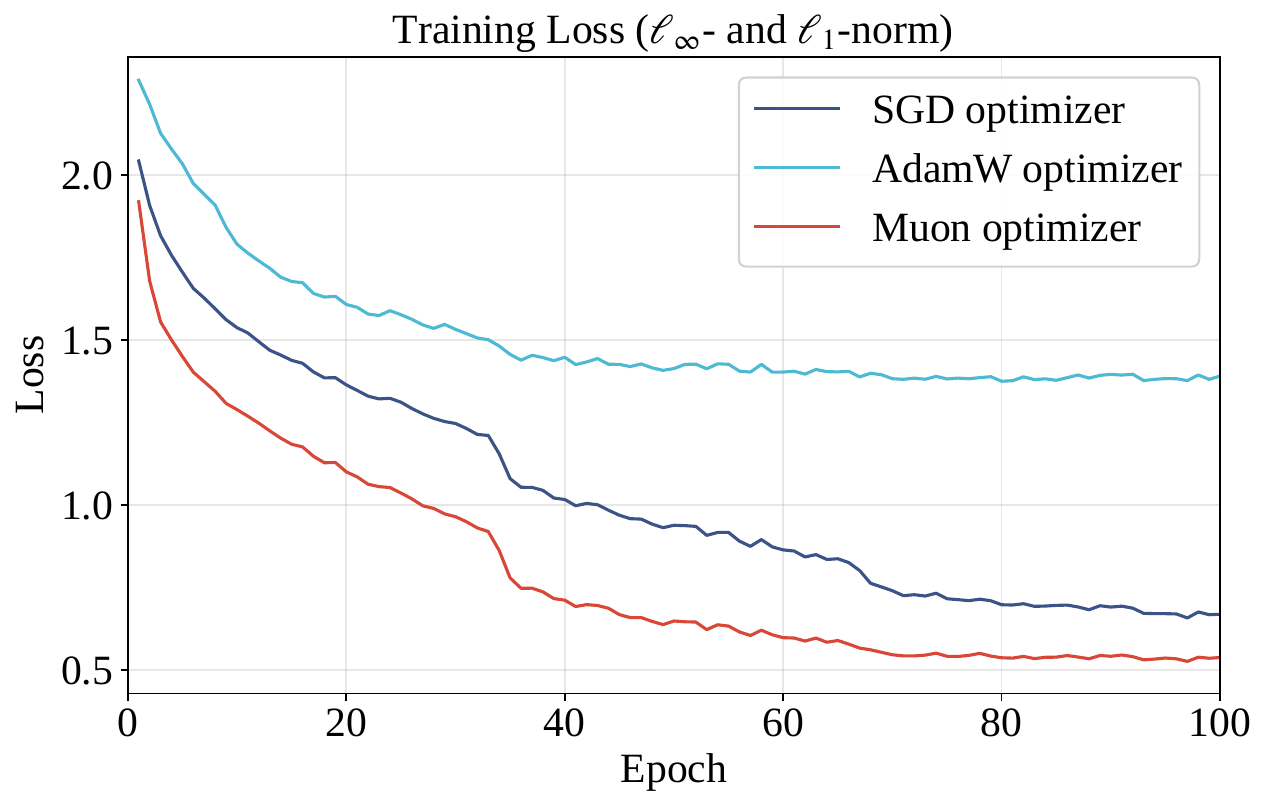}
  \caption{$\ell_\infty + \ell_1$}
\end{subfigure}\hfill
\begin{subfigure}[t]{0.245\textwidth}
  \centering
  \includegraphics[width=\linewidth]{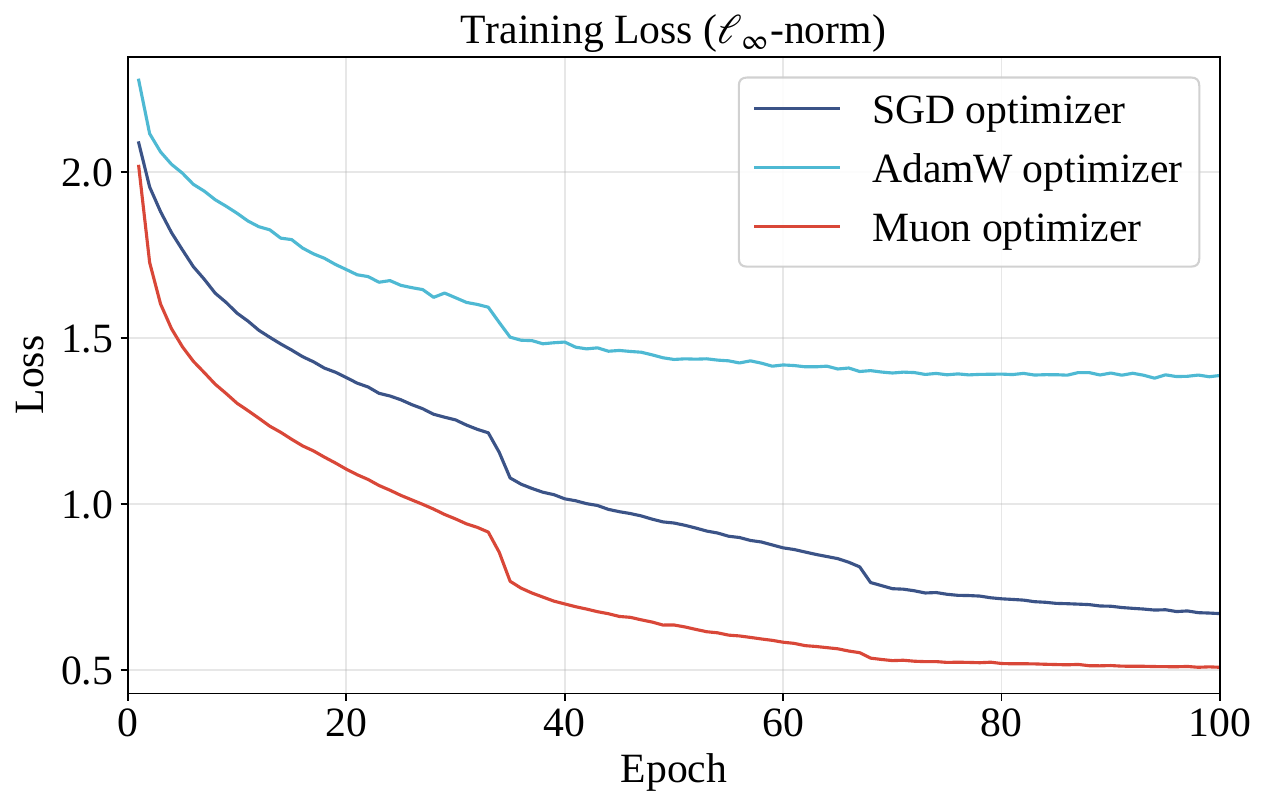}
  \caption{$\ell_\infty$}
\end{subfigure}\hfill
\begin{subfigure}[t]{0.245\textwidth}
  \centering
  \includegraphics[width=\linewidth]{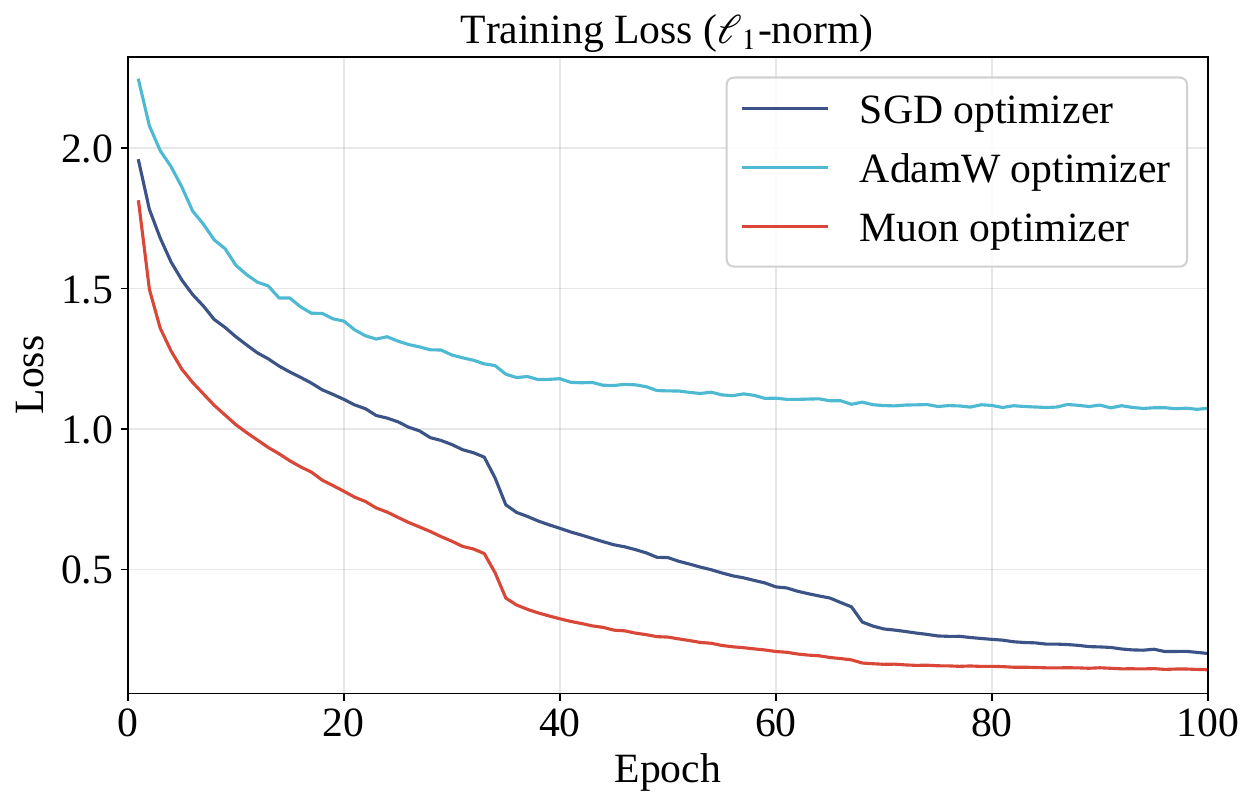}
  \caption{$\ell_1$}
\end{subfigure}\hfill
\begin{subfigure}[t]{0.245\textwidth}
  \centering
  \includegraphics[width=\linewidth]{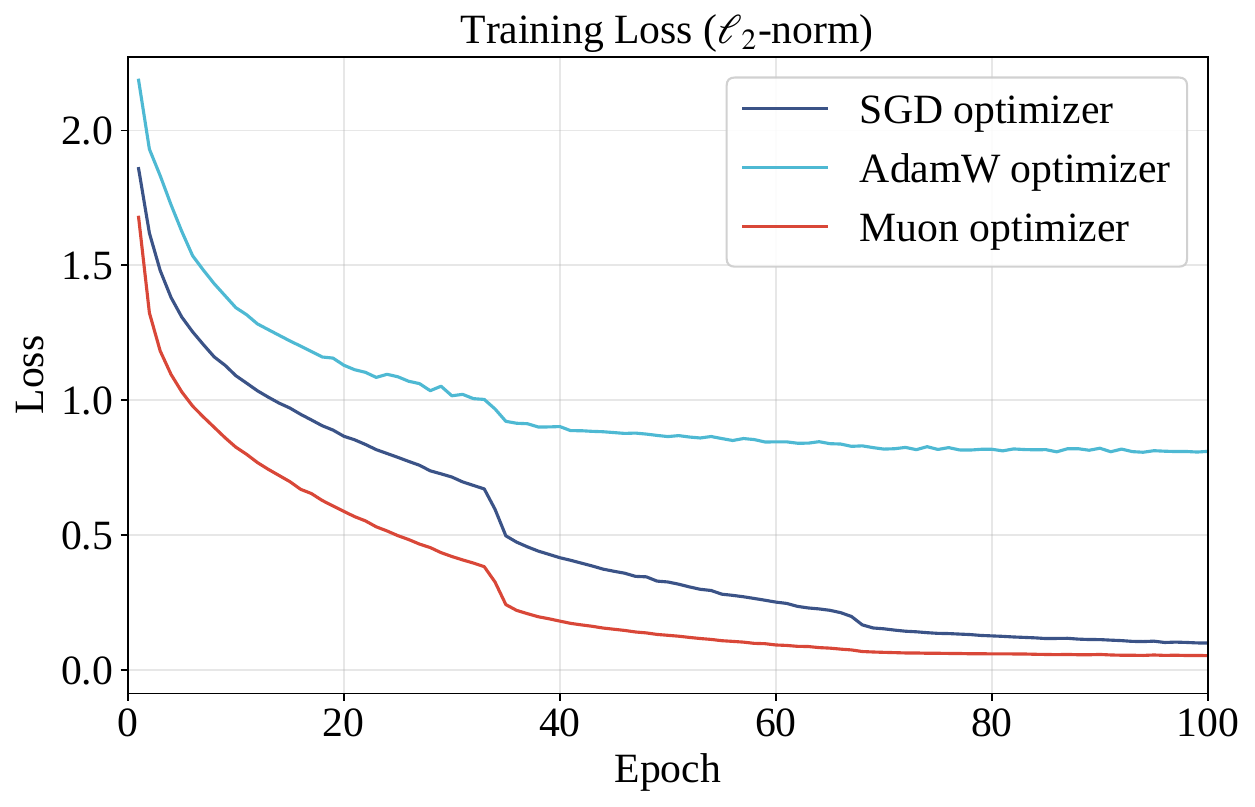}
  \caption{$\ell_2$}
\end{subfigure}
\caption{Training loss on PreActResNet-18~\cite{he2016identity}.}
\label{fig:loss_preresnet}
\end{figure*}
\begin{figure*}[!h]
\centering
\begin{subfigure}[t]{0.245\textwidth}
  \centering
  \includegraphics[width=\linewidth]{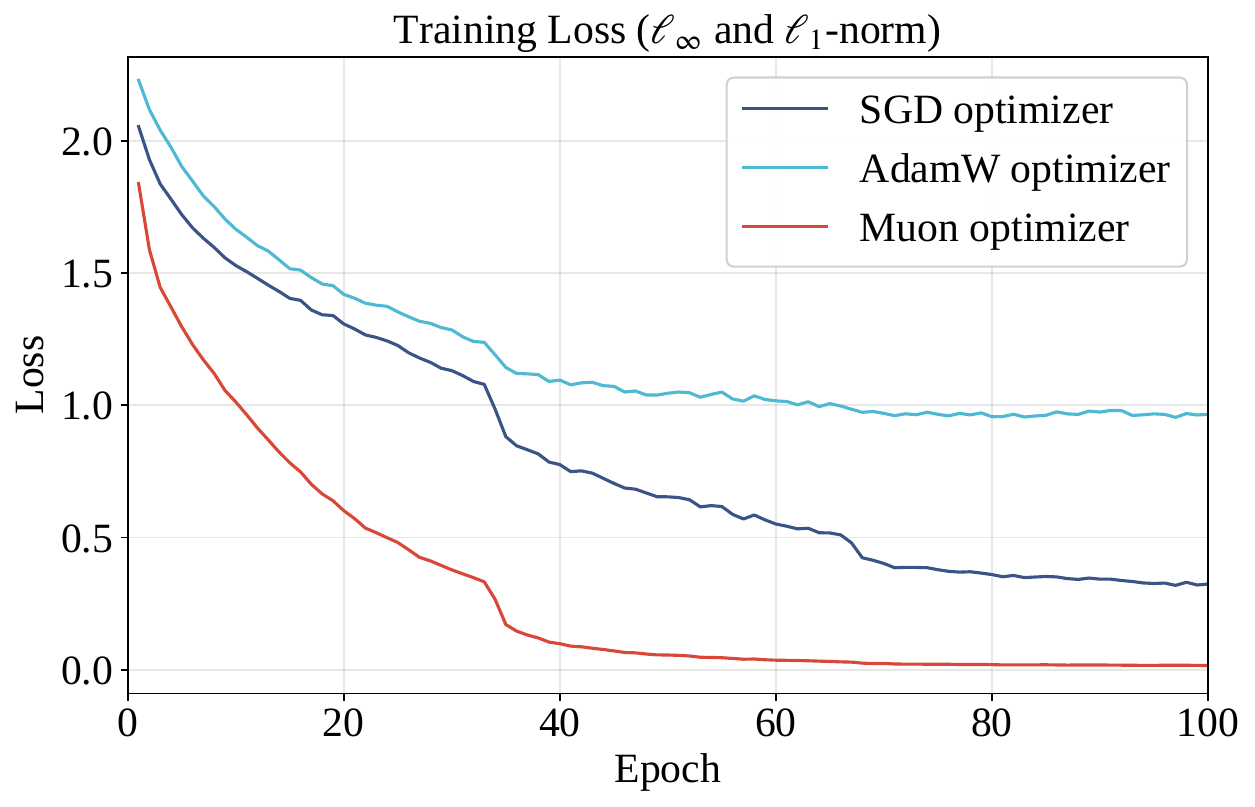}
  \caption{$\ell_\infty + \ell_1$}
\end{subfigure}\hfill
\begin{subfigure}[t]{0.245\textwidth}
  \centering
  \includegraphics[width=\linewidth]{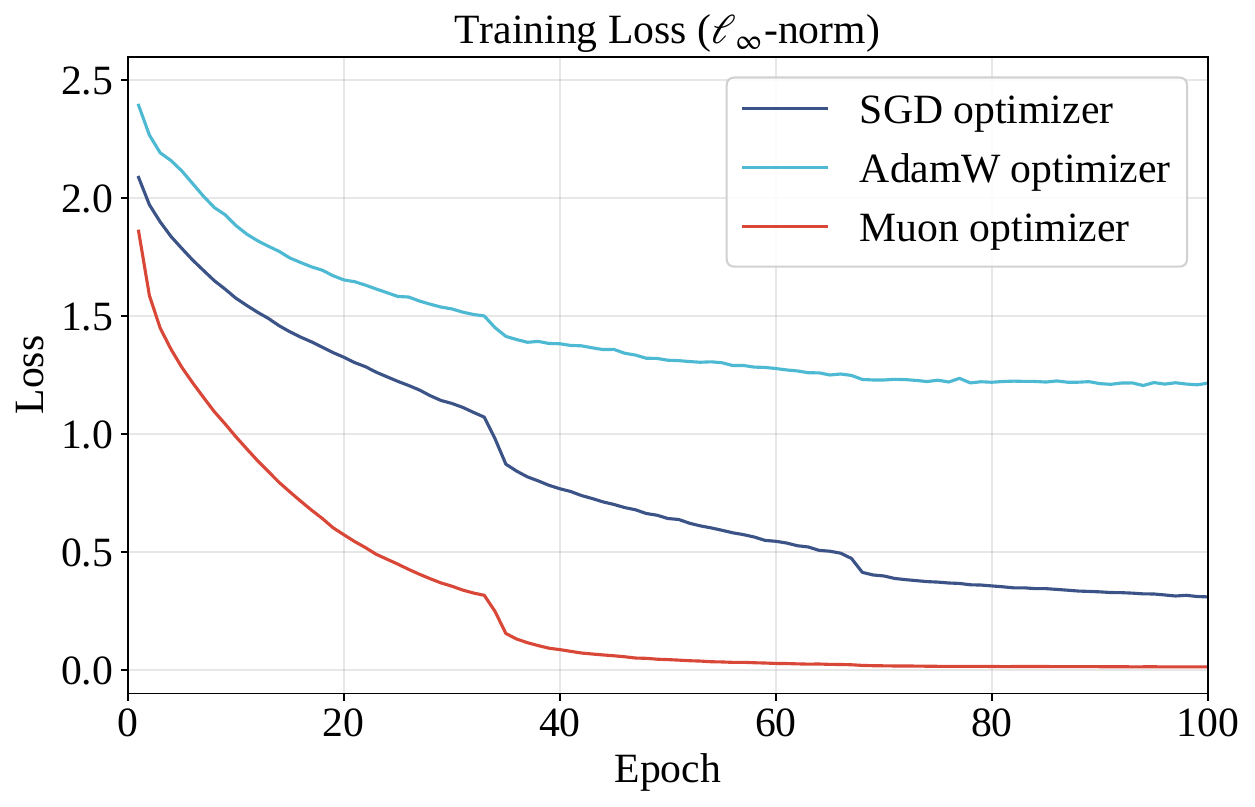}
  \caption{$\ell_\infty$}
\end{subfigure}\hfill
\begin{subfigure}[t]{0.245\textwidth}
  \centering
  \includegraphics[width=\linewidth]{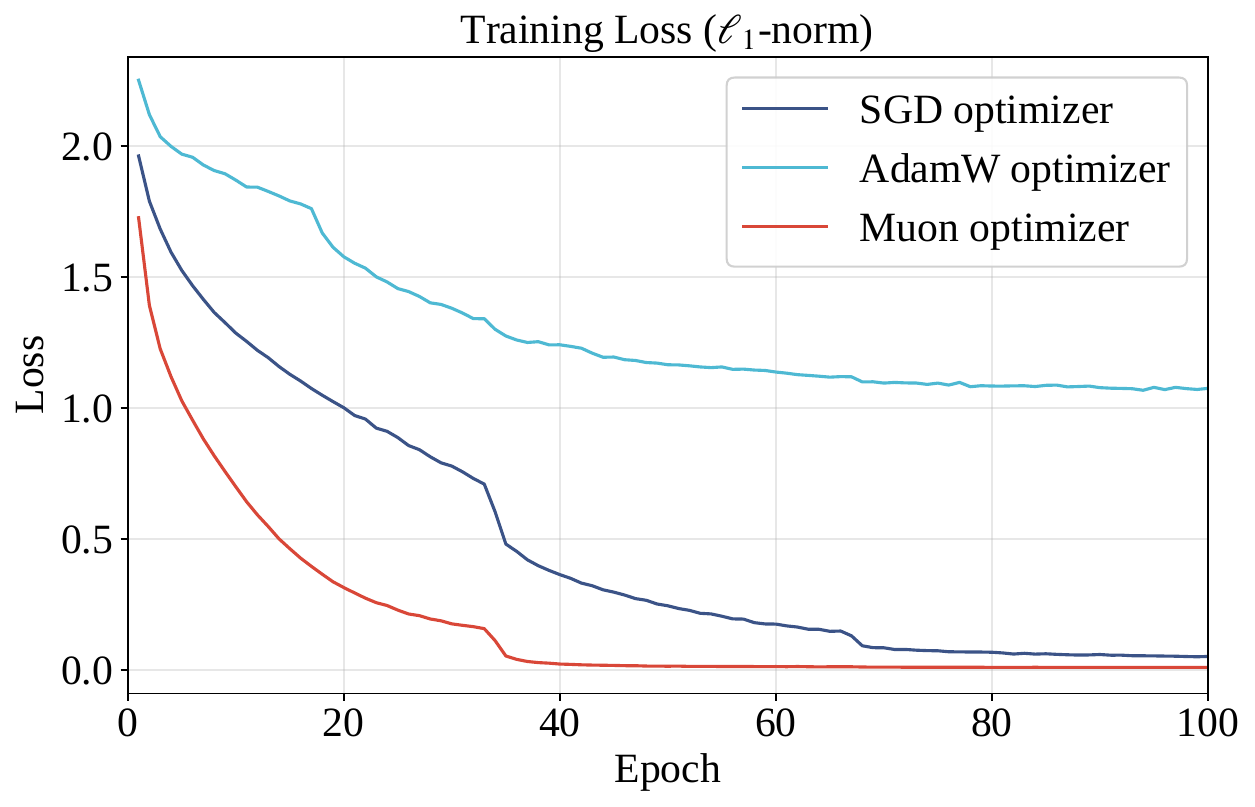}
  \caption{$\ell_1$}
\end{subfigure}\hfill
\begin{subfigure}[t]{0.245\textwidth}
  \centering
  \includegraphics[width=\linewidth]{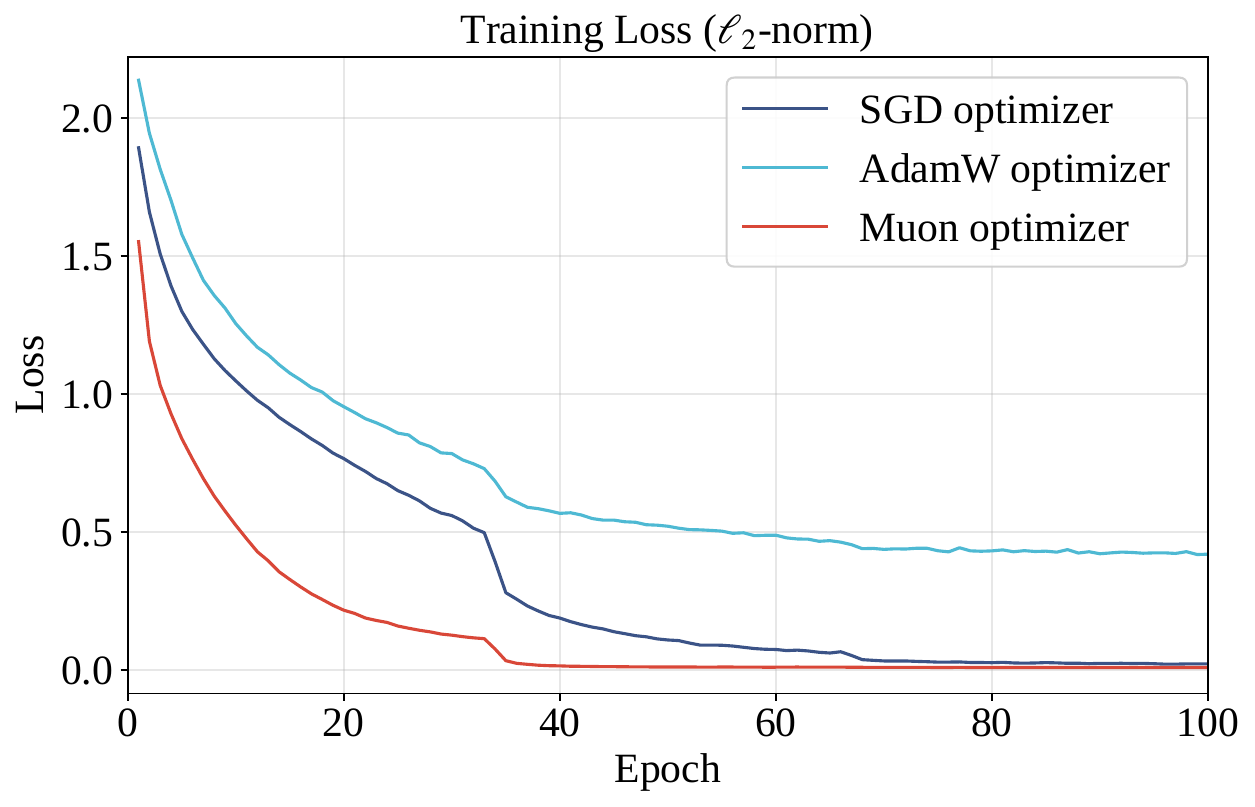}
  \caption{$\ell_2$}
\end{subfigure}
\caption{Training loss on WRN-34-10~\cite{zagoruyko2016wide}.}
\label{fig:loss_wrn_34_10}
\end{figure*}
\begin{figure*}[!t]
\centering
\begin{subfigure}[t]{0.245\textwidth}
  \centering
  \includegraphics[width=\linewidth]{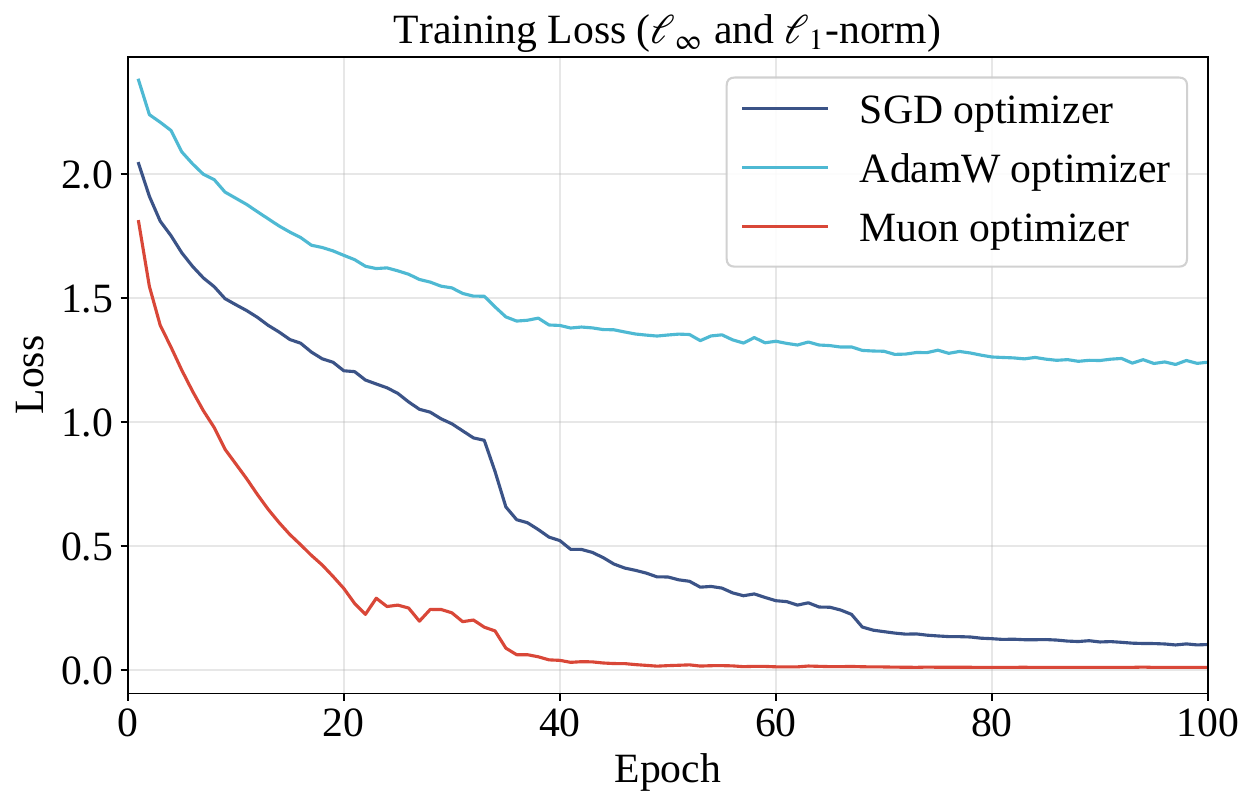}
  \caption{$\ell_\infty + \ell_1$}
\end{subfigure}\hfill
\begin{subfigure}[t]{0.245\textwidth}
  \centering
  \includegraphics[width=\linewidth]{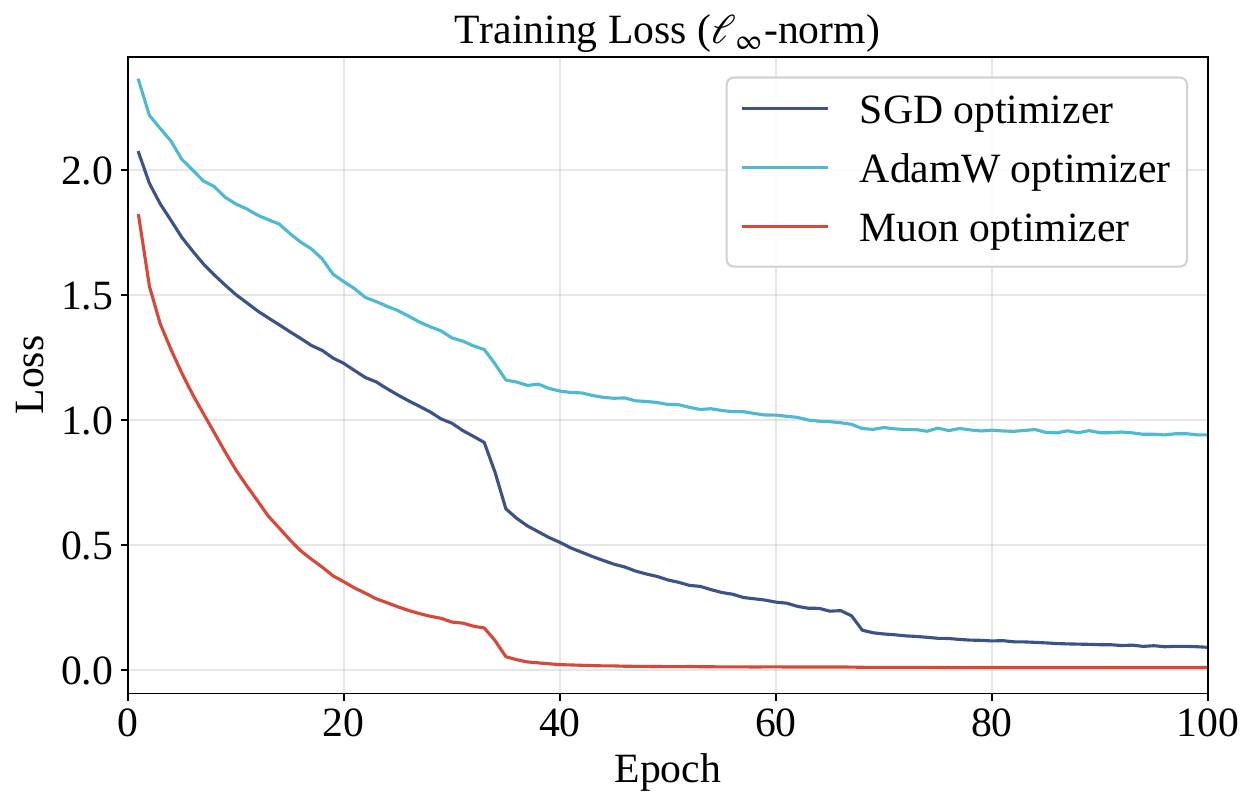}
  \caption{$\ell_\infty$}
\end{subfigure}\hfill
\begin{subfigure}[t]{0.245\textwidth}
  \centering
  \includegraphics[width=\linewidth]{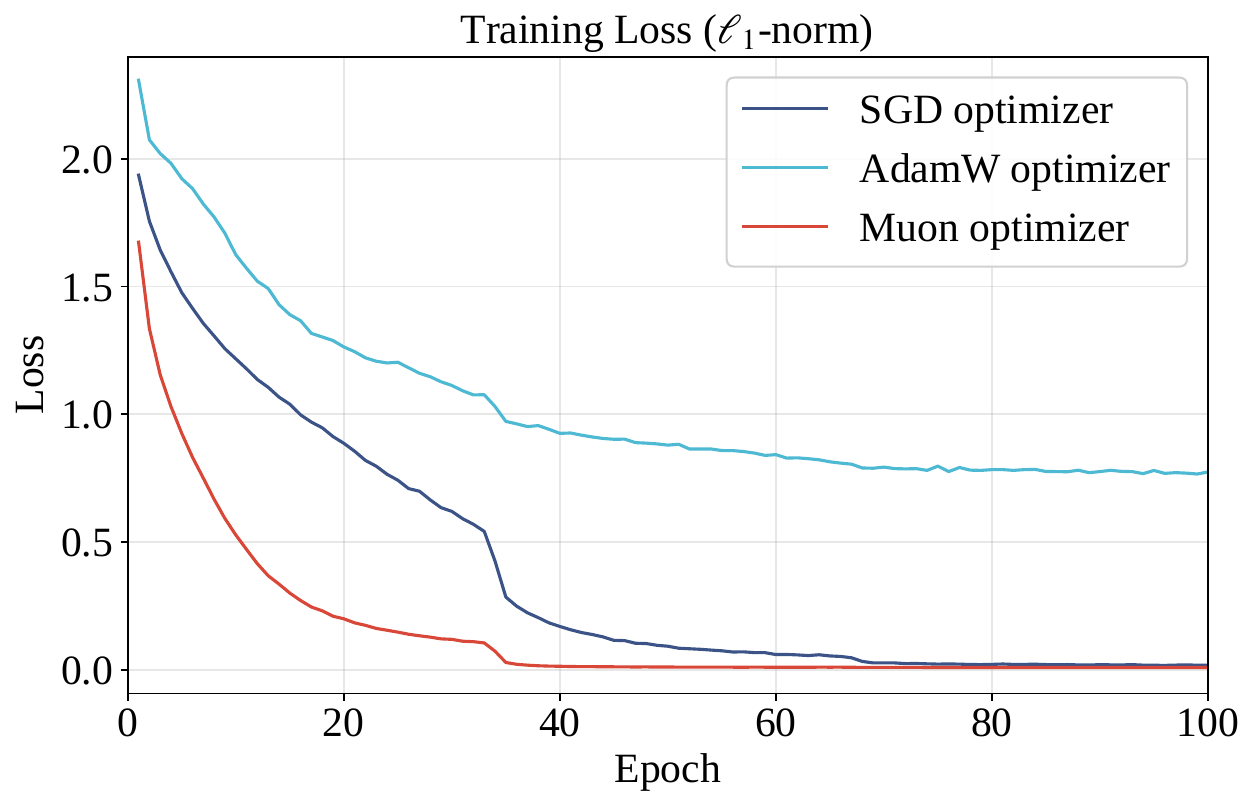}
  \caption{$\ell_1$}
\end{subfigure}\hfill
\begin{subfigure}[t]{0.245\textwidth}
  \centering
  \includegraphics[width=\linewidth]{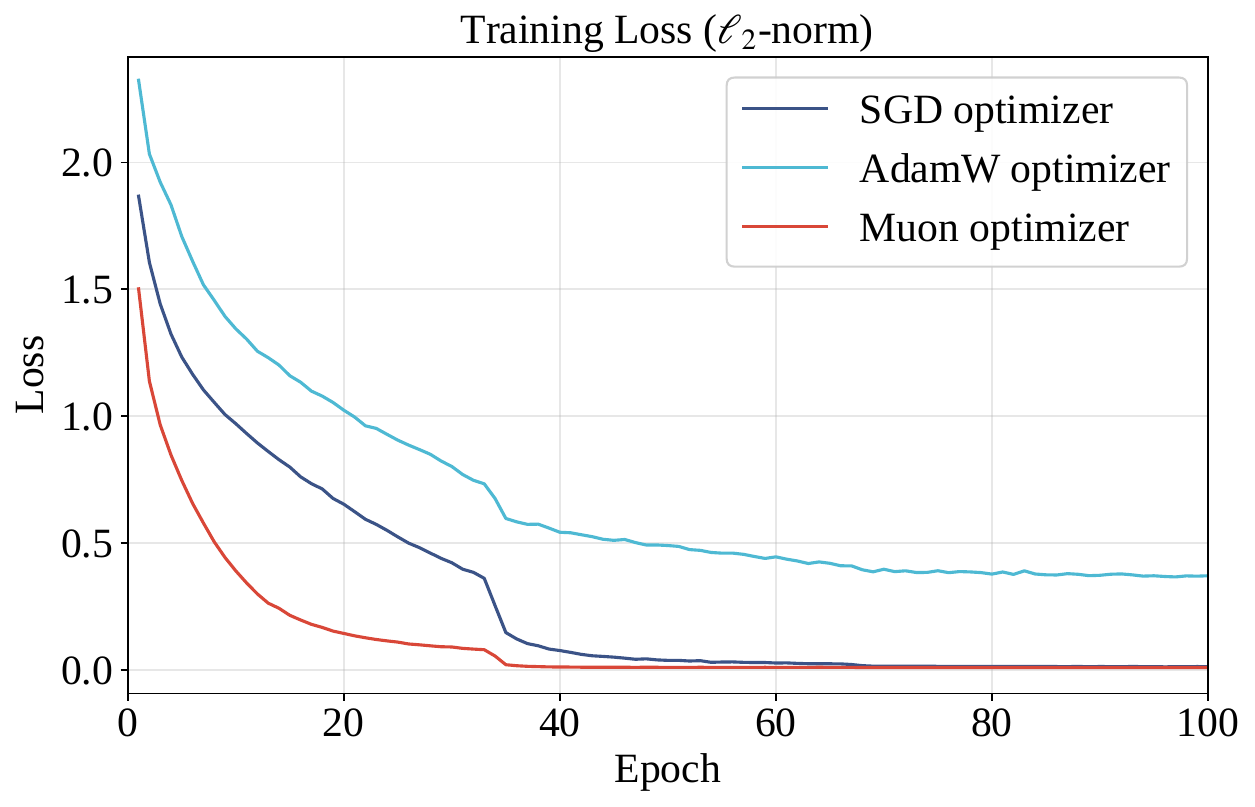}
  \caption{$\ell_2$}
\end{subfigure}
\caption{Training loss on WRN-34-20~\cite{zagoruyko2016wide}.}
\label{fig:loss_wrn_34_20}
\end{figure*}
\begin{figure*}[!t]
\centering
\begin{subfigure}[t]{0.245\textwidth}
  \centering
  \includegraphics[width=\linewidth]{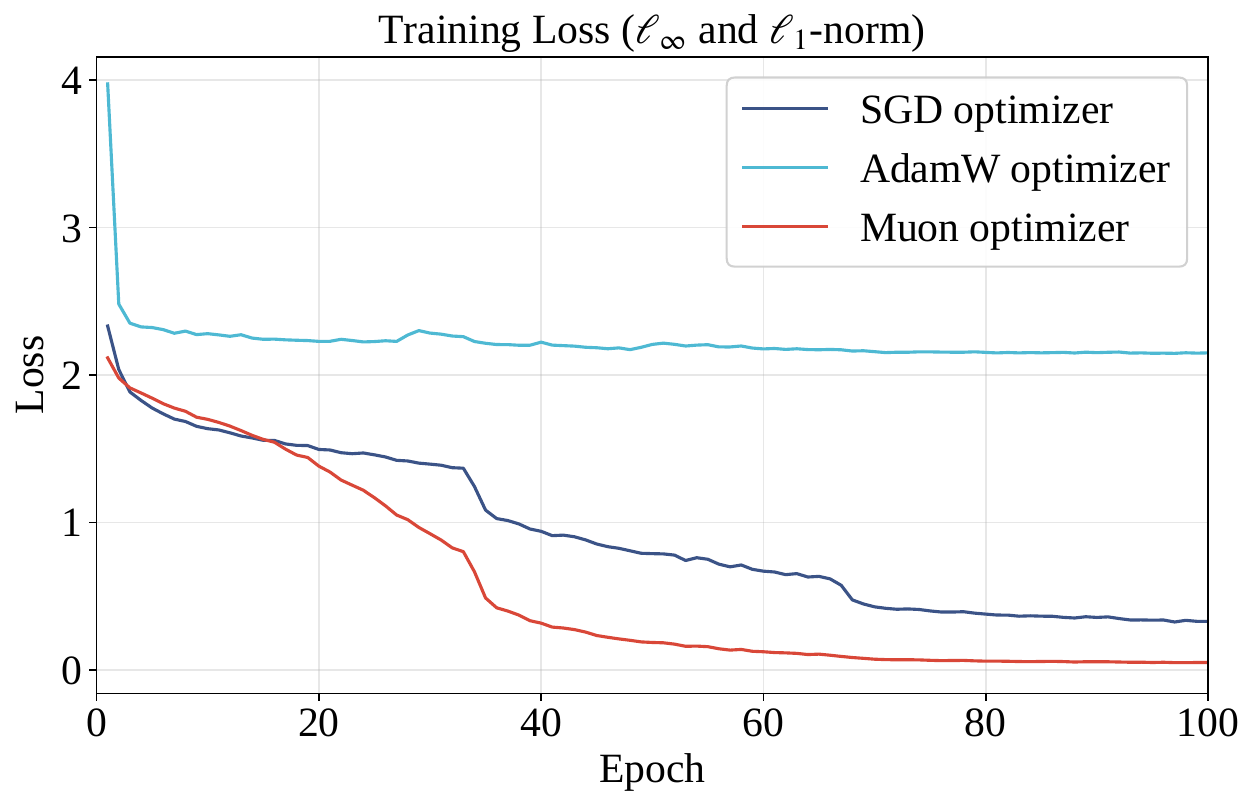}
  \caption{$\ell_\infty + \ell_1$}
\end{subfigure}\hfill
\begin{subfigure}[t]{0.245\textwidth}
  \centering
  \includegraphics[width=\linewidth]{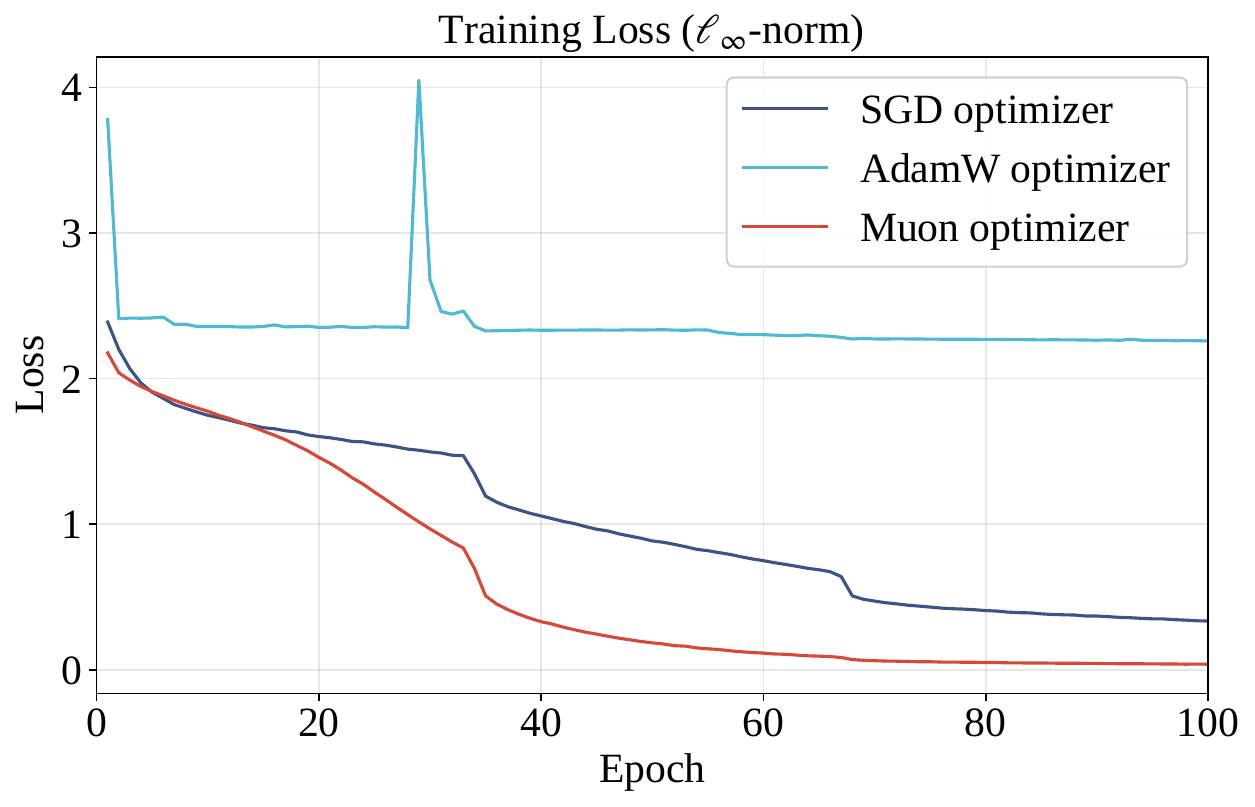}
  \caption{$\ell_\infty$}
\end{subfigure}\hfill
\begin{subfigure}[t]{0.245\textwidth}
  \centering
  \includegraphics[width=\linewidth]{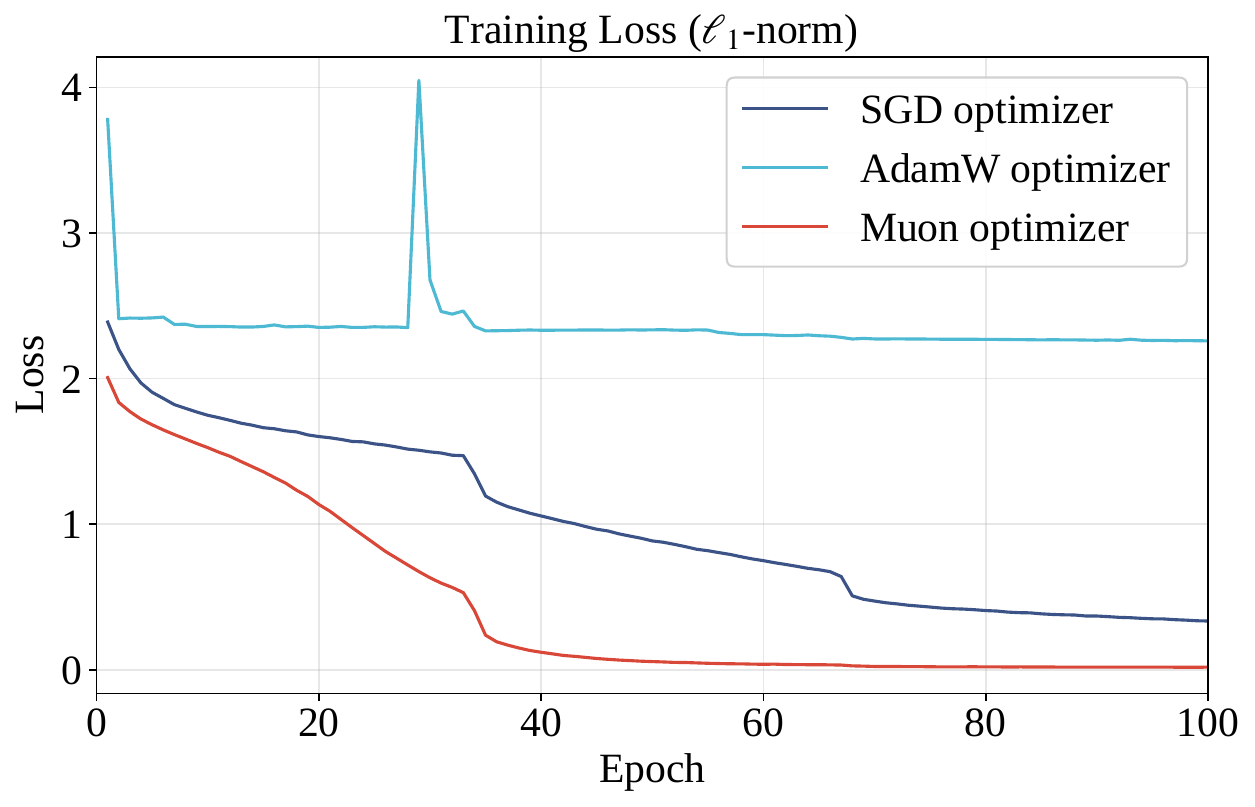}
  \caption{$\ell_1$}
\end{subfigure}\hfill
\begin{subfigure}[t]{0.245\textwidth}
  \centering
  \includegraphics[width=\linewidth]{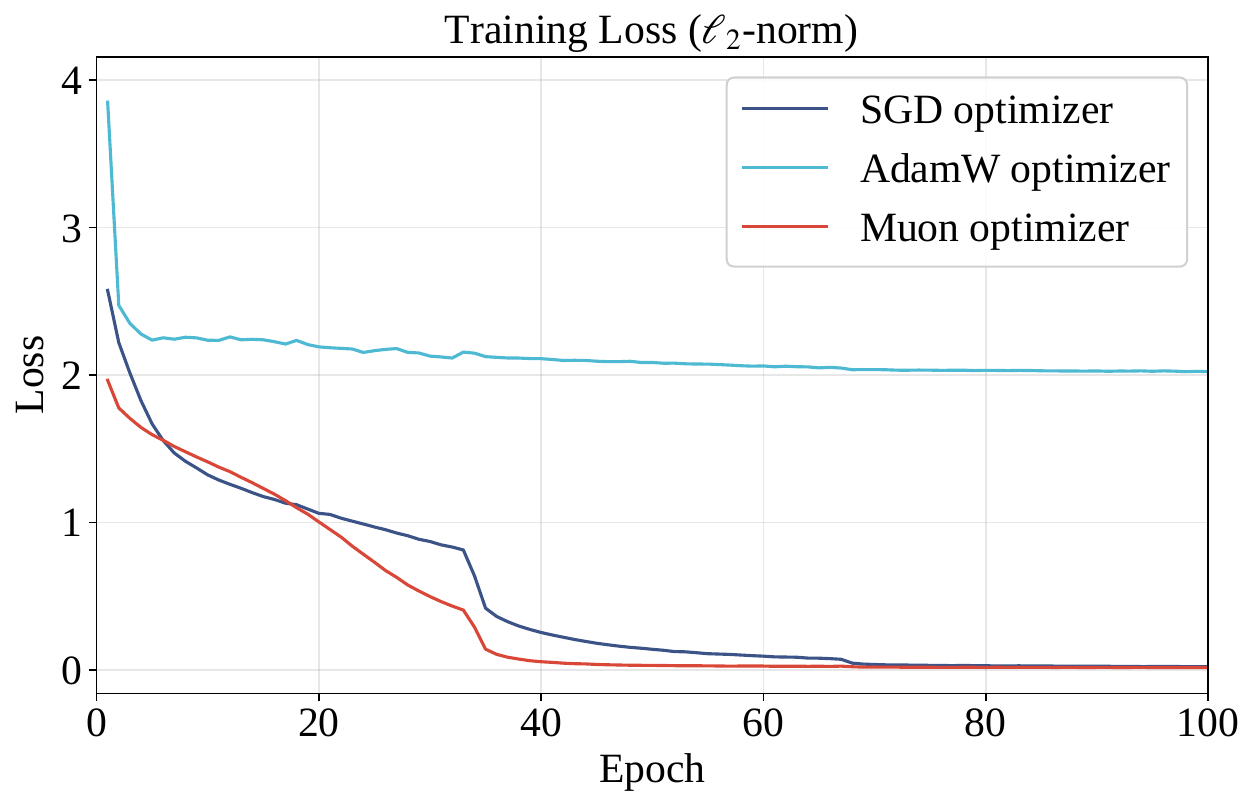}
  \caption{$\ell_2$}
\end{subfigure}
\caption{Training loss on ViT-B~\cite{dosovitskiy2021image}.}
\label{fig:loss_vit_b}
\end{figure*}
\begin{figure*}[!t]
\centering
\begin{subfigure}[t]{0.245\textwidth}
  \centering
  \includegraphics[width=\linewidth]{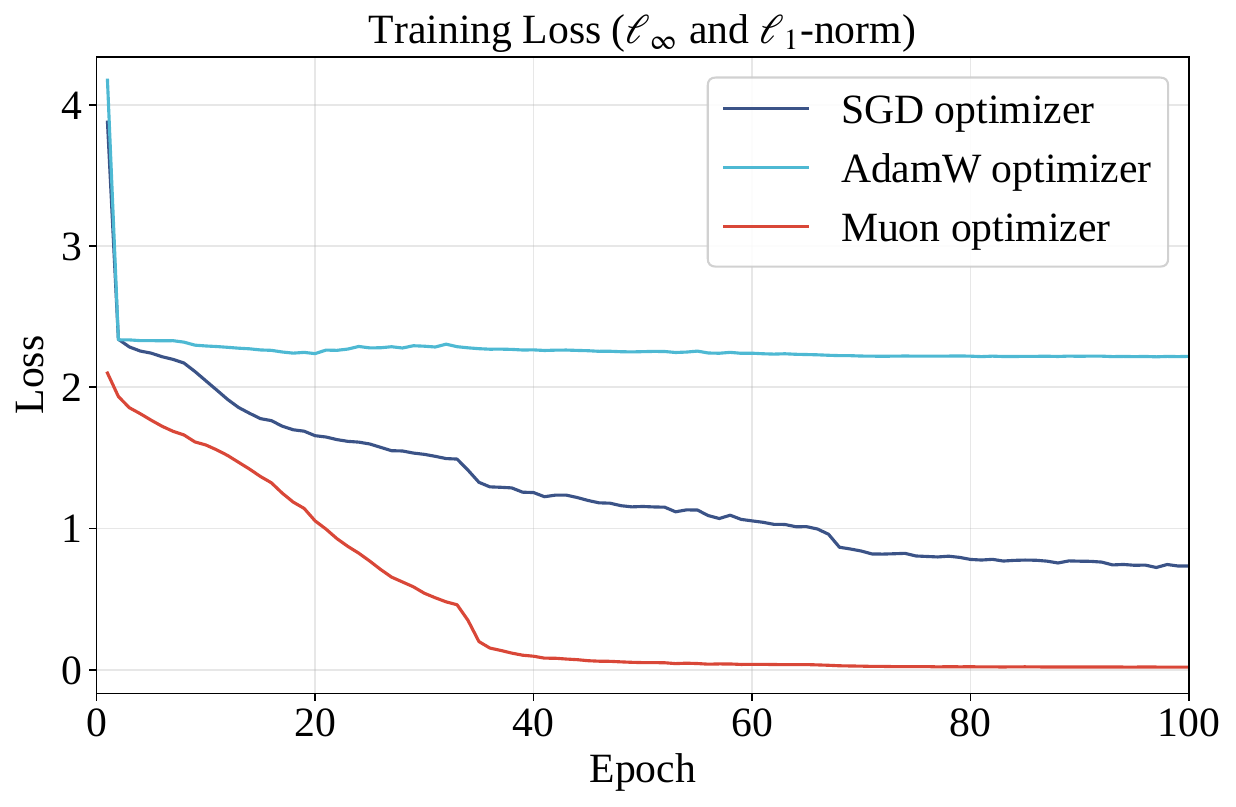}
  \caption{$\ell_\infty + \ell_1$}
\end{subfigure}\hfill
\begin{subfigure}[t]{0.245\textwidth}
  \centering
  \includegraphics[width=\linewidth]{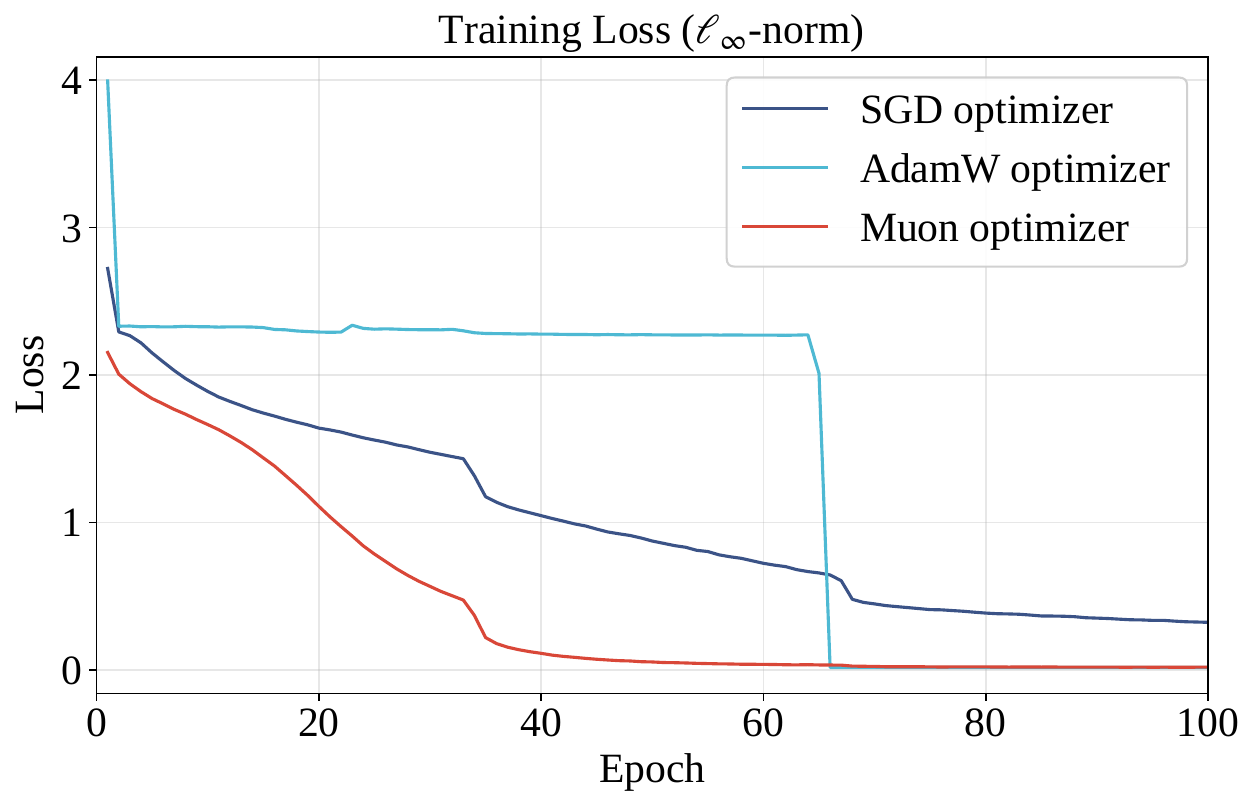}
  \caption{$\ell_\infty$}
\end{subfigure}\hfill
\begin{subfigure}[t]{0.245\textwidth}
  \centering
  \includegraphics[width=\linewidth]{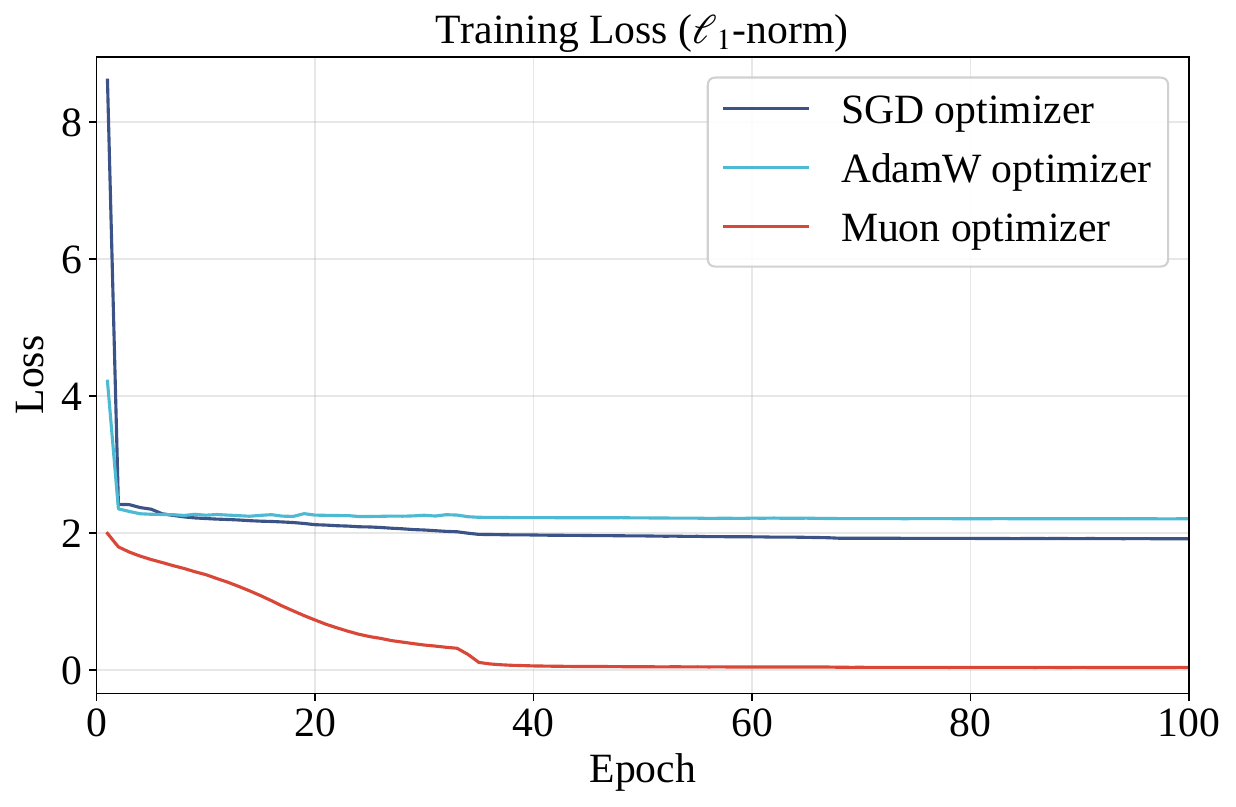}
  \caption{$\ell_1$}
\end{subfigure}\hfill
\begin{subfigure}[t]{0.245\textwidth}
  \centering
  \includegraphics[width=\linewidth]{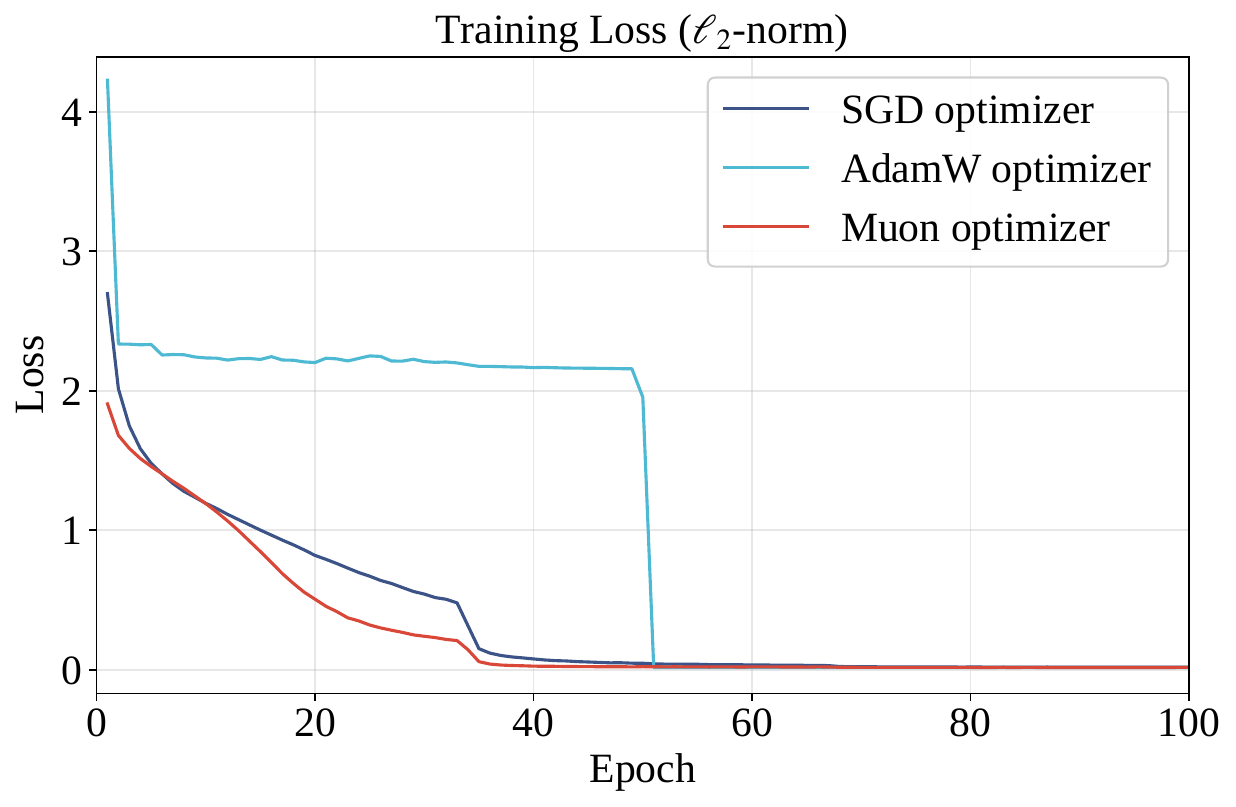}
  \caption{$\ell_2$}
\end{subfigure}
\caption{Training loss on ViT-L~\cite{dosovitskiy2021image}.}
\label{fig:loss_vit_l}
\end{figure*}
\par Under controlled conditions (identical learning rate of 0.01 for all optimizers on PreActResNet-18), Figure~\ref{fig:nuclear_norm_descent} reveals a striking pattern: SGD produces the largest gradient norms, yet Muon achieves the fastest loss descent. This diagnostic suggests that Muon can descend the adversarial objective faster under the same nominal learning rate. However, because different optimizers induce different effective update magnitudes, this experiment should not be interpreted as isolating the update direction alone. Theorem~\ref{thm:adv_loss_refined} explains this: the polar factor $\mathrm{Ortho}(M)$ maximizes the inner product $\langle G, U\rangle$ among all directions $U$ with unit-norm columns, so each Muon step extracts the maximum possible descent per unit of parameter change. In contrast, SGD applies the raw gradient direction, which may not be well-conditioned and waste update magnitude on non-descent directions.
\begin{figure}[!t]
    \centering
    \includegraphics[width=0.6\hsize]{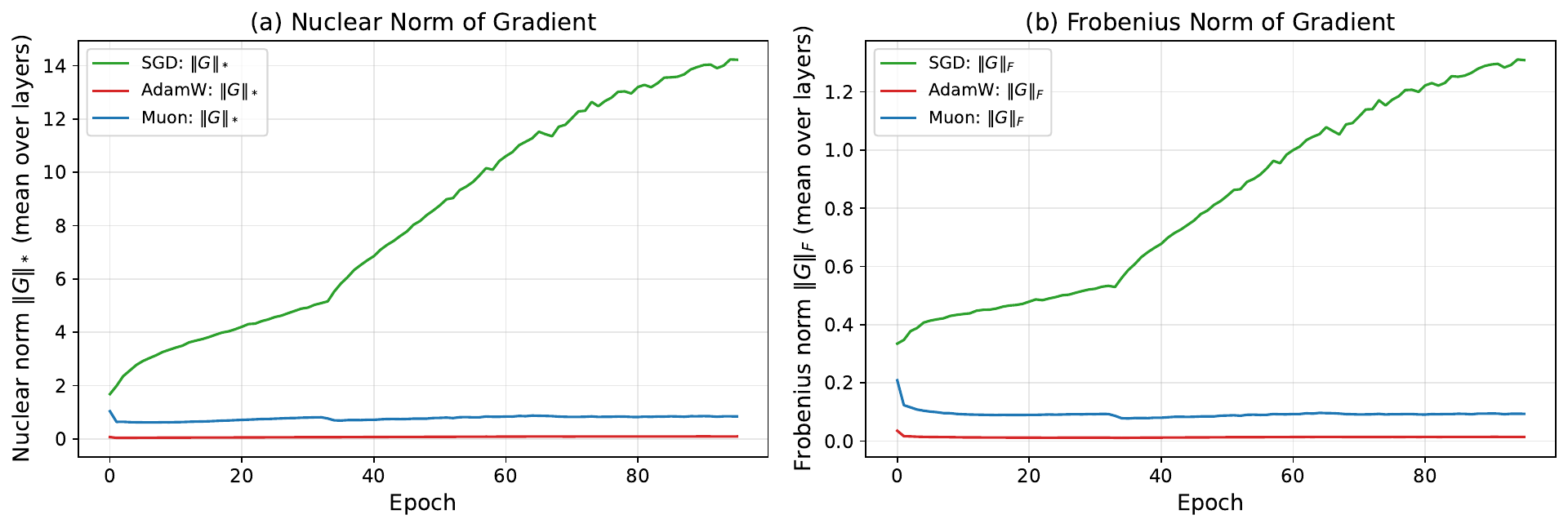}
    \caption{The variation tendency of nuclear gradient descent norms.}
    \label{fig:nuclear_norm_descent}
\end{figure}
\section{Robust Overfitting Diagnostics}
\label{sec:robust_overfitting}
\par The ``robust overfitting'' phenomenon is a concern in adversarial learning. We visualize the clean accuracy, $\ell_\infty$ robust accuracy, and union robust accuracy of PreActResNet-18~\cite{he2016identity}. On CIFAR-10, Figure~\ref{fig:robust_overfitting} illustrates the transition process. As shown in the figures, Muon converges substantially faster than both SGD and AdamW in the early training phase: by epoch 10--15, Muon already achieves about 40--44\% union robust accuracy, while SGD is still around 25--35\% and AdamW around 30--38\%. This rapid early convergence is consistent with our theoretical finding (Theorem~\ref{thm:adv_loss_refined}) that Muon descends the adversarial loss along the gradient nuclear-norm direction, providing stronger per-step progress than vanilla SGD. From a practical standpoint, this means Muon can reach SGD-level robustness in roughly one-third of the training budget, which is a meaningful advantage in compute-constrained settings. In the later phase (epoch 40--100), Muon stabilizes at approximately 40\% ($\ell_\infty$) and 40\% (union), SGD converges to a similar level (about 40\%), while AdamW continues to improve and ultimately reaches about 36\%. The fact that AdamW gradually narrows the gap in the long run is an interesting finding. AdamW's per-parameter adaptive scaling may contribute to this phenomenon. This trade-off between Muon (fast convergence and stable plateau) and AdamW (slow start but higher asymptote) is itself a novel empirical observation about optimizer dynamics under AT. We also note that the oscillation magnitude of SGD (green curve) in the first 35 epochs is substantially larger than Muon's, which supports the paper's claim about Muon providing more stable optimization geometry. This stability, combined with fast early convergence, makes Muon particularly attractive for architectures or settings where training is terminated early or computational budgets are tight.
\begin{figure*}[!t]
\centering
\begin{subfigure}[t]{0.33\textwidth}
  \centering
  \includegraphics[width=\linewidth]{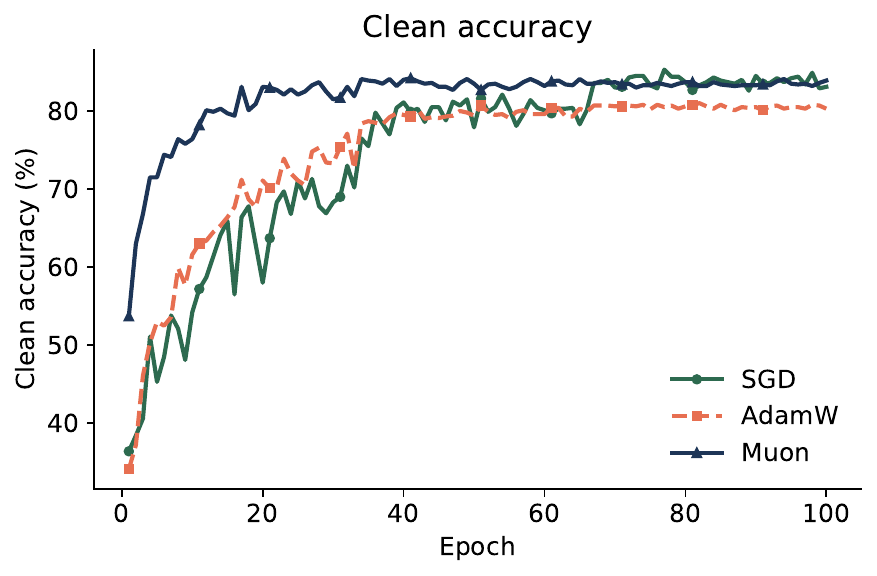}
  \caption{Clean accuracy}
\end{subfigure}\hfill
\begin{subfigure}[t]{0.33\textwidth}
  \centering
  \includegraphics[width=\linewidth]{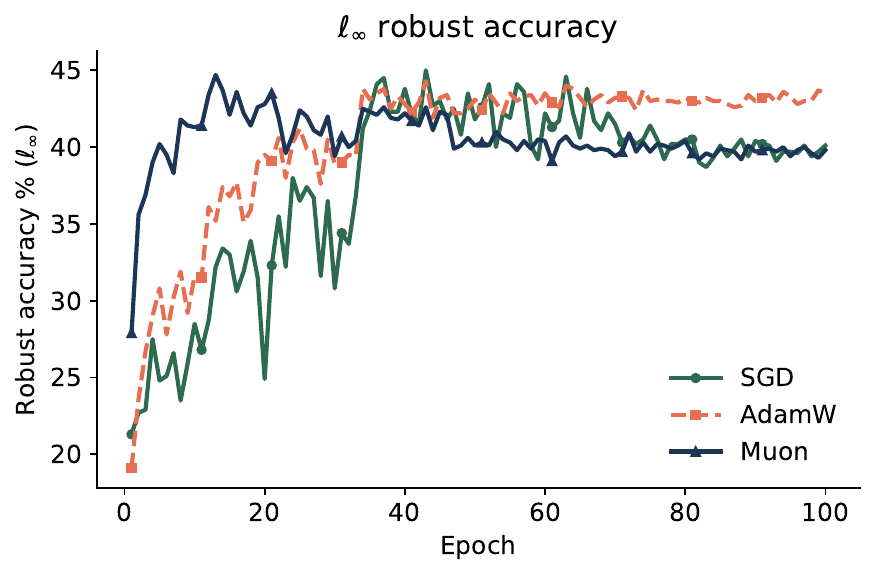}
  \caption{$\ell_\infty$ robust accuracy}
\end{subfigure}\hfill
\begin{subfigure}[t]{0.33\textwidth}
  \centering
  \includegraphics[width=\linewidth]{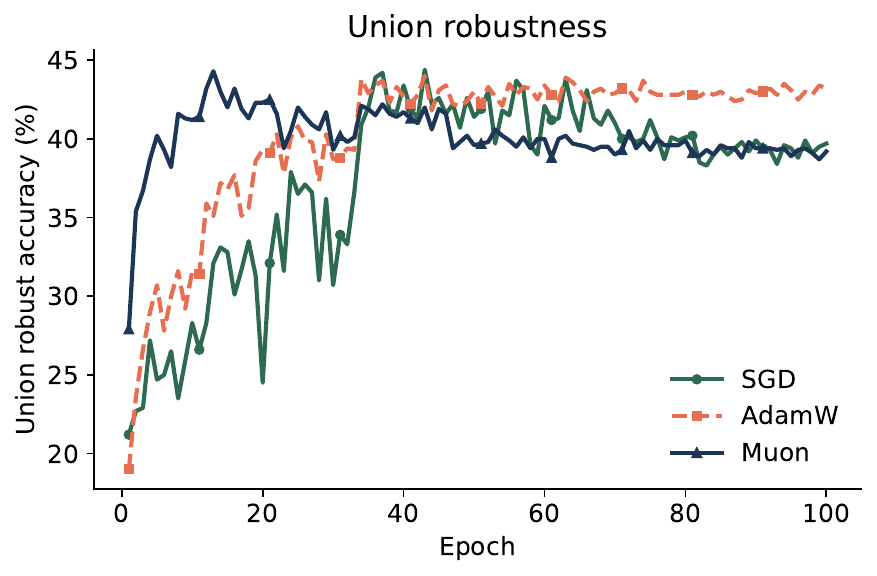}
  \caption{Union robust accuracy}
\end{subfigure}
\caption{Robustness evaluation during AT process.}
\label{fig:robust_overfitting}
\end{figure*}
\par In the main text, we have observed the same ``robust overfitting" phenomenon in several Muon-based AT runs. In particular, the best checkpoint can achieve strong union robustness, whereas the last checkpoint may show a large degradation under the union attack. This indicates that Muon's orthogonalized update geometry improves early and middle-stage robust optimization, but does not by itself eliminate late-stage robust overfitting. This observation refines our interpretation of Muon. The nuclear-norm descent view in Theorem~\ref{thm:adv_loss_refined} explains why Muon can make rapid early progress on the adversarial objective. However, robust overfitting also depends on learning-rate decay, momentum accumulation, finite-sample effects, and the mismatch between the sampled training threat norm and the full union evaluation. Therefore, Muon should be regarded as an optimizer-level mechanism that can accelerate robust optimization, rather than as a standalone robust-overfitting mitigation.
\par For practical use, we recommend robust-validation-based early stopping. Specifically, for multi-norm AT, checkpoints should be selected according to validation union robust accuracy rather than clean accuracy, final-epoch performance, or a single-norm robust metric. This criterion introduces no additional training cost and directly targets the evaluation metric of interest. Other alleviation methods, such as Adversarial Weight Perturbation (AWP)~\cite{wu2020adversarial} and label smoothing~\cite{szegedy2016rethinking}, are complementary to Muon and can be combined with its orthogonalized update geometry in future work.

\section{Additional Empirical Robustness Experiments}
\label{sec:additional_empirical_robustness_experiments}
\subsection{AutoAttack Evaluation}
\par We dive into further studies on empirical robustness to explore additional insights. Although APGD is used as the default evaluation attack for controlled optimizer comparisons, we further conduct stronger evaluations to rule out attack underfitting. Table~\ref{tab:robustness_comparison} compares the defense methods~\cite{croce2022adversarial} on three optimizers with some representative baseline methods, such as AT~\cite{madry2018towards}, FastAT~\cite{wong2020fast} and its variant~\cite{andriushchenko2020understanding}, FreeAT~\cite{shafahi2019adversarial}, and a defense built on Softmax Cross-Entropy Loss~\cite{pang2020rethinking}. The evaluated model is WRN-34-10~\cite{zagoruyko2016wide}. It reports AutoAttack evaluation results under the RobustBench protocol with the threat model $\ell_\infty(\epsilon=8/255)$. The Muon variant achieves 46.71\% robust accuracy under AutoAttack~\cite{croce2020reliable}, compared with 44.11\% for SGD and 42.47\% for AdamW.

\begin{table}[!t]
\centering
\caption{The robustness comparison (RobustBench~\cite{croce2021robustbench}).}
\label{tab:robustness_comparison}
\begin{tabular}{@{}llcc@{}}
\toprule
\textbf{Method} & \textbf{Clean (\%)} & \textbf{AutoAttack (\%)} \\ \midrule
 Defense \cite{croce2022adversarial} + Muon & 83.16 & \textbf{46.71} \\
 Defense \cite{croce2022adversarial} + SGD & 80.02 & 44.11 \\
 Defense \cite{croce2022adversarial} + AdamW & 69.92 & 42.47 \\ \midrule
 AT~\cite{madry2018towards} & 87.14 & 44.04 \\
 FastAT's variant~\cite{andriushchenko2020understanding} & 79.84 & 43.93 \\
 Softmax Cross-Entropy Loss~\cite{pang2020rethinking} & 80.89 & 43.48 \\
 FastAT~\cite{wong2020fast} & 83.34 & 43.21 \\
 FreeAT~\cite{shafahi2019adversarial} & \textbf{86.11} & 41.47 \\ \bottomrule
\end{tabular}
\end{table}
\subsection{Robustness under Stronger White-box Attacks}
\par To rule out attack underfitting and loss-specific overestimation, we evaluate the trained models using stronger white-box attacks, including PGD-100 and CW-100 under both $\ell_\infty$ and $\ell_2$ threat models. These evaluations complement AutoAttack/RobustBench-style testing and provide additional evidence that the observed robustness is not an artifact of the default APGD evaluation. Since Muon affects only the optimization trajectory during training, the deployed classifier remains a standard deterministic differentiable model. Hence, there is no inference-time stochastic transformation over which EOT must average, and no non-differentiable or gradient-obfuscating module for BPDA to approximate~\cite{athalye2018obfuscated,athalye2018synthesizing}. This distinguishes our setting from preprocessing defense~\cite{xu2018feature} or randomization methods~\cite{cohen2019certified}, where adaptive attacks must be tailored to the defense mechanism~\cite{tramer2020adaptive}. We therefore evaluate the resulting deterministic models using strong white-box attacks and AutoAttack protocols~\cite{croce2020reliable}.
\par Table \ref{tab:preactresnet8_white_box_results} checks the stronger white-box attack evaluation results on PreActResNet-18~\cite{he2016identity}. The perturbation budgets are set to $\epsilon_\infty=8/255$ and $\epsilon_2=0.5$ for $\ell_\infty$ and $\ell_2$ attacks, respectively. For each norm, SGD, AdamW, and Muon are compared using best and last checkpoints across PGD-100~\cite{madry2018towards} and C\&W-100~\cite{carlini2017towards} attacks under $\ell_\infty$ and $\ell_2$ settings. Overall, Muon achieves the robust performance in most groups, especially for the last checkpoint, indicating the applicability in the security-sensitive setting.
\begin{table*}[!t]
\centering
\footnotesize
\caption{The white-box attack evaluation results on PreActResNet-18~\cite{he2016identity}.}
\label{tab:preactresnet8_white_box_results}
\setlength{\tabcolsep}{3pt}
\renewcommand{\arraystretch}{1.03}
\begin{tabular}{llcccc@{\hspace{10pt}}cccc}
\toprule
\multirow{2}{*}{Norm} & \multirow{2}{*}{Opt.}
& \multicolumn{4}{c}{Best}
& \multicolumn{4}{c}{Last} \\
\cmidrule(lr){3-6}\cmidrule(l){7-10}
&& PGD ($\ell_\infty$) & C\&W ($\ell_\infty$) & PGD ($\ell_2$) & C\&W ($\ell_2$)
& PGD ($\ell_\infty$) & C\&W ($\ell_\infty$) & PGD ($\ell_2$) & C\&W ($\ell_2$) \\
\midrule

\multirow{3}{*}{$\ell_\infty+\ell_1$}
& SGD   & \textbf{43.44} & \textbf{43.25} & \textbf{66.29} & \textbf{65.32}
        & 37.30 & 38.23 & 64.96 & 64.67 \\
& AdamW & 39.24 & 39.55 & 56.56 & 55.67
        & 24.74 & 24.89 & 57.87 & 56.12 \\
& Muon  & 42.76 & 42.10 & 65.25 & 63.56
        & \textbf{39.97} & \textbf{40.89} & \textbf{68.09} & \textbf{67.78} \\

\midrule

\multirow{3}{*}{$\ell_\infty$}
& SGD   & \textbf{48.07} & \textbf{48.11} & 60.00 & \textbf{59.19}
        & 43.78 & 44.19 & 58.54 & 58.44 \\
& AdamW & 41.76 & 40.95 & 49.55 & 48.53
        & 36.70 & 35.48 & 51.64 & 50.15 \\
& Muon  & 47.03 & 45.66 & \textbf{61.15} & \textbf{59.19}
        & \textbf{45.12} & \textbf{45.78} & \textbf{62.07} & \textbf{61.97} \\

\midrule

\multirow{3}{*}{$\ell_1$}
& SGD   & 19.02 & 19.51 & \textbf{59.44} & \textbf{59.95}
        & 19.11 & 19.62 & 59.53 & 60.01 \\
& AdamW & 24.28 & 26.39 & 55.62 & 55.19
        & \textbf{42.27} & \textbf{40.65} & 56.04 & 55.82 \\
& Muon  & \textbf{25.97} & \textbf{27.03} & 59.39 & 58.32
        & 19.76 & 20.81 & \textbf{63.67} & \textbf{64.23} \\

\midrule

\multirow{3}{*}{$\ell_2$}
& SGD   & 27.15 & 28.94 & \textbf{65.95} & \textbf{66.32}
        & 21.27 & 23.46 & 63.58 & 64.02 \\
& AdamW & 25.99 & 28.48 & 59.71 & 59.97
        & 24.84 & \textbf{27.62} & 59.82 & 60.38 \\
& Muon  & \textbf{29.88} & \textbf{31.71} & 65.23 & 64.75
        & \textbf{25.12} & 25.80 & \textbf{67.06} & \textbf{67.40} \\
\bottomrule
\end{tabular}
\end{table*}
\par Table \ref{tab:wrn_34_10_white_box_results} reports white-box robustness on WRN-34-10~\cite{zagoruyko2016wide} under PGD-100 and C\&W-100 attacks for both $\ell_\infty$ and $\ell_2$ threat models.  Across four training norms, Muon consistently delivers the strongest or near-strongest robustness, particularly at the last checkpoint. The gains are most pronounced under $\ell_1$ and $\ell_2$ training, suggesting that Muon is beneficial for robust generalization.
\begin{table*}[!t]
\centering
\footnotesize
\caption{The white-box attack evaluation results on WRN-34-10~\cite{zagoruyko2016wide}.}
\label{tab:wrn_34_10_white_box_results}
\setlength{\tabcolsep}{3pt}
\renewcommand{\arraystretch}{1.03}
\begin{tabular}{llcccc@{\hspace{10pt}}cccc}
\toprule
\multirow{2}{*}{Norm} & \multirow{2}{*}{Opt.}
& \multicolumn{4}{c}{Best}
& \multicolumn{4}{c}{Last} \\
\cmidrule(lr){3-6}\cmidrule(l){7-10}
&& PGD ($\ell_\infty$) & C\&W ($\ell_\infty$) & PGD ($\ell_2$) & C\&W ($\ell_2$)
& PGD ($\ell_\infty$)& C\&W ($\ell_\infty$) & PGD ($\ell_2$) & C\&W ($\ell_2$) \\
\midrule

\multirow{3}{*}{$\ell_\infty+\ell_1$}
& SGD   & 40.48 & 40.60 & \textbf{64.59} & \textbf{63.73}
        & 33.30 & 33.89 & 61.91 & 61.98 \\
& AdamW & 39.47 & 40.64 & 61.38 & 61.05
        & 37.89 & 39.02 & 62.80 & 62.44 \\
& Muon  & \textbf{41.98} & \textbf{41.14} & 60.19 & 58.93
        & \textbf{39.43} & \textbf{39.69} & \textbf{65.35} & \textbf{65.48} \\

\midrule

\multirow{3}{*}{$\ell_\infty$}
& SGD   & 46.53 & 46.35 & 59.66 & 58.65
        & 38.68 & 39.34 & 54.03 & 54.25 \\
& AdamW & 44.56 & 43.93 & 51.65 & 50.63
        & 44.21 & 43.92 & 50.87 & 49.89 \\
& Muon  & \textbf{49.41} & \textbf{48.80} & \textbf{60.48} & \textbf{59.53}
        & \textbf{44.81} & \textbf{45.05} & \textbf{54.72} & \textbf{55.12} \\

\midrule

\multirow{3}{*}{$\ell_1$}
& SGD   & 22.24 & 21.26 & 38.47 & 37.33
        & 18.85 & 18.89 & 58.27 & 58.55 \\
& AdamW & 25.42 & 26.10 & 52.07 & 51.42
        & \textbf{23.29} & \textbf{26.12} & 56.51 & 56.57 \\
& Muon  & \textbf{27.20} & \textbf{28.89} & \textbf{60.53} & \textbf{59.87}
        & 22.50 & 22.77 & \textbf{63.32} & \textbf{63.48} \\

\midrule

\multirow{3}{*}{$\ell_2$}
& SGD   & 24.10 & 25.48 & 55.21 & 54.72
        & 23.47 & 23.44 & 62.58 & 62.77 \\
& AdamW & 25.54 & 27.19 & 53.40 & 53.03
        & 23.75 & 25.18 & 61.53 & 62.36 \\
& Muon  & \textbf{28.77} & \textbf{28.96} & \textbf{68.33} & \textbf{68.59}
        & \textbf{28.85} & \textbf{29.04} & \textbf{68.76} & \textbf{68.99} \\

\bottomrule
\end{tabular}
\end{table*}
\par Table \ref{tab:wrn_34_20_white_box_results} presents white-box robustness results on WRN-34-20~\cite{zagoruyko2016wide} under PGD and C\&W attacks with $\ell_\infty$ and $\ell_2$ constraints. Across all training norms, Muon achieves the best or comparable robustness performance, with especially strong gains under $\ell_2$ training. Its last-checkpoint results remain consistently competitive, suggesting improved robustness stability compared with SGD and AdamW.
\begin{table*}[t]
\centering
\footnotesize
\caption{The white-box attack evaluation results on WRN-34-20~\cite{zagoruyko2016wide}.}
\label{tab:wrn_34_20_white_box_results}
\setlength{\tabcolsep}{3pt}
\renewcommand{\arraystretch}{1.03}
\begin{tabular}{llcccc@{\hspace{10pt}}cccc}
\toprule
\multirow{2}{*}{Norm} & \multirow{2}{*}{Opt.}
& \multicolumn{4}{c}{Best}
& \multicolumn{4}{c}{Last} \\
\cmidrule(lr){3-6}\cmidrule(l){7-10}
&& PGD ($\ell_\infty$) & C\&W ($\ell_\infty$) & PGD ($\ell_2$) & C\&W ($\ell_2$)
& PGD ($\ell_\infty$) & C\&W ($\ell_\infty$) & PGD ($\ell_2$) & C\&W ($\ell_2$) \\
\midrule

\multirow{3}{*}{$\ell_\infty+\ell_1$}
& SGD   & 40.58 & 39.12 & 62.14 & 60.47
        & 35.23 & 35.53 & 62.25 & 62.42 \\
& AdamW & 38.34 & 37.86 & 59.50 & 58.12
        & 38.34 & 37.86 & 59.49 & 58.12 \\
& Muon  & \textbf{43.03} & \textbf{42.41} & \textbf{63.19} & \textbf{61.66}
        & \textbf{41.78} & \textbf{42.07} & \textbf{62.82} & \textbf{62.96} \\

\midrule

\multirow{3}{*}{$\ell_\infty$}
& SGD   & 45.13 & 43.08 & 57.92 & 55.81
        & 40.13 & 40.47 & 54.90 & 55.36 \\
& AdamW & 44.81 & 45.05 & 50.96 & 50.22
        & 44.17 & 43.17 & 50.60 & 49.25 \\
& Muon  & \textbf{49.71} & \textbf{48.56} & \textbf{60.39} & \textbf{59.07}
        & \textbf{47.71} & \textbf{48.01} & \textbf{57.83} & \textbf{58.08} \\

\midrule

\multirow{3}{*}{$\ell_1$}
& SGD   & 23.91 & 23.80 & 43.44 & 42.51
        & 20.04 & 20.20 & 60.65 & 60.79 \\
& AdamW & 23.79 & \textbf{26.23} & 53.03 & 52.74
        & 21.26 & 22.64 & 57.92 & 59.11 \\
& Muon  & \textbf{24.20} & 24.43 & \textbf{65.88} & \textbf{66.05}
        & \textbf{23.65} & \textbf{23.74} & \textbf{65.36} & \textbf{65.44} \\

\midrule

\multirow{3}{*}{$\ell_2$}
& SGD   & 26.23 & 26.38 & 64.42 & 64.66
        & 25.67 & 25.78 & 65.06 & 65.22 \\
& AdamW & 26.83 & 26.58 & 61.62 & 62.95
        & 27.53 & 26.52 & 61.33 & 62.56 \\
& Muon  & \textbf{30.71} & \textbf{30.88} & \textbf{70.37} & \textbf{70.49}
        & \textbf{30.29} & \textbf{30.50} & \textbf{70.55} & \textbf{70.69} \\

\bottomrule
\end{tabular}
\end{table*}
\subsection{Robustness under Black-box Attacks}
\par Black-box transfer attacks provide a practical measure of robustness beyond model-specific white-box optimization. In realistic deployments, attackers may not know the exact architecture, parameters, or defense pipeline, but can still craft adversarial examples on surrogate models and transfer them to the target. Therefore, evaluating black-box adversarial examples tests whether robustness generalizes across attack sources rather than merely resisting a known gradient path. Strong black-box performance indicates reduced dependence on obfuscated gradients and better security under incomplete information. This evaluation is thus essential for assessing deployable robustness, transfer resistance, and the reliability of adversarial training in real-world threat models and systems. We test the robustness performance under the transfer-based black-box attacks. 
\par We first evaluate a naturally trained PreActResNet-18~\cite{he2016identity} under adversarial attacks. Its robust accuracies under PGD-100 ($\ell_\infty$), CW-100 ($\ell_\infty$), PGD-100 ($\ell_2$), and CW-100 ($\ell_2$) are 2.34\%, 2.85\%, 16.73\%, and 16.22\%, respectively. Then, we transfer the adversarial examples to WRN-34-10~\cite{zagoruyko2016wide}. Table \ref{tab:wrn_34_10_blackbox_transfer} illustrates the robustness evaluation results that the performance of the models trained by Muon is considerable.
\begin{table*}[!t]
\centering
\footnotesize
\caption{The black-box transfer attack results on WRN-34-10~\cite{zagoruyko2016wide}.}
\label{tab:wrn_34_10_blackbox_transfer}
\setlength{\tabcolsep}{3pt}
\renewcommand{\arraystretch}{1.03}
\begin{tabular}{llcccc@{\hspace{10pt}}cccc}
\toprule
\multirow{2}{*}{Norm} & \multirow{2}{*}{Opt.}
& \multicolumn{4}{c}{Best}
& \multicolumn{4}{c}{Last} \\
\cmidrule(lr){3-6}\cmidrule(l){7-10}
&& PGD ($\ell_\infty$) & C\&W ($\ell_\infty$) & PGD ($\ell_2$) & C\&W ($\ell_2$)
& PGD ($\ell_\infty$) & C\&W ($\ell_\infty$) & PGD ($\ell_2$) & C\&W ($\ell_2$) \\
\midrule

\multirow{3}{*}{$\ell_\infty+\ell_1$}
& SGD   & \textbf{74.56} & \textbf{74.32} & \textbf{77.06} & \textbf{76.97}
        & 77.97 & 78.08 & 80.20 & 80.19 \\
& AdamW & 71.76 & 71.58 & 74.17 & 74.05
        & 74.39 & 74.34 & 77.35 & 77.30 \\
& Muon  & 68.68 & 68.45 & 71.80 & 71.72
        & \textbf{82.06} & \textbf{81.96} & \textbf{84.14} & \textbf{84.03} \\

\midrule

\multirow{3}{*}{$\ell_\infty$}
& SGD   & 76.38 & 76.27 & 78.52 & 78.43
        & 79.48 & 79.32 & 81.23 & 81.29 \\
& AdamW & 66.64 & 66.58 & 68.37 & 68.40
        & 68.44 & 68.63 & 70.66 & 70.78 \\
& Muon  & \textbf{79.83} & \textbf{79.66} & \textbf{81.72} & \textbf{81.65}
        & \textbf{83.40} & \textbf{83.44} & \textbf{84.96} & \textbf{85.06} \\

\midrule

\multirow{3}{*}{$\ell_1$}
& SGD   & 41.90 & 41.69 & 45.52 & 45.57
        & 79.50 & 79.22 & 82.61 & 82.46 \\
& AdamW & 59.98 & 59.85 & 63.61 & 63.66
        & 68.60 & 68.50 & 73.46 & 73.60 \\
& Muon  & \textbf{72.50} & \textbf{71.71} & \textbf{76.91} & \textbf{76.90}
        & \textbf{83.30} & \textbf{82.90} & \textbf{86.66} & \textbf{86.60} \\

\midrule

\multirow{3}{*}{$\ell_2$}
& SGD   & 70.98 & 70.94 & 74.47 & 74.29
        & 83.82 & 83.93 & 86.29 & 86.21 \\
& AdamW & 66.00 & 65.94 & 70.75 & 70.88
        & 81.81 & 81.66 & 84.65 & 84.61 \\
& Muon  & \textbf{86.86} & \textbf{86.74} & \textbf{89.07} & \textbf{88.98}
        & \textbf{86.86} & \textbf{86.71} & \textbf{89.19} & \textbf{89.10} \\

\bottomrule
\end{tabular}
\end{table*}
\par Similar settings are implemented on WRN-34-20~\cite{zagoruyko2016wide}. Table \ref{tab:wrn_34_20_blackbox_transfer} reports black-box transfer robustness on WRN-34-20 (PreActResNet-18 as the source model). Muon consistently outperforms SGD and AdamW in nearly all settings, with especially large margins under $\ell_1$ and $\ell_2$ training. Notably, Muon maintains strong last-checkpoint robustness, achieving 85.10/84.96 under $\ell_\infty$ transfer attacks for $\ell_\infty$ training and 90.35/90.32 under $\ell_2$ transfer attacks for $\ell_2$ training. These results indicate that Muon’s gains are not limited to white-box optimization paths, but generalize to transferred adversarial examples, suggesting stronger cross-model robustness and better resistance to real black-box threat models.
\begin{table*}[t]
\centering
\footnotesize
\caption{The black-box transfer attack results on WRN-34-20~\cite{zagoruyko2016wide}.}
\label{tab:wrn_34_20_blackbox_transfer}
\setlength{\tabcolsep}{3pt}
\renewcommand{\arraystretch}{1.03}
\begin{tabular}{llcccc@{\hspace{10pt}}cccc}
\toprule
\multirow{2}{*}{Norm} & \multirow{2}{*}{Opt.}
& \multicolumn{4}{c}{Best}
& \multicolumn{4}{c}{Last} \\
\cmidrule(lr){3-6}\cmidrule(l){7-10}
&& PGD ($\ell_\infty$) & C\&W ($\ell_\infty$) & PGD ($\ell_2$) & C\&W ($\ell_2$)
& PGD ($\ell_\infty$) & C\&W ($\ell_\infty$) & PGD ($\ell_2$) & C\&W ($\ell_2$) \\
\midrule

\multirow{3}{*}{$\ell_\infty+\ell_1$}
& SGD   & 70.67 & 70.51 & 73.52 & 73.61
        & 79.19 & 78.90 & 81.37 & 81.38 \\
& AdamW & 68.34 & 68.23 & 70.95 & 70.98
        & 68.34 & 68.23 & 70.95 & 70.97 \\
& Muon  & \textbf{72.05} & \textbf{71.79} & \textbf{75.00} & \textbf{74.98}
        & \textbf{84.35} & \textbf{84.13} & \textbf{85.73} & \textbf{85.54} \\

\midrule

\multirow{3}{*}{$\ell_\infty$}
& SGD   & 72.19 & 71.94 & 74.39 & 74.30
        & 80.72 & 80.61 & 82.30 & 82.38 \\
& AdamW & 69.47 & 69.45 & 71.50 & 71.47
        & 71.23 & 71.31 & 73.35 & 73.39 \\
& Muon  & \textbf{78.82} & \textbf{78.48} & \textbf{80.66} & \textbf{80.56}
        & \textbf{85.10} & \textbf{84.96} & \textbf{86.48} & \textbf{86.42} \\

\midrule

\multirow{3}{*}{$\ell_1$}
& SGD   & 49.58 & 49.40 & 54.09 & 54.06
        & 80.49 & 80.19 & 83.99 & 83.96 \\
& AdamW & 62.73 & 62.49 & 66.79 & 66.85
        & 76.56 & 76.20 & 80.46 & 80.38 \\
& Muon  & \textbf{84.49} & \textbf{84.15} & \textbf{87.64} & \textbf{87.64}
        & \textbf{84.15} & \textbf{83.76} & \textbf{87.49} & \textbf{87.52} \\

\midrule

\multirow{3}{*}{$\ell_2$}
& SGD   & 84.78 & 84.56 & 87.20 & 87.03
        & 85.13 & 84.70 & 87.29 & 87.36 \\
& AdamW & 81.56 & 81.56 & 84.02 & 83.92
        & 81.11 & 81.03 & 83.82 & 83.67 \\
& Muon  & \textbf{88.19} & \textbf{88.11} & \textbf{90.26} & \textbf{90.21}
        & \textbf{88.30} & \textbf{88.15} & \textbf{90.35} & \textbf{90.32} \\

\bottomrule
\end{tabular}
\end{table*}
\clearpage
\end{document}